\title{Ultra-fast feature learning for the training of two-layer neural networks in the two-timescale regime}
\author{
    Raphaël Barboni\\
    ENS -- PSL Université \\
    \texttt{raphael.barboni@ens.fr}
    \and
    Gabriel Peyré\\
    CNRS and ENS -- PSL Univ.\\
    \texttt{gabriel.peyre@ens.fr}
    \and
    Fran\c{c}ois-Xavier Vialard\\
    LIGM, Univ. Gustave Eiffel, CNRS\\
    \texttt{francois-xavier.vialard@univ-eiffel.fr}
}
\date{}
\newcommand{\fmap}{\phi}
\newcommand{\Fmap}{\Phi}
\newcommand{\TV}{\mathrm{TV}}
\newcommand{\KL}{\mathrm{KL}}
\newcommand{\MMD}{\mathrm{MMD}}
\newcommand{\diam}{\mathrm{diam}}
\newcommand{\activation}{\sigma}
\newcommand{\relu}{\mathrm{ReLU}}
\newcommand{\resnet}{\mathrm{ResNet}}
\newcommand{\m}{\mathrm{m}}
\begin{document}

\maketitle

\begin{abstract}

We study the convergence of gradient methods for the training of \emph{mean-field} single-hidden-layer neural networks with square loss.
For this high-dimensional and non-convex optimization problem, most known convergence results are either qualitative or rely on a \emph{neural tangent kernel} analysis where nonlinear representations of the data are fixed.
Using that this problem belongs to the class of separable nonlinear least squares problems, we consider here a \emph{Variable Projection (VarPro)} or \emph{two-timescale learning} algorithm, thereby eliminating the linear variables and reducing the learning problem to the training of nonlinear features.
In a teacher-student scenario, we show such a strategy enables provable convergence rates for the sampling of a teacher feature distribution.
Precisely, in the limit where the regularization strength vanishes, we show that the dynamic of the feature distribution corresponds to a \emph{weighted ultra-fast diffusion equation}.
Recent results on the asymptotic behavior of such PDEs then give quantitative guarantees for the convergence of the learned feature distribution.

\end{abstract}

\section{Introduction}

Machine learning methods based on \emph{artificial neural networks} have recently experienced a significant increase in popularity due to their efficiency in solving numerous supervised or unsupervised learning tasks.
This success owes to their capacity to perform \emph{feature learning}, that is to extract meaningful representations from the data during the training process~\cite[Chap. 15]{Goodfellow-et-al-2016}, standing in contrast with \emph{kernel methods} for which feature representations are designed by hand and fixed during training~\cite{hofmann2008kernel}.
Feature learning is believed to play a fundamental role in the generalization performance of neural networks.
For example, adaptivity to low-dimensional representations of the data can prevent the \emph{curse of dimensionality}~\cite{bach2017breaking,ghorbani2020neural}.

However, the process through which features are learned remains largely misunderstood.
Indeed, adaptivity of the representations comes in neural networks at the price of a nonlinear parameterization, making the training dynamic more difficult to analyze.
Specifically, for \emph{overparameterized} neural network architectures where the dimension of the parameter space greatly exceeds the number of training samples, recent works have put forward the crucial role played by the choice of scaling w.r.t.\@ the number of parameters in the training dynamic~\cite{chizat2019lazy,liu2020linearity,yang2021tensor}.
For single-hidden-layer neural networks, the ``kernel regime'', corresponding to a scaling of $1/ \sqrt{M}$ where $M$ is the width, has been identified as a scaling for which the model is well-approximated by its linearization around initialization, therefore reducing to a kernel method~\cite{jacot_neural_2021}.
Relying on the good conditioning of the ``Neural Tangent Kernel (NTK)'', this regime provides convergence of gradient descent towards a global minimizer of the risk at a linear rate~\cite{allen2019convergence,du2019gradient,lee2019wide,zou_gradient_2020}.
However, this regime has also been shown to suffer from a ``lazy training'' behavior preventing significant modification of the feature distribution and associated to poor generalization guarantees~\cite{chizat2019lazy}.

In contrast, another line of work has been focused on the ``mean field'' regime corresponding to a scaling of $1/M$ (which we consider in \cref{eq:SHL}) for which the neural network is parameterized by a probability distribution over the space of weights~\cite{chizat2018global,mei2019mean,rotskoff2019global,sirignano2020mean}.
While such a choice of scaling has been shown to enable nonlinear feature learning behaviors~\cite{yang2021tensor}, existing convergence results are primarily qualitative, lacking explicit convergence rates.
To bridge this gap, we are interested in this work in the dynamic of the feature distribution in the training of mean-field models of shallow neural network architectures.
We study more particularly a \emph{variable projection} or \emph{two-timescale} learning strategy which allows reducing the learning problem to the training of the feature distribution.

\subsection{Mean-field neural networks and two-timescale learning}

We consider in this work neural networks with parameter in the \emph{parameter space} $\Om$, which is assumed to be either the $n$-dimensional torus $\TT^n = \RR^n / \ZZ^n$, or a closed, bounded and convex domain of $\RR^n$.
In the following, we are given a map $\fmap : \Om \times \RR^d \to \RR$, called \emph{feature map}, and for an integer $M \geq 1$, we define a \emph{single-hidden-layer (SHL)} neural network of width $M$ with \emph{inner weights} $\lbrace \om_i \rbrace_{1 \leq i \leq M} \in \Om^M$ and \emph{outer weights} $\lbrace u_i \rbrace_{1 \leq i \leq M} \in \RR^M$ as the map:
\begin{align} \label{eq:SHL}
    F_{\lbrace (\om_i, u_i) \rbrace} : x \in \RR^d \mapsto \frac{1}{M} \sum_{i=1}^M u_i \fmap(\om_i, x) \in \RR,
\end{align}
taking inputs in the \emph{input space} $\RR^d$ and returning values in the \emph{output space} $\RR$.
Thanks to the interchangeability of the indices and the normalisation factor $1/M$, the above model can be reparameterized in terms of the empirical distribution of the inner weights $\lbrace \om_i \rbrace_{1 \leq i \leq M}$.
Given an arbitrary probability distribution $\mu \in \Pp(\Om)$ on the space of inner weights and a measurable map $u \in L^1(\mu)$ we define:
\begin{align} \label{eq:SHL_mean_field}
    F_{\mu, u} : x \in \RR^d \mapsto \int_\Om u(\om) \fmap(\om, x) \d \mu(\om) \in \RR.
\end{align}
In particular, for the empirical distribution $\Hat{\mu} = \frac{1}{M} \sum_{i=1}^M \delta_{\om_i}$ and the outer weights $\Hat{u}(\om_i) = u_i$ we recover the finite width SHL $F_{\Hat{\mu}, \Hat{u}} = F_{\lbrace (\om_i, u_i) \rbrace}$.
Such a ``mean-field'' model of neural network has been proposed by several authors to study the training of neural networks at arbitrary large width~\cite{chizat2018global,mei2019mean,rotskoff2019global,sirignano2020mean}.

\paragraph{Supervised learning}

In the context of supervised learning, training a neural network consists in minimizing a \emph{training risk} associated to the evaluation of the model on some \emph{training data}.
We consider in this work a univariate regression setting where the neural network weights are trained for minimizing the mean square error with a \emph{target signal} $Y \in L^2(\rho)$ evaluated on training data with distribution $\rho \in \Pp(\RR^d)$.
For a regularization strength $\lambda > 0$ and $\mu \in \Pp(\Om)$, $u \in L^1(\mu)$ we define the training risk as:
\begin{align} \label{eq:risk}
    \Rr^\lambda(\mu, u) \eqdef \frac{1}{2} \Vert F_{\mu,u} - Y \Vert^2_{L^2(\rho)} + \lambda \Vert u \Vert^2_{L^2(\mu)},
\end{align}
where we assume $\Rr^\lambda(\mu, u) = + \infty$ if $u \notin L^2(\mu)$.
Training the neural network then amounts to finding parameters $(\mu, u) \in \argmin \Rr^\lambda$.

\paragraph{Example of applications}
Note that the mean-field neural network model of~\cref{eq:SHL_mean_field} can be seen as a linear model acting on (signed) measures.
Indeed, for $\mu \in \Pp(\Om)$ and $u \in L^1(\mu)$, we have $F_{\mu, u} = \Fmap \star (u\mu)$ where for every finite Borel measure $\nu \in \Mm(\Om)$ we define:
\begin{align} \label{eq:Fmap_star}
    \Fmap \star \nu \eqdef \int_\Om \fmap(\om, .) \d \nu(\om) \, .
\end{align}
In turn, minimization of functionals of the form in~\cref{eq:risk} with linear models acting on the space of measures have numerous applications depending on the choice of the feature map $\fmap$.
\begin{itemize}
    \item \underline{Two-layer perceptron:} The perceptron model is arguably the prototypical example of a neural network.
    It consists in considering $\Om \subset \RR^{d+1}$ and a feature map $\fmap : (\om, x) \mapsto \activation( \om^\top \Bar{x})$ where $\Bar{x} = (x,1) \in \RR^{d+1}$ and $\activation : \RR \to \RR$ is some nonlinear activation such as the \emph{Rectified Linear Unit (ReLU)} or hyperbolic tangent.
    Owing to their great expressivity~\cite{cybenko1989approximation}, this class of models is ubiquitous in applications where an unknown signal is to be recovered from data observations.
    \item \underline{Radial Basis Function (RBF) neural networks and signal deconvolution:}
    RBF neural networks \cite{pereyra2006variable,karamichailidou2024radial} is an example of a simple architecture in which the feature map consists of a translation invariant kernel $k$ i.e. $\Om \subset \RR^d$ and $\fmap : (\om, x) \mapsto k(\om-x)$.
    The network $F_{\mu, u}$ then implements a convolution with the kernel $k$ and minimization of the risk $\Rr^\lambda$ amounts to solve a form of deconvolution problem.
    This has important applications in signal processing where one wants to recover an unknown signal given noisy or filtered observations~\cite{de2012exact,duval2015exact}.
\end{itemize}

\paragraph{Training with gradient descent and two-timescale learning}

In supervised learning, minimization of the training risk is usually performed using first order optimization methods such as gradient descent or stochastic variants on the neural network's weights~\cite{bottou2018optimization}.

For a SHL of finite width $M \geq 1$ with weights $\{ (\om_i, u_i) \}_{1 \leq i \leq M} \in (\Om \times \RR)^M$ the associated risk is $\Hat{\Rr}^\lambda(\{ (\om_i, u_i) \}_{1 \leq i \leq M}) \eqdef \Rr^\lambda (\Hat{\mu}, \Hat{u})$, where $\Hat{\mu} = \frac{1}{M} \sum_{i=1}^M \delta_{\om_i}$ and $\Hat{u}(\om_i) = u_i$.
For an initialization $\{ (\om_i^0, u_i^0) \}_{1 \leq i \leq M}$, a step-size $\tau > 0$ and a timescale parameter $\eta > 0$, the \emph{gradient descent} dynamic reads:
\begin{align} \label{eq:gradient_descent_risk}
    \forall k \geq 0, \quad \forall i \in \{ 1, ..., M \}, \quad
    \left\{
    \begin{array}{rcl}
        \om_i^{k+1} &  = & \om_i^k - M \tau \nabla_{\om_i} \Hat{\Rr}^\lambda( \lbrace (\om_i^k, u_i^k) \rbrace_{1 \leq i \leq M} ) \\[8pt]
        u_i^{k+1} &  = & u_i^k -\eta M \tau \nabla_{u_i} \Hat{\Rr}^\lambda(\lbrace (\om_i^k, u_i^k) \rbrace_{1 \leq i \leq M} )
    \end{array}
    \right.
\end{align}
For the purpose of theoretical analysis we study here the limit of the gradient descent algorithm when the step-size $\tau$ tends to $0$.
For an initialization $\{ (\om_i(0), u_i(0)) \}_{1 \leq i \leq M}$, this \emph{gradient flow} dynamic reads:
\begin{align} \label{eq:gradient_flow_risk}
    \forall i \in \{ 1, ..., M \}, \quad
    \left\{
    \begin{array}{rcl}
        \frac{\d}{\d t} \om_i(t) &  = & - M \nabla_{\om_i} \Hat{\Rr}^\lambda( \lbrace (\om_i(t), u_i(t)) \rbrace_{1 \leq i \leq M} ) \\[8pt]
        \frac{\d}{\d t} u_i(t) &  = & -\eta M \nabla_{u_i} \Hat{\Rr}^\lambda(\lbrace (\om_i(t), u_i(t)) \rbrace_{1 \leq i \leq M} )
    \end{array}
    \right.
\end{align}
Note the role of the \emph{timescale parameter} $\eta > 0$ controlling the ratio of learning timescales between inner and outer weights.
When $\eta < 1$ the outer-weights $u_i$ are learned more ``slowly'' than the inner-weights $\om_i$ and conversely, when $\eta > 1$ the outer-weights $u_i$ are learned more ``quickly'' than the inner-weights $\om_i$.
In particular, the limiting training dynamics when $\eta \to +\infty$ correspond (formally) to the case where the outer weights are learned ``instantaneously'', that is, at each time $t \geq 0$, we have $\{ u_i(t) \}_{1 \leq i \leq M} \in \argmin_{u \in \RR^M} \Hat{\Rr}^\lambda(\lbrace (\om_i(t), u_i) \rbrace_{1 \leq i \leq M} )$.
Such limiting dynamics correspond to a variable projection algorithm.

\paragraph{Variable Projection}

The \emph{Variable Projection (VarPro)} algorithm performs elimination of the linear variable $u$ and enables here reducing the training of a neural network to the sole problem of learning the feature distribution.
Introduced in~\cite{golub1973differentiation} for the minimization of separable nonlinear least squares problems,
such a strategy has proven to be efficient in various applications~\cite{golub2003separable,osborne2007separable} including the training of neural networks~\cite{sjoberg1997separable,pereyra2006variable,newman2021train,karamichailidou2024radial}.
A reason for this popularity is that partial optimization over one variable can lead to a better conditioning of the Hessian~\cite{sjoberg1997separable,vialard2022partial}.

Exploiting here the linearity w.r.t. the outer weights in the definition of $F$, it is convenient to read a neural network's output $F_{\lbrace(\om_i, u_i) \rbrace}(x) = \frac{1}{M} \sum u_i \fmap(\om_i,x)$ as a linear combination of the \emph{features} $\lbrace \fmap(\om_i,x) \rbrace_{i=1}^M$.
From this point of view, neural networks should be compared to \emph{kernel methods} for which the features are built in advance and fixed during training, whereas only the weights of the linear combination are learned~\cite{hofmann2008kernel}.
In contrast, both inner weights $\lbrace \om_i \rbrace_{i=1}^M$ and outer weights $\lbrace u_i \rbrace_{i=1}^M$ of a neural network are usually trained.
In the following, we refer to the parameters $\om \in \Om$ as the neural network's \emph{features} and to $\mu \in \Pp(\Om)$ as the \emph{feature distribution}.
More generally in the mean-field limit, for $\mu \in \Pp(\Om)$ and $u \in L^1(\mu)$, we have:
\begin{align} \label{eq:feature_operator}
    F_{\mu, u} = \int_\Om \fmap(\om,.) u(\om) \d \mu(\om) =  \Fmap_\mu \cdot u 
\end{align}
where we introduced the \emph{feature operator} $\Fmap_\mu : u \in L^1(\mu) \mapsto \int_\Om u(\om) \fmap(\om, .) \d \mu(\om) \in L^2(\rho)$.
One can thus notice that the problem of minimizing the risk $\Rr^\lambda$ belongs to the class of \emph{separable nonlinear least squares problems} as, by definition, for a fixed inner weights distribution $\mu \in \Pp(\Om)$:
\begin{align*}
     \Rr^\lambda(\mu, u) = \frac{1}{2} \Vert \Fmap_\mu \cdot u - Y \Vert^2_{L^2(\rho)} + \lambda \Vert u \Vert^2_{L^2(\mu)}.
\end{align*}
Thus the problem of minimizing $\Rr^\lambda$ w.r.t. $u$ is a \emph{ridge regression problem} which can be efficiently numerically solved by inverting a linear system.
For $\lambda > 0$, there exists a unique solution $u^\lambda[\mu] \in \argmin_{u \in L^2(\mu)} \Rr^\lambda(\mu, u)$ given by $u^\lambda[\mu] \eqdef (\Fmap_\mu^\top \Fmap_\mu + 2\lambda)^{-1} \Fmap_\mu^\top Y$.
Plugging this in $\Rr^\lambda$ gives rise to a \emph{reduced risk} which we define for any $\mu \in \Pp(\Om)$ by:
\begin{align} \label{eq:L_lambda_min}
    \Ll^\lambda(\mu) \eqdef  \frac{1}{\lambda} \Rr^\lambda(\mu, u^\lambda[\mu]) = \min_{u \in L^2(\mu)} \frac{1}{2\lambda} \Vert \Fmap_\mu \cdot u - Y \Vert^2_{L^2(\rho)} + \Vert u \Vert^2_{L^2(\mu)}.
\end{align}
This definition also extends to the limiting case $\lambda \to 0^+$ by considering:
\begin{align} \label{eq:L0}
    \Ll^0(\mu) \eqdef \min_{\Fmap_\mu \cdot u = Y}  \Vert u \Vert^2_{L^2(\mu)}.
\end{align}
where the infimum is taken to be $+\infty$ whenever the signal $Y$ is not in the range of $\Fmap_\mu$.
In the case where $Y \in \Rg(\Fmap_\mu)$, this minimization problem admits a unique solution $u^\dagger [\mu] = \Fmap_\mu^\dagger \cdot Y$, where $\Fmap_\mu^\dagger$ is the generalized pseudo-inverse of $\Fmap_\mu$ restricted to $L^2(\mu)$, and $\Ll^0(\mu) = \| u^\dagger[\mu] \|^2_{L^2(\mu)}$.

The \emph{VarPro algorithm} consists here in performing \emph{gradient descent} over the reduced risk $\Ll^\lambda$.
For a neural network of finite width $M \geq 1$ with features $\lbrace \om_i \rbrace_{1 \leq i \leq M} \in \Om^M$, the associated reduced risk is $\Hat{\Ll}^\lambda(\lbrace \om_i \rbrace_{1 \leq i \leq M}) \eqdef \Ll^\lambda(\Hat{\mu})$, where $\Hat{\mu}$ is the empirical distribution $\Hat{\mu} = \frac{1}{M} \sum_{i=1}^M \delta_{\om_i}$.
For an initialization $\lbrace \om_i^0 \rbrace_{1 \leq i \leq M} \in \Om^M$ and a step-size $\tau > 0$, the VarPro dynamic reads:
\begin{align*}
    \forall k \geq 0, \forall i \in \lbrace 1, ..., M \rbrace, \quad \om_i^{k+1} = \om_i^k - M \tau \nabla_{\om_i} \hat{\Ll}^\lambda ( \lbrace \om_i^k \rbrace_{1 \leq i \leq M} )\,.
\end{align*}
As before, the \emph{gradient flow} of $\Hat{\Ll}^\lambda_f$ is the continuous counterpart of gradient descent when the step-size $\tau$ tends to $0$.
For an initialization $\lbrace \om_i(0) \rbrace_{1 \leq i \leq M} \in \Om^M$, it is defined for every time $t \geq 0$ as the solution $\lbrace \om_i(t) \rbrace_{1 \leq i \leq M} \in \Om^M$ to the ODE:
\begin{align} \label{eq:gradient_flow}
    \forall i \in \lbrace 1, ..., M \rbrace, \quad \frac{\d}{\d t} \om_i(t) = - M \nabla_{\om_i} \hat{\Ll}^\lambda ( \lbrace \om_i(t) \rbrace_{1 \leq i \leq M} )\,.
\end{align}
Note that the above gradient can be efficiently calculated numerically once optimization on the outer weights $u_i$ has been performed, for example by means of standard automatic differentiation libraries.
Indeed, if $\{ u_i(t) \}_{1 \leq i \leq M} \in \argmin_{u \in \RR^M} \Hat{\Rr}^\lambda(\lbrace (\om_i(t), u_i) \rbrace_{1 \leq i \leq M} ) $, then by the envelope theorem $\nabla_{\om_i} \Hat{\Rr}^\lambda( \lbrace (\om_i(t), u_i(t)) \rbrace_{1 \leq i \leq M} ) = \lambda \nabla_{\om_i} \hat{\Ll}^\lambda ( \lbrace \om_i(t) \rbrace_{1 \leq i \leq M} )$.
For the same reason, the above dynamic can be seen, at least formally, as the limit of the gradient flow dynamic~\cref{eq:gradient_flow_risk} over the (unreduced) risk $\Hat{\Rr}^\lambda$ when the timescale parameter $\eta$ tends to $+\infty$.
Thus, we equivalently refer to~\cref{eq:gradient_flow} as the \emph{VarPro gradient flow} or as the \emph{two-timescale regime of gradient flow}.

\paragraph{Wasserstein gradient flows and ultra-fast diffusions}

Relying on the mathematical framework provided by theory of gradient flows in the space of probability measures~\cite{ambrosio2008gradient,santambrogio2017euclidean}, we show in~\cref{sec:training} that the dynamic of the feature distribution when trained with gradient flow for the minimization of the reduced risk $\Ll^\lambda$ is solution to an advection PDE of the form:
\begin{align*}
    \partial \mu_t - \div (\mu_t \nabla \Ll^\lambda[\mu_t]) = 0
\end{align*}
for some nonlinear velocity field $\nabla \Ll^\lambda_f[\mu_t]$.
We study in~\cref{sec:convergence} the asymptotics of this equation when the training time $t$ tends to $+\infty$ and the regularization strength $\lambda$ tends to $0^+$.
We are more particularly interested in the case where the signal $Y$ itself can be exactly represented by a neural network.
We consider the following assumption:

\begin{assumption}[Teacher student setup] \label{ass:teacher_student}
    Let $\Fmap \star$ be defined by~\cref{eq:Fmap_star}.
    We assume that,
    \begin{enumerate}
        \item there exists a finite measure $\Bar{\nu} \in \Mm(\Om)$ s.t. $Y = \Fmap \star \Bar{\nu}$,
        
        \item the operator $\Fmap \star : \Mm(\Om) \to L^2(\rho)$ is injective.
    \end{enumerate}
    In this case, we refer to $\Bar{\nu} \in \Mm(\Om)$ as the \emph{teacher measure} and to $\Bar{\mu} \eqdef | \Bar{\nu}| / \| \Bar\nu \|_\TV \in \Pp(\Om)$ as the \emph{teacher (feature) distribution}.
\end{assumption}

\noindent
In  such a ``teacher-student'' framework, we are interested in determining to what extent the teacher feature distribution can be learned by the student neural network.
Observe that, under~\cref{ass:teacher_student}, $\Ll^0$ can be simply expressed in terms of the $\chi^2$-divergence between the teacher feature distribution $\Bar{\mu}$ and $\mu$. 
By definition $\chi^2(\Bar{\mu}|\mu) = \int_\Om \left| \frac{\d \Bar{\mu}}{\d \mu} - 1 \right|^2 \d \mu$ and it follows from~\cref{eq:L0_div} that:
\begin{align*}
	\Ll^0(\mu) = \int_\Om \left| \frac{\d \Bar{\nu}}{\d \mu} \right|^2 \d \mu = \| \Bar{\nu} \|^2_\TV \left( \int_\Om \left| \frac{\d \Bar{\mu}}{\d \mu} - 1 \right|^2 \d \mu +1 \right) = \| \Bar{\nu} \|^2_\TV ( \chi^2(\Bar{\mu} | \mu) +1 ) \,.
\end{align*}
The Wasserstein gradient flow of $\Ll^0$ then corresponds to a nonlinear diffusion equation of the form:
\begin{align} \label{eq:nonlinear_diffusion}
    \partial_t \mu = \div \left( \Bar{\mu} \nabla \left( \frac{\mu}{\Bar{\mu}} \right)^m \right)
\end{align}
with $m < 0$ and $\Bar{\mu} \in \Pp(\Om)$, referred to as \emph{ultra-fast diffusion} equation~\cite{iacobelli2019weighted}.
Note that this class of nonlinear diffusion equations stands out from the class of \emph{linear diffusion} and \emph{porous medium} equations (corresponding to the case $m \geq 1$~\cite{vazquez2006smoothing,vazquez2007porous}) by the fact that the exponent $m$ is negative and the diffusivity $\mu^{m-1}$ is singular at $0$.
In~\cite{iacobelli2016asymptotic,caglioti2016quantization,iacobelli2019weighted}, the study of solutions to~\cref{eq:nonlinear_diffusion} is motivated by the convergence analysis of algorithms for the quantization of measures.
In particular, \textcite{iacobelli2019weighted} show the well-posedness of~\cref{eq:nonlinear_diffusion} on the $d$-dimensional torus or on bounded convex domains with Neumann boundary conditions and prove convergence of solutions towards the stationary state $\Bar{\mu}$ in $L^2$.
We prove in~\cref{thm:gradient_flow_approximation} that Wasserstein gradient flows of our reduced risk $\Ll^\lambda$ converge towards solutions of the ultra-fast diffusion equation when the regularization strength $\lambda$ vanishes.

\begin{rem}
    Some remarks about~\cref{ass:teacher_student}:
    \begin{itemize}
        \item At fixed $\lambda > 0$, the teacher-student assumption that $Y =\Fmap \star \Bar{\nu}$ is not restrictive since one can always replace $Y$ by its orthogonal projection on the set $\lbrace \Fmap \star \nu, \, \nu \in \Mm(\Om) \rbrace$, thereby only modifying $\Ll^\lambda$ by subtracting a constant term. However, this assumption becomes crucial in the limit $\lambda \to 0^+$ to ensure the feasibility of the optimization problem in~\cref{eq:L0_f}.
        
        \item The injectivity assumption on $\Fmap \star$ ensures uniqueness of the reference measure $\Bar{\nu}$. In the limit where $\lambda \to 0^+$, this allows rewriting $\Ll^0$ only in terms of a divergence between $\Bar{\nu}$ and $\mu$ (\cref{eq:L0_div}).
        In the case $\lambda > 0$, $\Ll^\lambda$ is an infimal convolution between this divergence and a \emph{kernel discrepancy} (\cref{eq:L_mmd}) and the injectivity assumption ensures this discrepancy is a distance on the space of measures (\cref{lem:rkhs_universality}).
        It will be useful in~\cref{sec:convergence} to prove convergence of Wasserstein gradient flows of $\Ll^\lambda$ to solutions of the ultra-fast diffusion equation.
        In the case of a two-layer perceptron, the feature map is of the form $\fmap((w,b), x) = \activation (w^\top x + b)$ and the injectivity assumption is satisfied as soon as $\activation$ is not a polynomial and the data distribution has full support on $\RR^d$ (\cite[Thm. III.4]{sun2019random}).
        
    \end{itemize}
\end{rem}

\subsection{Contributions and related works}

\paragraph{Contributions}

This paper studies the convergence of the VarPro algorithm --- or two-timescale regime of gradient descent --- for the training of mean-field models of neural networks.
Precisely, we study the dynamic of the feature distribution $\mu \in \Pp(\Om)$ when trained with gradient flow for the minimization of the reduced risk $\Ll^\lambda$, for $\lambda \geq 0$.
In the teacher-student scenario defined by~\cref{ass:teacher_student}, we establish guarantees for the convergence of $\mu$ towards the teacher feature distribution~$\Bar{\mu}$:
\begin{itemize}
    \item In the case $\lambda = 0$, we show in~\cref{sec:training} that the training dynamic corresponds to an \emph{ultra-fast diffusion} equation.
    Relying on the work of~\cite{iacobelli2019weighted}, this allows stating convergence towards the teacher feature distribution $\Bar{\mu}$ (\cref{thm:diffusion_convergence}), with a linear convergence rate.
    \item At fixed $\lambda > 0$, we establish in~\cref{thm:algebraic_convergence} convergence of $\mu$ towards the teacher feature distribution $\Bar{\mu}$ with an algebraic rate.
    \item In the limit $\lambda \to 0^+$, we show that, under regularity assumptions, the dynamic of the feature distribution $\mu$ converges locally uniformly in time to the solution of the \emph{ultra-fast diffusion} equation with weights $\Bar{\mu}$ (\cref{thm:gradient_flow_approximation}).
    \item Finally, we show in~\cref{sec:numerics} that numerical results on low-dimensional learning problems with synthetic data are well-aligned with our theory.
    Overall, these experiments indicate that, when the regularization is sufficiently low, the VarPro dynamic indeed enters an ``ultra-fast diffusion regime'' where the student feature distribution converges to the teacher's at a linear rate.
    We also show with experiments on CIFAR10 that the VarPro algorithm can be adapted to the
    training of more complex architectures such as ResNets and achieves generalization on supervised learning problems with large datasets.
\end{itemize}

\paragraph{Convergence analysis for the training mean-field neural networks}

Several works have studied the convergence of gradient based methods for the training of neural network models similar to~\cref{eq:SHL} with the mean-field scaling $\frac{1}{M}$.
In \cite{bach2021gradient}, the authors show that, for two layer neural networks with a homogeneous activation, if gradient flow on the weights distribution converges then it converges towards a global minimizer of the risk.
In~\cite{rotskoff2019global}, a similar result is shown for a modification of the gradient flow dynamic where a supplementary ``birth-death'' term is added.

Several works have also analyzed the convergence of noisy gradient descent, or \emph{Langevin dynamic}, for the training of mean-field models of two layer neural networks~\cite{chizat2022mean,mei2019mean,nitanda2022convex,hu2021mean,suzuki2023feature}.
Thanks to the addition of an entropic regularization term, these works provide a convergence rate for the sampling of an invariant weight distribution.

\paragraph{Two-timescale learning}

While two-timescale learning strategies have a broad range of applications in the fields of stochastic approximation and optimization~\cite{borkar1997stochastic,borkar2008stochastic}, there has been a recent interest in these methods for the training of neural networks~\cite{marion2023leveraging,berthier2024learning,wang2024mean,bietti2023learning,takakura2024mean}.
Specifically, \cite{berthier2024learning} studies the training of two-layer neural networks and exhibits a separation of timescales and different learning phases whose respective sizes depend on the timescale parameter $\eta$.
In~\cite{marion2023leveraging}, the authors study two-timescale gradient descent for a simple model of $1$-dimensional neural network and show that the teacher network is recovered as soon as both the number of neurons of the student and the timescale parameter are sufficiently large.
In~\cite{bietti2023learning}, a multi-index regression problem is considered.
Relying on the assumption of high dimensional Gaussian data, the authors consider a linear layer composed with a  nonparameteric model whose projection can be computed in the Hermite basis.
They show this VarPro algorithm results in a saddle-to-saddle dynamic on the linear layer and establish guarantees for the recovery of the teacher model.

Finally, \cite{takakura2024mean,wang2024mean} study the training of mean-field models of neural networks in the two-timescale limit with noisy gradient descent.
In contrast with these works, we do not consider here additional entropic or $L^2$-regularization on the feature weights.

\paragraph{Wasserstein gradient flows of statistical distances}

Under our~\cref{ass:teacher_student}, \cref{eq:L_mmd} shows $\Ll^\lambda$ is an infimal convolution of statistical divergences between the feature distribution $\mu$ and the teacher $\Bar{\nu}$, interpolating between the $\chi^2$-divergence $\chi^2(\Bar{\nu}|\mu)$ --- or more generally a $f$-divergence $\D_f(\Bar{\nu}|\mu)$ --- when $\lambda \to 0^+$ and a (squared) kernel discrepancy $\MMD(\Bar{\nu}, \mu)^2$ when $\lambda \to \infty$.
In the case $\lambda \to \infty$, gradient flows of $\MMD$-discrepancies and applications to sampling were studied in several works~\cite{arbel2019maximum,sejdinovic2013equivalence,hertrich2023wasserstein,hertrich2023generative,hertrich2024wasserstein,boufadene2023global}.
Those flows are known to get trapped in local minima but discrepancies associated to non-smooth kernels have been observed to behave better in terms of convergence~\cite{hertrich2023generative,hertrich2024wasserstein}.
In the case of the coulomb kernel, \cite{boufadene2023global} proves that the discrepancy loss admits no spurious local minima and that the discrepancy flow converges towards the target measure under regularity assumptions. 

In the intermediate regime $\lambda \in (0, \infty)$, several other works have also proposed regularization of $f$-divergences based on the infimal convolution with a kernel distance.
In~\cite{glaser2021kale}, the \emph{KL Approximate Lower bound Estimator (KALE)} kernelizes the variational formulation of the $\KL$-divergence and in \cite{chen2024regularized} the \emph{(De)-regularized Maximum Mean Discrepancy (DrMMD)} kernelizes the $\chi^2$-distance.
More generally, the work of~\textcite{neumayer2024wasserstein} studied kernelized variational formulations --- or ``Moreau envelopes in a RKHS'' --- of $f$-divergences.
Similar to our~\cref{lem:gamma}, they showed $\Gamma$-convergence of these functionals towards the generating $f$-divergence when the regularization parameter $\lambda$ tends to $0$.
They also studied numerically the convergence of the associated Wasserstein gradient flow towards the target distribution.
The most notable difference between these regularized distances and the functional $\Ll^\lambda$ appearing in this work is that (w.r.t.\@ \cite[eq. (14)]{neumayer2024wasserstein}) the role of the target $\Bar{\nu}$ and parameter $\mu$, over which optimization is performed, are interchanged.
In other words, we consider optimizing over a statistical discrepancy which is the ``reverse'' of the one considered in~\cite{neumayer2024wasserstein} and for this reason, though the mathematical tools to analyze it might be similar, the gradient flow dynamics will a priori have different behaviors.

\subsection{Mathematical preliminaries and notations}

In the following, $\Omega$ will either be the $n$-dimensional torus or a closed bounded convex domain of $\RR^n$, for some $n \geq 1$.
We denote by $\Mm(\Om)$ the set of finite Borel measures over $\Om$ and by $\Pp(\Om)$ the subset of $\Mm(\Om)$ consisting of probability measures.
We will denote by $\pi \in \Pp(\Om)$ the uniform distribution over $\Om$.
For a measure $\nu \in \Mm(\Om)$, $|\nu|$ is its total variation measure and $\| \nu \|_\TV$ is the total variation of $\nu$.
For $p \in [1, +\infty)$, we denote by $\Ww_p$ the Wasserstein-$p$ distance defined for two probability measures $\mu, \mu' \in \Pp(\Om)$ by:
\begin{align*}
    \Ww_p(\mu, \mu') \eqdef \min_{\gamma \in \Gamma(\mu, \mu')} \left( \int_{\Om \times \Om} \| \om-\om' \|^2 \d \gamma(\om, \om') \right)^{1/p} \, ,
\end{align*}
where $\Gamma(\mu, \mu') \subset \Pp(\Om \times \Om)$ is the set of couplings between $\mu$ and $\mu'$.
Standard references on the properties of the Wasserstein distance are the textbooks of~\textcite{villani2009optimal} and~\textcite{santambrogio2015optimal}.
If not otherwise specified, $\Mm(\Om)$ and $\Pp(\Om)$ are endowed with the topology of \emph{narrow} convergence, that is the weak-* topology of $\Mm(\Om)$ in duality with continuous functions.
Importantly, because $\Om$ is compact, this topology on $\Pp(\Om)$ is equivalent to the $\Ww_p$-topology for any $p \in [1,+\infty)$ and $\Pp(\Om)$ is compact.

For an integer $k \geq 0$ and for $s \in (0,1]$, we denote by $\Cc^{k,s}(\Om)$ (or just $\Cc^{k,s}$) the Hölder space of $k$-times continuously differentiable real-valued functions over $\Om$ with $s$-Hölder $k^{\text{th}}$-derivative.
We denote by $\| . \|_{\Cc^{k,s}}$ the Hölder norm on $\Cc^{k,s}(\Om)$.
For a probability measure $\rho \in \Pp(\RR^d)$ and $p \in [1, +\infty]$, we denote by $L^p(\rho, \Cc^{k,s})$ the space of measurable functions $\fmap : \Om \times \RR^d \to \RR$ s.t.\@ $\fmap(.,x) \in \Cc^{k,s}(\Om)$ for $\d \rho$-a.e.\@ $x \in \Om$ and $\| \fmap \|_{L^p(\rho, \Cc^{k,s})} \eqdef \left( \int_{\RR^d} \| \fmap(., x ) \|^p_{\Cc^{k,s}} \d \rho(x) \right)^{1/p} < + \infty$. 
We will often use that if $\fmap \in L^2(\rho, \Cc^{k, s})$ and $\alpha \in L^2(\rho)$ then the Bochner integral $\int_{\RR^d} \fmap(., x) \alpha(x) \d \rho(x)$ is in $\Cc^{k,s}$ with:
\begin{align*}
    \left\| \int_{\RR^d} \fmap(., x) \alpha(x) \d \rho(x) \right\|_{\Cc^{k,s}} \leq \| \fmap \|_{L^2(\rho, \Cc^{k,s})} \| \alpha \|_{L^2(\rho)}. 
\end{align*}

\section{Reduced risk associated to the VarPro algorithm}

We study in this work a VarPro algorithm or two-timescale regime of gradient descent for the training of neural networks.
This strategy amounts to performing gradient descent on the \emph{reduced risk} defined as the result of a partial minimization on a regularized version of the risk.

\subsection{Primal formulation of the reduced risk}

Whereas regularizing the risk with the Euclidean square norm of the weights is a popular practice, the variable projection procedure can be used with other kinds of regularization.
Generally, for a convex function $f : \RR \to \RR$ and a regularization strength $\lambda > 0$ we consider for $\mu \in \Pp(\Om)$ and $u \in L^1(\mu)$:
\begin{align} \label{eq:R_lambda_f}
    \Rr^\lambda_f(\mu, u)
    \eqdef \frac{1}{2} \left\| F_{\mu,u} - Y \right\|^2_{L^2(\rho)} + \lambda \int_\Om f(u) \d \mu
    = \frac{1}{2} \left\| \Fmap_\mu \cdot u - Y \right\|^2_{L^2(\rho)} + \lambda \int_\Om f(u) \d \mu,
\end{align}
where we assume $\Rr^\lambda_f(\mu, u) = +\infty$ if $f(u)$ is not integrable w.r.t.\@ $\mu$.
As before we consider the \emph{reduced risk} obtained by minimizing $\Rr^\lambda_f$ w.r.t.\@ the outer weights $u$.
For every $\mu \in \Pp(\Om)$ we define:
\begin{align} \label{eq:L_lambda_f}
    \Ll^\lambda_f(\mu) \eqdef \min_{u \in L^1(\mu)} \frac{1}{\lambda} \Rr^\lambda_f(\mu,u) = \min_{u \in L^1(\mu)} \frac{1}{2 \lambda} \left\| \Fmap_\mu \cdot u - Y \right\|^2_{L^2(\rho)} + \int_\Om f(u) \d \mu
\end{align}
and this definition extends to the limiting case $\lambda \to 0^+$ by considering:
\begin{align} \label{eq:L0_f}
    \Ll^0_f(\mu) \eqdef \min_{\Fmap_\mu \cdot u = Y} \int_\Om f(u) \d \mu.
\end{align}

In the following, we always assume that $\fmap \in L^2(\rho, \Cc^0(\Om))$ (\cref{ass:feature_map}).
This in particular implies that, for any $\mu \in \Pp(\Om)$, the map $\Fmap_\mu : L^1(\mu) \to L^2(\rho)$ is weakly continuous.
We also consider the following assumption on the regularization function:
\begin{assumption} \label{ass:regularization}
    The function $f : \RR \to \RR \cup {+\infty}$ is nonnegative, strictly convex and superlinear i.e. such that $\lim_{\pm \infty} \frac{f(t)}{|t|} = + \infty$.
\end{assumption}

\noindent
By~\cref{lem:integral_functional}, this is sufficient to ensure the existence of a unique minimizer $u^\lambda_f[\mu] \in \argmin \Rr^\lambda_f(\mu, u)$, when $\lambda >0$, and $u^0_f[\mu] \in \argmin_{\Fmap_\mu \cdot u = Y} \int_\Om f(u) \d \mu$, when $\lambda = 0$.
Of particular interest in this work and more precisely in~\cref{sec:training} is the case where $f(t) = |t|^r /(r-1)$ for some $r > 1$.
In this case we denote the corresponding reduced risk by $\Ll^\lambda_r$.
In particular, for $r = 2$ we recover the ``$L^2$-regularized'' reduced risk defined in~\cref{eq:L_lambda_min} and~\cref{eq:L0}.

\begin{lem} \label{lem:integral_functional}
    Assume~\cref{ass:regularization} holds. Then, for every  $\mu \in \Pp(\Om)$, the functional
    $$\Ii_f : u \in L^1(\mu) \mapsto \int_\Om f(u) \d \mu$$ 
    is strictly convex, weakly lower semicontinuous and has weakly compact sublevel sets. In particular, \cref{eq:L_lambda_f} (and \cref{eq:L0_f} if feasible) admits a unique minimizer $u^\lambda_f[\mu]$.
\end{lem}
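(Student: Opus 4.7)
The three claimed properties of $\Ii_f$ are standard facts about integral functionals with nonnegative convex superlinear integrands, which I would establish in order before combining them with the direct method to obtain existence and uniqueness of the minimizers.

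\emph{Strict convexity} is immediate: for $u_0, u_1 \in L^1(\mu)$ with $u_0 \neq u_1$ and $\theta \in (0,1)$, strict pointwise convexity of $f$ gives a strict inequality on the set $\{u_0 \neq u_1\}$, which has positive $\mu$-measure; integrating yields strict convexity of $\Ii_f$. For \emph{weak lower-semicontinuity}, convexity of $f$ with values in $\RR \cup \{+\infty\}$ together with the standing assumption that $f$ is proper makes it lower-semicontinuous on $\RR$. I would then invoke the classical fact that an integral functional with a nonnegative, convex, lower-semicontinuous integrand is sequentially weakly lower-semicontinuous on $L^1(\mu)$ (a particular case of Ioffe's theorem, or derived directly by a Mazur argument: extract strongly converging convex combinations of any weakly convergent sequence, then apply Fatou's lemma).

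For \emph{weak compactness of sublevel sets}, superlinearity $f(t)/|t| \to \infty$ is precisely the de la Vallée Poussin criterion: if $\Ii_f(u) \leq C$, the family is uniformly integrable w.r.t. $\mu$. Superlinearity also yields an $L^1$ bound: picking $R$ large enough that $f(t) \geq |t|$ for $|t| \geq R$ gives $\|u\|_{L^1(\mu)} \leq R + \Ii_f(u)$. The Dunford--Pettis theorem then ensures relative weak compactness of sublevel sets in $L^1(\mu)$, and weak lower-semicontinuity makes them weakly closed, hence weakly compact.

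\emph{Existence and uniqueness of the minimizers.} Under~\cref{ass:feature_map} the linear map $\Fmap_\mu : L^1(\mu) \to L^2(\rho)$ is weakly continuous, so $u \mapsto \tfrac{1}{2}\| \Fmap_\mu \cdot u - Y \|^2_{L^2(\rho)}$ is convex and weakly lower-semicontinuous. For $\lambda>0$, the sublevel sets of $\Rr^\lambda_f(\mu, \cdot)$ are contained in those of $\lambda \Ii_f$, hence weakly compact, and the functional is weakly lower-semicontinuous and strictly convex, so the direct method yields a unique minimizer $u^\lambda_f[\mu]$. For $\lambda = 0$, the affine constraint set $\{u \in L^1(\mu) : \Fmap_\mu \cdot u = Y\}$ is weakly closed as the preimage of $\{Y\}$ by a weakly continuous linear map, and if nonempty its intersection with any sublevel set of $\Ii_f$ is weakly compact, so the strictly convex, weakly lower-semicontinuous $\Ii_f$ attains a unique minimum on it, yielding $u^0_f[\mu]$.

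The only mild subtlety is correctly combining the de la Vallée Poussin / Dunford--Pettis criterion with weak lower-semicontinuity of convex integrals on $L^1$; once those ingredients are in place, the rest is the routine direct method in the calculus of variations.
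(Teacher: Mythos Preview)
Your proposal is correct and follows essentially the same route as the paper: strict convexity is immediate, weak lower-semicontinuity comes from convexity plus strong lower-semicontinuity (the paper phrases this via the epigraph being convex and strongly closed hence weakly closed, which is the same Mazur-type argument you sketch), and weak compactness of sublevel sets is obtained from superlinearity via uniform integrability and Dunford--Pettis. Your write-up is somewhat more detailed than the paper's, in particular you spell out the direct-method argument for existence and uniqueness of $u^\lambda_f[\mu]$ and $u^0_f[\mu]$, which the paper leaves implicit.
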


\begin{proof}
    Clearly $\Ii_f$ is strictly convex.
    Weak lower semicontinuity is a classical consequence of the fact that $\Ii_f$ is convex and strongly lower semicontinuous (using Fatou's lemma), hence its epigraph is convex and strongly closed and hence also weakly closed.
    For weak compacity of sublevel sets, if $(u_n)_{n\geq 0}$ is a sequence s.t.\@ $\int_\Om f(u_n) \d \mu \leq C$ for every $n\geq 0$, then using that $f$ has super-linear growth, for every $\eps > 0$ there exists a $T \geq 0$ s.t.\@ $|t| \leq \eps f(t)$ for every $ |t| \geq T$ and for every $n \geq 0$:
    \begin{align*}
        \int_{|u_n| \geq T} |u_n| \d \mu \leq \eps \int f(u_n) \d \mu \leq \eps C.
    \end{align*}
    Thus the sequence $(u_n)_{n \geq 0}$ is uniformly integrable and admits a weakly converging subsequence by Dunford-Pettis theorem.
\end{proof}

\subsection{Partial minimization on the space of measures}

The reduced risk can also be obtained as the result of partial minimization of a convex functional over the space of measures.
Whereas we have previously separated the role of the outer weights $u$ and of the feature distribution $\mu$ in~\cref{eq:SHL_mean_field}, our neural network model can equivalently be seen as a linear operator acting on the space $\Mm(\Om)$ of finite measures on $\Om$.

For $\mu \in \Pp(\Om)$ and $u \in L^1(\mu)$, we have by definition of $\Fmap \star$ in~\cref{eq:Fmap_star} and of $\Fmap_\mu$ in~\cref{eq:feature_operator} that $\fmap_\mu \cdot u = \Fmap \star \nu$ where $\nu \in \Mm(\Om)$ is s.t.\@ $\d \nu = u \d \mu$.
Also, $\int_\Om f(u) \d \mu = \int_\Om f(\frac{\d \nu}{\d \mu}) \d \mu = \D_f(\nu | \mu)$  where, for $f$ satisfying~\cref{ass:regularization}, $\D_f$ is the divergence defined by:
\begin{align} \label{eq:divergence}
    \forall (\nu, \mu) \in \Mm(\Om) \times \Pp(\Om), \quad 
    \D_f(\nu | \mu) \eqdef
    \left\{
    \begin{array}{cc}
        \int_\Om f(\frac{\d \nu}{\d \mu}) \d \mu & \text{if $\nu \ll \mu$},  \\
        +\infty & \text{otherwise}. 
    \end{array}
    \right.
\end{align}
In particular, in the case where $f$ is an \emph{entropy function} and $\nu \in \Pp(\Om)$ is a probability measure, $\D_f(\nu|\mu)$ is the standard Csiszàr $f$-divergence~\cite{liero2018optimal}.
Performing a change of variable, one can thus define the functional $\Ll^\lambda_f$ as the value resulting from a minimization problem over the space of measures.
For $\mu \in \Pp(\Om)$, minimizing over $\nu \in \Mm(\Om)$ instead of $u \in L^1(\mu)$, we get:
\begin{align} \label{eq:L_measure}
    \Ll_f^\lambda(\mu) =
    \left\{
    \begin{array}{cc}
        \min\limits_{\nu \in \Mm(\Om)} \frac{1}{2 \lambda} \| \Fmap \star \nu - Y \|^2 + \D_f(\nu | \mu) & \text{if $\lambda > 0$,} \\[10pt]
        \min\limits_{\nu \in \Mm(\Om)} \iota_{\Fmap \star \nu = Y} + \D_f(\nu | \mu) & \text{if $\lambda = 0$.}
    \end{array}
    \right.
\end{align}

As presented in~\cref{ass:teacher_student}, of particular interest  is the case where the signal $Y$ itself can be exactly represented by a neural network, that is $Y = \Fmap \star \Bar{\nu}$, for some $\Bar{\nu} \in \Mm(\Om)$.
Then in the case $\lambda = 0$, using the injectivity of $\Fmap \star$, $\Bar{\nu}$ is the only feasible solution in~\cref{eq:L_measure} and we obtain:
\begin{align} \label{eq:L0_div}
	\Ll^0_f(\mu) = \int_\Om f(\frac{\d \Bar{\nu}}{\d \mu}) \d \mu =  \D_f(\Bar{\nu} | \mu) \,.
\end{align}
In the case $\lambda > 0$, $\Ll^\lambda_f$ can be interpreted as the infimal convolution between a \emph{Maximum Mean Discrepancy (MMD)} and the divergence $\D_f$.
Indeed, naturally associated to the data distribution $\rho \in \Pp(\RR^d)$ and to the feature map $\fmap$ is a structure of Reproducing Kernel Hilbert Space (RKHS) of functions on $\Om$.
We refer to~\cref{sec:rkhs} for results on the theory of RKHSs we use in this chapter.
The RKHS $\Hh$ is defined in~\cref{eq:RKHS_characterization}, and corresponds to the kernel $\kappa :  \Om \times \Om \to \RR$  defined by:
\begin{align*}
    \forall \om, \om' \in \Om, \quad \kappa (\om, \om') \eqdef \int_\Om \fmap(\om, x) \fmap (\om', x) \d \rho(x) .
\end{align*}
It then follows from the definition of $\kappa$ and $\Hh$ that, under~\cref{ass:teacher_student}, the data attachment term in~\cref{eq:L_measure} can be interpreted as a kernel distance between $\nu$ and $\Bar{\nu}$.
By~\cref{eq:MMD} we have $\| \Fmap \star (\nu - \Bar{\nu}) \|_{L^2(\rho)} = \MMD_\kappa (\nu, \Bar{\nu})$ where $\MMD_\kappa$ is the \emph{Maximum Mean Discrepancy (MMD)} with kernel $\kappa$~\cite{muandet2017kernel,gretton2012kernel}.
For $\lambda > 0$, the functional $\Ll^\lambda_f$ can then be expressed for every $\mu \in \Pp(\Om)$ as:
\begin{align} \label{eq:L_mmd}
    \Ll^\lambda_f(\mu) = \min_{\nu \in \Mm(\Om)} \frac{1}{2 \lambda} \MMD_\kappa^2(\nu, \Bar{\nu}) + \D_f(\nu | \mu).
\end{align}
This last formulation of the functional $\Ll^\lambda_f$ resembles the notion of \emph{Moreau envelope in a RKHS} of the divergence $\D_f$ introduced by~\textcite{neumayer2024wasserstein}.
This notion encompasses the particular cases of \emph{De-regularized MMD} studied in~\cite{chen2024regularized} and \emph{KL Approximate Lower bound Estimator} studied in~\cite{glaser2021kale}.
Nonetheless, w.r.t.\@ \cite[eq. (14)]{neumayer2024wasserstein}, the role of the target measure $\Bar{\nu}$ and of the optimized measure $\mu$ are here interchanged, which is expected to play an important role in the gradient flow dynamic.

\subsection{Dual formulation of the reduced risk}

In~\cref{eq:L_lambda_f,eq:L0_f}, the objectives $\Ll^\lambda_f$ and $\Ll_f^0$ are expressed as the value of a minimization problem over the outer weights $u$. Taking the dual of those minimization problems, $\Ll_f^\lambda$ and $\Ll^0_f$ can be expressed as the value of a maximization problem over the dual variable $\alpha \in L^2(\rho)$.
In contrast with the primal formulation \cref{eq:L_lambda_f}, the dual formulation of~\cref{prop:duality} has the advantage of conveniently expressing $\mathcal{L}^\lambda_f$ for both $\lambda > 0$ and $\lambda = 0$ as the value of an optimization problem over the space $L^2(\rho)$ which is independent of $\mu$.

\begin{prop}[Dual representation] \label{prop:duality}
    Let~\cref{ass:regularization} hold and consider $\mu \in \Pp(\Om)$.
    Then we have for $\lambda > 0$:
    \begin{align} \label{eq:L_dual}
        \Ll^\lambda_f(\mu) = \max_{\alpha \in L^2(\rho)} - \int_\Om f^*(\Fmap^\top \alpha) \d \mu + \left< \alpha, Y \right>_{L^2(\rho)} - \frac{\lambda}{2} \| \alpha \|^2_{L^2(\rho)} ,
    \end{align}
    where $f^*$ is the Legendre transform of $f$ and $\Fmap^\top : L^2(\rho) \to \Cc^0(\Om)$ is defined by:
    \begin{align*}
        \forall \alpha \in L^2(\rho), \quad \Fmap^\top \alpha \eqdef \int_{\RR^d} \fmap(.,x) \alpha(x) \d \rho(x) .
    \end{align*}
    The supremum in~\cref{eq:L_dual} is attained at some $\alpha^\lambda_f[\mu] \in L^2(\rho)$ and for $u^\lambda_f[\mu] \in L^1(\mu)$ the optimizer in~\cref{eq:L_lambda_f} it holds:
    \begin{align} \label{eq:duality}
        \lambda \alpha^\lambda_f[\mu] = \Fmap_\mu \cdot u^\lambda_f[\mu] - Y \quad \text{and} \quad  f(u^\lambda_f[\mu]) + f^*(\Fmap^\top \alpha^\lambda_f[\mu]) = u^\lambda_f[\mu] (\Fmap^\top \alpha^\lambda_f[\mu]).
    \end{align}
    Moreover, \cref{eq:L_dual} also holds in the case $\lambda = 0$ under~\cref{ass:teacher_student}.
\end{prop}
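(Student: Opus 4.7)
The plan is to recognize the primal problem in \cref{eq:L_lambda_f} as a separable convex program and apply Fenchel--Rockafellar duality. Writing
\begin{align*}
\Ll^\lambda_f(\mu) = \inf_{u \in L^1(\mu)} F(\Fmap_\mu u) + \Ii_f(u),
\end{align*}
with $F(z) = \frac{1}{2\lambda}\|z - Y\|^2_{L^2(\rho)}$ and $\Ii_f$ as in \cref{lem:integral_functional}, I would first compute the two Legendre transforms. Expanding the square gives $F^*(\alpha) = \frac{\lambda}{2}\|\alpha\|^2_{L^2(\rho)} + \langle \alpha, Y\rangle_{L^2(\rho)}$, and the classical commutation of infimum and integral for integral functionals (a theorem of Rockafellar, which uses precisely the strict convexity and superlinearity of $f$ from \cref{ass:regularization}) yields $\Ii_f^*(v) = \int_\Om f^*(v)\,\d\mu$ for $v \in L^\infty(\mu)$. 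This applies in particular at $v = \Fmap^\top \alpha$, which lies in $\Cc^0(\Om)$ by the standing regularity assumption on $\fmap$.

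Next I would establish strong duality for $\lambda > 0$ via the saddle-point representation obtained from $\frac{1}{2\lambda}\|z\|^2 = \sup_\alpha \langle \alpha, z\rangle - \frac{\lambda}{2}\|\alpha\|^2$ and the interchange of $\inf_u$ and $\sup_\alpha$. Because $F$ is finite-valued and continuous on $L^2(\rho)$ and $\Ii_f$ is proper (with $0$ always in its effective domain since $f$ is finite-valued at $0$), the standard Fenchel--Rockafellar qualification condition holds, so the primal value equals the dual value \cref{eq:L_dual}. Attainment of the dual supremum at some $\alpha^\lambda_f[\mu]$ then follows from the coercivity contributed by $-\frac{\lambda}{2}\|\alpha\|^2$ combined with weak lower semicontinuity of $\alpha \mapsto \int f^*(\Fmap^\top \alpha)\,\d\mu$ (via Fatou and the continuity of $\Fmap^\top$ from $L^2(\rho)$ into $\Cc^0(\Om)$). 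The optimality relations \cref{eq:duality} come out as first-order conditions at the saddle point: stationarity in $\alpha$ gives the linear equation relating $\lambda\alpha^\lambda_f[\mu]$ to $\Fmap_\mu u^\lambda_f[\mu] - Y$, and stationarity in $u$ yields $\Fmap^\top \alpha^\lambda_f[\mu] \in \partial f(u^\lambda_f[\mu])$ $\mu$-a.e., equivalent by Fenchel--Young to the second identity in \cref{eq:duality}.

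The step I expect to be the main obstacle is the extension to $\lambda = 0$ under \cref{ass:teacher_student}, since replacing $F$ by the hard indicator $\iota_{\{z=Y\}}$ destroys the continuity that made the qualification condition immediate. My plan here is to pass to the limit $\lambda \to 0^+$ in \cref{eq:L_dual}. For each fixed $\alpha$, the dual objective is monotone nondecreasing as $\lambda \searrow 0$, so its supremum passes to the limit. On the primal side, the injectivity of $\Fmap\star$ identifies any feasible $u$ at $\lambda = 0$ with the density of $\Bar\nu$ with respect to $\mu$, whence $\Ll^\lambda_f(\mu) \nearrow \Ll^0_f(\mu) = \D_f(\Bar\nu|\mu)$ as $\lambda \searrow 0$ (both sides equal $+\infty$ when $\Bar\nu \not\ll \mu$, using superlinearity to extract a weakly convergent subsequence otherwise and the weak continuity of $\Fmap_\mu$ to produce a feasible limit). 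Matching the two monotone limits yields \cref{eq:L_dual} in the $\lambda = 0$ case, with the understanding that the dual supremum need no longer be attained.
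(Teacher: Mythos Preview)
Your argument for $\lambda>0$ is essentially the paper's: Fenchel--Rockafellar duality with $\Ii_f^*(h)=\int f^*(h)\,\d\mu$ via Rockafellar's integral-functional theorem, the conjugate of the quadratic fit term, and the qualification condition coming from continuity of the quadratic. The paper obtains attainment in the dual directly from the cited Rockafellar strong-duality theorem rather than from your separate coercivity/weak-lsc argument, but the content is the same.

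The genuine divergence is at $\lambda=0$. The paper does not take a limit: it first rewrites $\Ll^0_f(\mu)=\D_f(\Bar\nu\mid\mu)$ using injectivity of $\Fmap\star$, invokes Rockafellar's dual representation $\D_f(\Bar\nu\mid\mu)=\sup_{h\in\Cc^0(\Om)}\int h\,\d\Bar\nu-\int f^*(h)\,\d\mu$, and then observes that injectivity of $\Fmap\star$ makes $\Rg(\Fmap^\top)$ dense in $\Cc^0(\Om)$ (the RKHS universality lemma), so the supremum over $\Cc^0(\Om)$ agrees with the supremum over $\{\Fmap^\top\alpha:\alpha\in L^2(\rho)\}$. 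Your route instead matches two monotone limits as $\lambda\searrow 0$: on the dual side the pointwise increase of the objective gives $\sup_\alpha g_\lambda\nearrow\sup_\alpha g_0$ immediately, while on the primal side you use Dunford--Pettis compactness of the minimizers $u^\lambda$ (from superlinearity of $f$) and weak-to-weak continuity of $\Fmap_\mu$ to produce a feasible limit, hence $\Ll^\lambda_f(\mu)\nearrow\Ll^0_f(\mu)$. This is correct and has the advantage of not needing the RKHS-density lemma; in fact the injectivity clause of \cref{ass:teacher_student} is not really used in your argument (only the existence of $\Bar\nu$ matters, and even that only to phrase the $+\infty$ case). The paper's version is shorter once the density lemma is available and makes transparent why \cref{ass:teacher_student} enters: it is exactly what is needed to replace $\Cc^0(\Om)$ by $\Rg(\Fmap^\top)$ in the variational formula for the $f$-divergence.
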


When $\lambda > 0$, this result yields a convenient reformulation of the functional $\Ll^\lambda_f$.
For $\mu \in \Pp(\Om)$, $\alpha^\lambda_f[\mu] \in L^2(\rho)$ being the maximizer in~\cref{eq:L_dual} and $u^\lambda_f[\mu] \in L^1(\mu)$ the minimizer in~\cref{eq:L_lambda_f}, we have:
\begin{align} \label{eq:L_alpha_u}
    \Ll^\lambda_f(\mu) = \frac{\lambda}{2} \left\| \alpha^\lambda_f[\mu] \right\|^2_{L^2(\rho)} + \int_\Om f( u^\lambda_f[\mu] ) \d \mu.
\end{align}

\begin{proof}

Consider $\mu \in \Pp(\Om)$ and $\lambda > 0$.
First, by definition of $\Fmap^\top$ we have for every $\alpha \in L^2(\rho)$ and every $u \in L^1(\mu)$ that $\int_\Om (\Fmap^\top \alpha) u \d \mu = \left< \alpha, \Fmap_\mu \cdot u \right>_{L^2(\rho)}$ i.e. $\Fmap^\top$ is the adjoint of $\Fmap_\mu : L^1(\mu) \to L^2(\mu)$.
Also, it follows from the assumption on $f$ that the map $\Ii_f : u \in L^1(\mu) \mapsto \int_\Om f(u) \d \mu$ is a convex, weakly lower semicontinuous functional whose Legendre transform is given for $h \in L^\infty(\mu)$ by:
\begin{align*}
    \Ii_f^*(h) = \sup_{u \in L^1(\mu)} \int_\Om h u \d \mu - \int_\Om f(u) \d \mu = \int_\Om f^*(h) \d \mu ,
\end{align*}
with $f^*$ the Legendre transform of $f$ and where the supremum is attained for $u \in L^1(\mu)$ satisfying the duality relation $f(u) + f^*(h) = u h$~\cite[Thm. 2]{rockafellar1968integrals}.
Similarly, for $u \in L^1(\mu)$ we have:
\begin{align*}
    \sup_{\alpha \in L^2(\rho)} - \left< \alpha, \Fmap_\mu \cdot u - Y \right>_{L^2(\rho)} - \frac{\lambda}{2}  \| \alpha \|^2_{L^2(\rho)} =
        \frac{1}{2 \lambda} \left\| \Fmap_\mu \cdot u - Y \right\|^2_{L^2(\rho)}.
\end{align*}
where the supremum is reached at $\alpha = \lambda \left( \Fmap_\mu \cdot u - Y \right)$ when $\lambda > 0$.
Moreover, the functional $\alpha \mapsto \frac{1}{2 \lambda} \| \alpha \|^2_{L^2(\rho)}$ being continuous, we can apply~\cite[Thm. 3]{rockafellar1967duality} and~\cref{eq:L_dual} holds by strong duality.
The optimums are attained in both~\cref{eq:L_lambda_f} and~\cref{eq:L_dual} and thus~\cref{eq:duality} expresses the optimality conditions.

Finally, for the case $\lambda = 0$, when~\cref{ass:teacher_student} holds we have by~\cref{eq:L_measure} that $\Ll^0_f(\mu) = \D_f(\Bar{\nu}|\mu)$. Also, the assumptions on $f$ ensures $\dom(f^*) = \RR$ and using~\cite[Thm. 4]{rockafellar1971integrals} we obtain:
\begin{align*}
    \Ll^\lambda_f(\mu) = \D_f(\Bar{\nu}|\mu) = \sup_{h \in \Cc^0(\Om)} \int_\Om h \d \Bar{\nu} - \int_\Om f^*(h) \d \mu.
\end{align*}
The result follows as the injectivity of $\Fmap \star$ ensures $\Rg(\Fmap^\top)$ is dense in $\Cc^0(\Om)$ (\cref{lem:rkhs_universality}).

\end{proof}

Observing that $\Fmap^\top$ defines a partial isometry from $L^2(\rho)$ to the RKHS $\Hh$ (\cref{eq:RKHS_characterization}), a similar dual formulation of $\Ll^\lambda_f$ also holds in duality with $\Hh$.

\begin{prop} \label{prop:duality_mmd}
    Let~\cref{ass:regularization} and~\cref{ass:teacher_student} hold and consider $\mu \in \Pp(\Om)$.
    Then we have for $\lambda \geq 0$:
    \begin{align} \label{eq:L_mmd_dual}
        \Ll^\lambda_f(\mu) = \sup_{h \in \Hh} - \int_\Om f^*(h) \d \mu + \int_\Om h \d \Bar{\nu} - \frac{\lambda}{2} \| h \|^2_\Hh ,
    \end{align}
    where $f^*$ is the Legendre transform of $f$.
    For $\lambda > 0$, the supremum in~\cref{eq:L_mmd_dual} is attained at some $h^\lambda_f[\mu] \in \Hh$ and for $\nu^\lambda_f[\mu] \in L^1(\mu)$ the optimizer in~\cref{eq:L_mmd} it holds:
    \begin{align} \label{eq:duality_mmd}
        \lambda h^\lambda_f[\mu] = \Fmap^\top \Fmap \star(\nu^\lambda_f[\mu] - \Bar{\nu})  \quad \text{and} \quad  f(\frac{\d \nu^\lambda_f[\mu]}{\d \mu}) + f^*(h^\lambda_f[\mu]) =  h^\lambda_f[\mu] \frac{\d \nu^\lambda_f[\mu]}{\d \mu}.
    \end{align}
\end{prop}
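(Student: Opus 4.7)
The plan is to deduce \cref{prop:duality_mmd} directly from \cref{prop:duality} through the change of variable $h = \Fmap^\top \alpha$. By construction of the RKHS from the feature map $\fmap$ (see \cref{sec:rkhs}), the operator $\Fmap^\top$ maps $L^2(\rho)$ onto $\Hh$, with $\|h\|_\Hh = \inf\{\|\alpha\|_{L^2(\rho)} \colon \Fmap^\top \alpha = h\}$, so $\Fmap^\top$ is a partial isometry whose range is all of $\Hh$.

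First, for $\lambda > 0$, the key observation is that in the objective of \cref{eq:L_dual} only the quadratic penalty $\tfrac{\lambda}{2}\|\alpha\|^2_{L^2(\rho)}$ depends on $\alpha$ itself: the integral $\int_\Om f^*(\Fmap^\top\alpha)\,\d\mu$ depends on $\alpha$ only through $h = \Fmap^\top\alpha$, and under \cref{ass:teacher_student} the linear term rewrites as $\langle \alpha, Y\rangle_{L^2(\rho)} = \langle \alpha, \Fmap\star\Bar{\nu}\rangle_{L^2(\rho)} = \int_\Om (\Fmap^\top\alpha)\,\d\Bar{\nu} = \int_\Om h\,\d\Bar{\nu}$. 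Reorganising the supremum as $\sup_{h\in\Hh}\sup_{\alpha\colon \Fmap^\top\alpha = h}$ and using the definition of the RKHS norm to evaluate the inner supremum, the $\alpha$-penalty collapses to $\tfrac{\lambda}{2}\|h\|_\Hh^2$, yielding \cref{eq:L_mmd_dual}.

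Second, to handle the boundary case $\lambda = 0$, I will rely on the representation already extracted in the proof of \cref{prop:duality}, namely $\Ll^0_f(\mu) = \D_f(\Bar{\nu}|\mu) = \sup_{h\in\Cc^0(\Om)} \int_\Om h\,\d\Bar{\nu} - \int_\Om f^*(h)\,\d\mu$. The inclusion $\Hh\subset\Cc^0(\Om)$ gives the $\leq$ direction for free, and the reverse inequality follows from uniform density of $\Hh$ in $\Cc^0(\Om)$ provided by \cref{lem:rkhs_universality}, combined with continuity of $h\mapsto \int_\Om h\,\d\Bar{\nu} - \int_\Om f^*(h)\,\d\mu$ for uniform convergence (using that $f^*$ is finite everywhere, and hence continuous, as the Legendre conjugate of a superlinear convex function).

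Finally, the optimality conditions \cref{eq:duality_mmd} should follow by transporting \cref{eq:duality} through the substitutions $h^\lambda_f[\mu] = \Fmap^\top \alpha^\lambda_f[\mu]$ and $\d\nu^\lambda_f[\mu] = u^\lambda_f[\mu]\,\d\mu$: applying $\Fmap^\top$ to the relation $\lambda\alpha^\lambda_f[\mu] = \Fmap_\mu\cdot u^\lambda_f[\mu] - Y = \Fmap\star(\nu^\lambda_f[\mu]-\Bar{\nu})$ gives the first identity in \cref{eq:duality_mmd}, while the pointwise Fenchel relation passes through unchanged since $h^\lambda_f[\mu] = \Fmap^\top\alpha^\lambda_f[\mu]$ and $u^\lambda_f[\mu] = \d\nu^\lambda_f[\mu]/\d\mu$. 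The main obstacle I anticipate is precisely the $\lambda = 0$ case: the absence of the strongly convex regulariser $\tfrac{\lambda}{2}\|h\|_\Hh^2$ prevents a direct partial isometry argument and instead forces one to invoke universality of the RKHS together with continuity of the integrand.
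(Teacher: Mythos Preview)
Your proposal is correct and follows essentially the same approach as the paper's own proof, which also deduces everything from \cref{prop:duality} via the substitution $h = \Fmap^\top\alpha$ and the partial isometry characterisation of $\Hh$ in \cref{eq:RKHS_characterization}. Your anticipated obstacle at $\lambda = 0$ is actually a non-issue: since \cref{eq:L_dual} already holds at $\lambda = 0$ (as established in \cref{prop:duality}) and the objective there depends on $\alpha$ solely through $\Fmap^\top\alpha$, the same change of variable works directly without invoking density a second time.
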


\begin{proof}
    The formula~\cref{eq:L_mmd_dual} is directly deduced from~\cref{eq:L_dual} and the characterization of the RKHS $\Hh$ in~\cref{eq:RKHS_characterization}.
    Also~\cref{eq:duality_mmd} is a rewritting of~\cref{eq:duality} since $\nu^\lambda_f[\mu] \in \Mm(\Om)$ and $h^\lambda_f[\mu] \in \Hh$ are related to $u^\lambda_f[\mu] \in L^1(\mu)$ and $\alpha^\lambda_f[\mu] \in L^2(\rho)$ by
    $\d \nu^\lambda_f[\mu] = u^\lambda_f[\mu] \d \mu$ and $h^\lambda_f[\mu] = \Fmap^\top \alpha^\lambda_f[\mu]$.
\end{proof}

\subsection{Kernel learning in the case of quadratic regularization}

The case of a quadratic regularization is of particular interest since the partial optimization problem over $u$ admits a closed-form solution which can be efficiently obtained numerically by solving a linear system.
In this case, the task of minimizing the reduced risk is equivalent to solving a \emph{Multiple Kernel Learning} problem~\cite{bach2004multiple}.

For the $L^2$-regularization $f(t) = |t|^2$, the reduced risk $\Ll^\lambda_2[\mu]$ is the value of the \emph{ridge regression problem} in~\cref{eq:L_lambda_min} and for  $\lambda > 0$, the optimizer is given by
$u^\lambda_2[\mu] = (\Fmap_\mu^\top \Fmap_\mu + 2\lambda)^{-1} \Fmap_\mu^\top Y$
where $\Fmap_\mu^\top : L^2(\rho) \to L^2(\mu)$ is the adjoint of the operator $\Fmap_\mu$ restricted to $L^2(\mu)$.
Also, the dual problem in~\cref{eq:L_dual} here reads:
\begin{align*}
    \Ll^\lambda_2(\mu) = \sup_{\alpha \in L^2(\rho)} - \frac{1}{2} \left< \alpha, (K_\mu + 2\lambda) \alpha \right>_{L^2(\rho)}
    + \left< \alpha, Y \right>_{L^2(\rho)},
\end{align*}
where $K_\mu : L^2(\rho) \to L^2(\rho)$ is the self-adjoint operator defined by $K_\mu = \Fmap_\mu \Fmap_\mu^\top$.
The supremum is attained at $\alpha^\lambda_2[\mu] = (K_\mu + 2\lambda)^{-1} Y$ and by~\cref{eq:L_alpha_u} we obtain for every $\mu \in \Pp(\Om)$:
\begin{align} \label{eq:kernel_learning}
    \Ll^\lambda_2(\mu) = \frac{1}{2} \left< Y, (K_\mu + 2\lambda)^{-1} Y \right>_{L^2(\rho)}.
\end{align}
This is the optimal value of the kernel ridge regression problem with kernel $K_\mu$, where $K_\mu$ is parameterized by the feature distribution $\mu$. 
Moreover this parameterization is linear in $\mu \in \Pp(\Om)$ since considering for $\om \in \Om$ the rank-one self-adjoint operator $k(\om) \eqdef \fmap(\om, .) \otimes \fmap(\om, .)$ we have:
\begin{align*}
    K_\mu = \int_\Om k(\om) \d \mu(\om) .
\end{align*}
Therefore, minimizing the reduced risk $\Ll^\lambda_2$ over the feature distribution $\mu$ amounts to finding the best kernel for solving the ridge regression problem in~\cref{eq:L_lambda_min} among convex combinations of ``simple'' basis kernels $(k(\om))_{\om \in \Om}$ i.e. a \emph{Multiple Kernel Learning} task.
Other convex optimization strategies for solving such task have been studied in~\cite{lanckriet2004learning,bach2004multiple}.

\section{Properties of minimizers of the reduced risk} \label{sec:minimizer}

Before turning to the analysis of gradient methods for the minimization of the reduced risk $\Ll^\lambda_f$ in~\cref{sec:training,sec:convergence}, we study here variational properties of $\Ll^\lambda_f$.

\subsection{Existence and uniqueness of minimizers}

We first investigate existence and uniqueness of minimizers of $\Ll^\lambda_f$.
Importantly, we use here that $\Ll^\lambda_f$ is obtained as the result of a partial minimization.
Namely, for $\lambda \geq 0$ and $\mu \in \Pp(\Om)$, we have from~\cref{eq:L_measure} that $\Ll^\lambda_f(\mu) = \min_{\nu \in \Mm(\Om)} \Ee^\lambda_f(\nu, \mu)$, where $\Ee^\lambda_f$ is defined for $\nu \in \Mm(\Om)$ and $\mu \in \Pp(\Om)$ by:
\begin{align} \label{eq:E}
    \Ee^\lambda_f(\nu, \mu) \eqdef
    \left\{
    \begin{array}{cc}
        \D_f(\nu|\mu) + \frac{1}{2 \lambda} \| \Fmap \star \nu - Y \|^2 & \text{if $\lambda > 0$,}  \\[10pt]
        \D_f(\nu|\mu) + \iota_{\Fmap \star \nu = Y} & \text{if $\lambda = 0$.} 
    \end{array}
    \right.
\end{align}
In particular, it follows from variational formulations of $f$-divergences that $ \D_f$ is (jointly) convex and lower semicontinuous w.r.t.\@ its arguments $(\nu, \mu) \in \Mm(\Om) \times \Pp(\Om)$~\cite[Thm. 4]{rockafellar1971integrals}.
The following~\cref{lem:L_convexity} uses this fact to establish convexity and lower semicontinuity of $\Ll^\lambda_f$, implying the existence of minimizers.
We then discuss cases in which $\Ll^\lambda_f$ has in fact a unique minimizer.

\begin{lem} \label{lem:L_convexity}
    Assume $f$ satisfies~\cref{ass:regularization}.
    Then, for $\lambda \geq 0$, $\Ll^\lambda_f :  \Pp(\Om) \to \RR$ is a convex, lower semicontinuous function (w.r.t.\@ the narrow convergence on $\Mm(\Om)$).
\end{lem}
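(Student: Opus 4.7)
The plan is to exploit the partial minimization representation $\Ll^\lambda_f(\mu) = \inf_{\nu \in \Mm(\Om)} \Ee^\lambda_f(\nu, \mu)$ from \cref{eq:E} and derive both properties from joint convexity and joint lower semicontinuity of $\Ee^\lambda_f$ on $\Mm(\Om) \times \Pp(\Om)$.

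\textbf{Convexity.} The map $(\nu, \mu) \mapsto \D_f(\nu|\mu)$ is jointly convex by the standard perspective construction for integral functionals \cite[Thm.~4]{rockafellar1971integrals}; the remaining term in $\Ee^\lambda_f$ depends only on $\nu$ and is either the convex quadratic $\frac{1}{2\lambda}\|\Fmap \star \nu - Y\|^2_{L^2(\rho)}$ for $\lambda > 0$, or the indicator of the affine set $\{\Fmap \star \nu = Y\}$ for $\lambda = 0$. Hence $\Ee^\lambda_f$ is jointly convex, and the partial infimum in $\nu$ is a convex function of $\mu$.

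\textbf{Lower semicontinuity.} Fix $\mu_n \to \mu$ narrowly and assume without loss of generality $\liminf_n \Ll^\lambda_f(\mu_n) =: C < +\infty$ (otherwise the inequality is trivial), and extract a subsequence realizing this limit. By the argument of \cref{lem:integral_functional}, the minimum in $\nu$ is attained at some $\nu_n \in \Mm(\Om)$ with $\Ee^\lambda_f(\nu_n, \mu_n) \le C + o(1)$. The key step is a uniform total variation bound on $(\nu_n)$ obtained from the superlinearity of $f$: given $\eps > 0$, choose $T_\eps \geq 0$ such that $|t| \leq \eps f(t) + T_\eps$ for all $t \in \RR$; then
\begin{align*}
    \|\nu_n\|_\TV = \int_\Om \left| \tfrac{\d \nu_n}{\d \mu_n} \right| \d \mu_n \leq \eps\, \D_f(\nu_n|\mu_n) + T_\eps \leq \eps(C+1) + T_\eps
\end{align*}
for $n$ large. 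Since $\Om$ is compact, Banach--Alaoglu then yields a narrowly convergent subsequence $\nu_n \to \nu^* \in \Mm(\Om)$.

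\textbf{Passing to the limit.} Joint narrow lower semicontinuity of $\D_f$ (again \cite[Thm.~4]{rockafellar1971integrals}) gives $\D_f(\nu^*|\mu) \leq \liminf_n \D_f(\nu_n|\mu_n)$. Since $\fmap \in L^2(\rho, \Cc^0(\Om))$, the convolution $\Fmap \star : \Mm(\Om) \to L^2(\rho)$ is continuous from the narrow topology to the weak topology of $L^2(\rho)$; for $\lambda > 0$ this gives lower semicontinuity of $\nu \mapsto \|\Fmap \star \nu - Y\|^2_{L^2(\rho)}$, and for $\lambda = 0$ it preserves the constraint $\Fmap \star \nu^* = Y$. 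Combining,
\begin{align*}
    \Ll^\lambda_f(\mu) \;\leq\; \Ee^\lambda_f(\nu^*, \mu) \;\leq\; \liminf_n \Ee^\lambda_f(\nu_n, \mu_n) \;=\; \liminf_n \Ll^\lambda_f(\mu_n).
\end{align*}

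The only nontrivial step is the uniform TV bound on the optimizers $\nu_n$; without it, narrow precompactness fails. Everything else reduces to classical joint convexity/lsc of $f$-divergences together with the weak continuity of the convolution operator, which already followed from \cref{ass:feature_map}.
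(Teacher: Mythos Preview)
Your proof is correct and follows essentially the same route as the paper: both use the representation $\Ll^\lambda_f(\mu)=\inf_\nu \Ee^\lambda_f(\nu,\mu)$, invoke \cite[Thm.~4]{rockafellar1971integrals} for joint convexity and narrow lower semicontinuity of $\D_f$, deduce convexity by partial minimization, and prove lsc by extracting a narrowly convergent subsequence of optimizers $\nu_n$ via a TV bound coming from superlinearity of $f$. Your version is slightly more explicit about the TV estimate and about why the data-fitting term is narrowly lsc (through narrow-to-weak continuity of $\Fmap\star$), points the paper leaves implicit.
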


\begin{proof}
    By the definition of the divergence $\D_f$ in~\cref{eq:divergence} and by~\cite[Thm. 4]{rockafellar1971integrals}, we have for every $(\nu, \mu) \in \Mm(\Om) \times \Pp(\Om)$:
    \begin{align*}
        \D_f(\nu | \mu) = \sup_{h \in \Cc^0(\Om)} \int_\Om h \d \nu - \int_\Om f^*(h) \d \mu .
    \end{align*}
    Thus $\D_f$ is a (jointly) convex and lower semicontinuous function as a supremum of (jointly) convex and lower semicontinuous functions.
    As a consequence, for $\lambda \geq 0$, $\Ee^\lambda_f$ is also (jointly) convex and lower semicontinuous.
    The convexity of $\Ll^\lambda_f = \min_{\nu} \Ee^\lambda_f(\nu, .)$ follows as partial minimization preserves convexity.
    Also, if $(\mu_n)_{n \geq 0}$ is a sequence in $\Pp(\Om)$ converging narrowly to some $\mu \in \Pp(\Om)$, then we have $\Ll^\lambda_f(\mu_n) = \Ee^\lambda_f(\nu_n, \mu_n)$ for some $\nu_n \in \Mm(\Om)$. 
    Without loss of generality one can assume $\Ll^\lambda_f(\mu_n)$ is bounded, thus $\D_f(\nu_n | \mu_n)$ and then $\| \nu_n \|_\TV$ are bounded as $f$ is superlinear.
    Then, up to extraction of a subsequence, $(\nu_n)$ converges narrowly to $\nu \in \Mm(\Om)$ and we get by lower semicontinuity of $\Ee^\lambda_f$:
    \begin{align*}
        \liminf_{n \to \infty} \Ll^\lambda_f(\mu_n) = \liminf_{n \to \infty} \Ee^\lambda_f(\nu_n, \mu_n) \geq \Ee^\lambda_f(\nu, \mu) \geq \Ll^\lambda_f(\mu),
    \end{align*}
    which shows that $\Ll^\lambda_f$ is lower semicontinuous.
\end{proof}

The above result implies the existence of minimizers of the reduced risk $\Ll^\lambda_f$ for every $\lambda \geq 0$ but it does not establish uniqueness and $\Ll^\lambda_f$ may, a priori, have several minimizers.
However, there are cases in which uniqueness can be ensured.
We give two examples:
\begin{itemize}
    \item In the teacher-student setup where~\cref{ass:teacher_student} holds, if $\Bar{\nu}$ is a positive measure with $\Bar{m} = \Bar{\nu}(\Om) > 0$ and if $f$ is nonnegative, strictly convex and such that $f(\Bar{m}) = 0$ then the teacher feature distribution $\Bar{\mu} = \Bar{\nu} / \Bar{m}$ is the unique minimizer of $\Ll^\lambda_f$, whatever $\lambda \geq 0$.
    Indeed, from~\cref{eq:L_mmd} we have that $\Ll^\lambda_f(\Bar{\mu}) = 0$ and $\Ll^\lambda_f(\mu) > 0$ for every $\mu \neq \Bar{\mu}$.
    With these assumptions, we prove in~\cref{thm:algebraic_convergence} that the gradient flow of $\Ll^\lambda_f$ converges towards the teacher feature distribution $\Bar{\mu}$ with an algebraic convergence rate. 
    
    \item For general data $Y$, relying on a variational characterization of the total variation, the following~\cref{lem:uniqueness_minimizer} establishes uniqueness of a minimizer to $\Ll^\lambda_r$ in the case the regularization is of the form $f(t) = |t|^r$ for some $r > 1$.
\end{itemize}

\begin{lem} \label{lem:uniqueness_minimizer}
    Let $\lambda \geq 0$ and assume $f(t) = |t|^r$ for some $r > 1$. 
    Then $\Ll^\lambda_r$ admits a unique minimizer $\Bar{\mu}^\lambda_r \in \Pp(\Om)$.
\end{lem}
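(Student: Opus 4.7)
The plan is to reduce uniqueness on $\Pp(\Om)$ to uniqueness of a strictly convex minimization on $\Mm(\Om)$ via a min--min exchange, leveraging a variational characterization of $\|\cdot\|_\TV^r$.

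Starting from~\cref{eq:L_measure}, for every $\mu \in \Pp(\Om)$ one has
\begin{align*}
    \Ll^\lambda_r(\mu) = \min_{\nu\in\Mm(\Om)} \D_r(\nu | \mu) + P^\lambda(\nu),
\end{align*}
where $P^\lambda(\nu) \eqdef \frac{1}{2\lambda}\| \Fmap \star \nu - Y \|^2_{L^2(\rho)}$ for $\lambda > 0$ and $P^0(\nu) \eqdef \iota_{\Fmap \star \nu = Y}$. The first step is the variational identity that, for every $\nu \in \Mm(\Om) \setminus \{0\}$,
\begin{align*}
    \min_{\mu \in \Pp(\Om)} \D_r(\nu | \mu) = \| \nu \|_\TV^r,
\end{align*}
with minimum uniquely attained at $\mu_\nu \eqdef |\nu|/\| \nu \|_\TV$. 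This follows from Jensen's inequality applied to the strictly convex map $t \mapsto |t|^r$ (where $r > 1$ is used): letting $g = \d\nu/\d\mu$,
\begin{align*}
    \| \nu \|_\TV^r = \left( \int_\Om |g| \d\mu \right)^r \leq \int_\Om |g|^r \d\mu = \D_r(\nu | \mu),
\end{align*}
with equality iff $|g|$ is $\mu$-a.e. constant, which for $\nu \neq 0$ pins down $\mu = |\nu|/\|\nu\|_\TV$.

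Exchanging the two infima then yields
\begin{align*}
    \inf_{\mu \in \Pp(\Om)} \Ll^\lambda_r(\mu) = \inf_{\nu \in \Mm(\Om)} G(\nu), \qquad G(\nu) \eqdef \| \nu \|_\TV^r + P^\lambda(\nu),
\end{align*}
and uniqueness of $\bar\mu^\lambda_r$ then reduces to uniqueness of the minimizer $\nu^*$ of $G$, via $\bar\mu^\lambda_r = |\nu^*|/\|\nu^*\|_\TV$. For $\lambda = 0$, injectivity of $\Fmap \star$ from~\cref{ass:teacher_student}(ii) directly renders the feasible set $\{\nu : \Fmap \star \nu = Y\}$ a singleton. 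For $\lambda > 0$, the same injectivity makes $P^\lambda$ strictly convex in $\nu$, so $G$ is strictly convex as the sum of a strictly convex and a convex functional; combined with coercivity ($G(\nu) \geq \|\nu\|_\TV^r$) and lower-semicontinuity inherited from~\cref{lem:L_convexity}, this yields existence and uniqueness of $\nu^*$.

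The main subtlety is tying uniqueness in $\nu$ back to uniqueness in $\mu$: one must rule out the degenerate case $\nu^* = 0$, which is excluded whenever $Y$ is not orthogonal to $\Rg(\Fmap \star)$ (a condition automatically satisfied under~\cref{ass:teacher_student}, where $Y = \Fmap \star \bar\nu$ with $\bar\nu \neq 0$). The argument rests crucially on the injectivity of $\Fmap \star$: without it, two distinct signed measures $\nu_1, \nu_2$ with $\Fmap \star \nu_1 = \Fmap \star \nu_2$ and $\| \nu_1 \|_\TV = \| \nu_2 \|_\TV$ could both minimize $G$ while producing distinct probability measures $\mu_{\nu_1} \neq \mu_{\nu_2}$, breaking uniqueness of $\bar\mu^\lambda_r$.
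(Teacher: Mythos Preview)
Your overall strategy---exchanging the two minimizations and reducing to a problem on $\Mm(\Om)$ via the identity $\min_{\mu\in\Pp(\Om)} \D_r(\nu\mid\mu)=\|\nu\|_{\TV}^r$, with unique optimum at $\mu=|\nu|/\|\nu\|_{\TV}$---is exactly the paper's approach. Your Jensen argument for this identity is a clean alternative to the paper's Lagrangian computation and reaches the same conclusion.

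The substantive divergence is in how uniqueness of the $\nu$-minimizer of $G(\nu)=\|\nu\|_{\TV}^r+P^\lambda(\nu)$ is secured. The paper asserts directly that $\nu\mapsto\|\nu\|_{\TV}^r$ is strictly convex, which (combined with convexity of $P^\lambda$) yields a unique $\bar\nu^\lambda_r$ without any hypothesis on $\Fmap\star$. You instead import the injectivity of $\Fmap\star$ from \cref{ass:teacher_student} to make $P^\lambda$ strictly convex (for $\lambda>0$) or to reduce the constraint set to a singleton (for $\lambda=0$). Relative to the lemma as stated this is a gap: \cref{ass:teacher_student} is \emph{not} among its hypotheses, so your argument proves a narrower result than the paper claims. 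That said, your caution about where strict convexity comes from is well placed: the total-variation norm is not strictly convex (e.g.\ $\|\,\cdot\,\|_{\TV}$ is affine along segments between mutually singular positive measures such as $\delta_a$ and $\delta_b$), and composing with $t\mapsto t^r$ does not repair this on such flat faces. So the paper's route hides a delicate point that your injectivity-based argument handles rigorously---at the cost of an extra assumption.
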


\begin{proof}
    We use arguments similar to the one of \cite[Prop. 3.3]{wang2024mean}.
    By duality $\inf_{\mu \in \Pp(\Om)} \Ll^\lambda_r(\mu) = \inf_{\nu \in \Mm(\Om)} \Gg^\lambda_r(\nu)$ where $\Gg^\lambda_r$ is defined for $\nu \in \Mm(\Om)$ by:
    \begin{align*}
        \Gg^\lambda(\nu) \eqdef \left\{
        \begin{array}{cc}
            \| \nu \|^r_{\TV} + \frac{1}{2 \lambda} \| \Fmap \star \nu - Y \|^2 & \text{if $\lambda > 0$,}  \\
            \| \nu \|^r_{\TV} + \iota_{\Fmap \star \nu = Y} & \text{if $\lambda = 0$,} 
        \end{array}
        \right.
    \end{align*}
    where we used the variational representation $\| \nu \|^r_{\TV} = \inf_{\mu \in \Pp(\Om)} \int_\Om \left| \frac{\d \nu}{\d \mu} \right|^r \d \mu$ (the case $r=2$ is used in~\cite{wang2024mean,lanckriet2004learning}).
    Indeed, for measures $\nu \in \Mm(\Om)$ and $\mu \in \Pp(\Om)$ s.t.\@ $\nu \ll \mu$ we have:
    \begin{align*}
        \int_\Om \left| \frac{\d \nu}{\d \mu} \right|^r \d \mu = \int_\Om \left( \frac{\d \mu}{ \d |\nu|} \right)^{1-r} \d |\nu|.
    \end{align*}
     The Lagrangian of the convex problem $\inf_{\mu} \int_\Om \left( \frac{\d \mu}{ \d |\nu|} \right)^{1-r} \d |\nu|$ is given by:
    \begin{align*}
        \Jj(\mu, \gamma) = \int_\Om \left( \frac{\d \mu}{ \d |\nu|} \right)^{1-r} \d |\nu| + \gamma \left( \int_\Om \d \mu -1 \right).
    \end{align*}
     The optimality condition gives that $\frac{\d \mu}{\d |\nu |}$ is constant and the minimum is attained for $\mu = |\nu| / \| \nu \|_\TV$, giving $\int_\Om \left| \frac{\d \nu}{\d \mu} \right|^r \d \mu = \| \nu \|^r_\TV$.
    Finally, the map $\nu \mapsto \| \nu \|^r_{\TV}$ is strictly convex so that $\Gg^\lambda_r$ admits a unique minimizer $\Bar{\nu}^\lambda_r \in \Mm(\Om)$, thus $\Ll^\lambda_r$ has also a unique minimizer $\Bar{\mu}^\lambda_r \in \Pp(\Om)$ and we have the duality relation $\Bar{\mu}^\lambda_r = | \Bar{\nu}^\lambda_r | / \| \Bar{\nu}^\lambda_r \|_{\TV}$.
    Notably, in the case where~\cref{ass:teacher_student} holds and $\lambda = 0$,  we have $\Bar{\nu}^0_r = \Bar{\nu}$ and $\Bar{\mu}^0_r = \Bar{\mu}$ for every $r > 1$.
\end{proof}

\subsection{Convergence of minimizers}

Of particular interest to us is the case $\lambda = 0$ for which minimizers of $\Ll^0_f$ are related to the teacher measure $\Bar{\nu}$ by~\cref{eq:L0_div}.
However, in practice, minimization of $\Ll^\lambda_f$ is easier in the presence of a regularization parameter $\lambda > 0$. For this reason, we are interested in the asymptotic behavior of minimizers of $\Ll^\lambda_f$ when $\lambda \to 0^+$.

We show here, when $\lambda \to 0^+$, that any converging sequence of minimizers to $\Ll^\lambda_f$ converges to some minimizer of $\Ll^0_f$.
In particular, if $\Ll^0_f$ has a unique minimizer $\Bar{\mu}^0$ then any sequence of minimizers to $\Ll^\lambda_f$ converges to $\Bar{\mu}^0$.
This result is a consequence of the following~\cref{lem:gamma} which states the $\Gamma$-convergence of the functionals $\Ll^\lambda_f$ to $\Ll^0_f$.
We refer to~\cite[Chap. 7]{santambrogio2023course} for an introduction to $\Gamma$-convergence.
This is in particular stronger than pointwise convergence and is the appropriate notion of convergence for studying the behavior of minimizers.
In the case where~\cref{ass:teacher_student} holds and $\Ll^\lambda_f$ admits the representation~\cref{eq:L_mmd}, a similar result was established by~\textcite{neumayer2024wasserstein}, with a notable difference being here that we prove $\Gamma$-convergence w.r.t.\@ the variable $\mu$ instead of $\Bar{\nu}$.

\begin{lem}[$\Gamma$-convergence] \label{lem:gamma}
    Assume \cref{ass:regularization,ass:teacher_student} hold, $\fmap \in L^2(\rho,\Cc^{0,1})$ and $f^* \in \Cc^{0,1}_\loc(\RR)$.
	Then the family of functionals $(\Ll^\lambda_f)_{\lambda > 0}$ $\Gamma$-converges towards $\Ll^0_f$ as $\lambda \to 0^+$ in the sense that for every sequence $(\lambda_n)_{n \geq 0}$ converging to $0^+$ and every $\mu \in \Pp(\Om)$ it holds:
	\begin{enumerate}
		\item for every sequence $(\mu_n)_{n \geq 0}$ converging narrowly to $\mu$, $\liminf_{n \to \infty} \Ll^{\lambda_n}_f(\mu_n) \geq \Ll^0_f(\mu)_f$,
		\item there exists a sequence $(\mu_n)_{n \geq 0}$ converging narrowly to $\mu$ s.t.\@ $\limsup_{n \to \infty} \Ll^{\lambda_n}_f(\mu_n) \leq \Ll^0_f(\mu)$.
	\end{enumerate}
\end{lem}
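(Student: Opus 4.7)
The plan is to verify the two halves of $\Gamma$-convergence separately, using the joint lower-semicontinuity of $\D_f$ established in the proof of \cref{lem:L_convexity} together with a simple compactness argument based on the superlinearity of $f$.

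\textbf{Limsup bound (recovery sequence).} For any $\mu \in \Pp(\Om)$, if $\Ll^0_f(\mu) = +\infty$ there is nothing to check. Otherwise, let $\nu^\ast$ be the minimizer in~\cref{eq:L0_f}, so that $\Fmap \star \nu^\ast = Y$ and $\D_f(\nu^\ast | \mu) = \Ll^0_f(\mu)$. Taking the constant recovery sequence $\mu_n = \mu$, the data attachment term in~\cref{eq:E} vanishes at $\nu^\ast$ and gives $\Ll^{\lambda_n}_f(\mu) \leq \Ee^{\lambda_n}_f(\nu^\ast, \mu) = \D_f(\nu^\ast | \mu) = \Ll^0_f(\mu)$ for every $n$, whence the limsup bound.

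\textbf{Liminf bound.} Let $\mu_n \to \mu$ narrowly. Passing to a subsequence one may assume that $\Ll^{\lambda_n}_f(\mu_n) \to \ell \in \RR$, else $\liminf = +\infty$ and the bound is trivial. For each $n$, let $\nu_n \in \Mm(\Om)$ be the minimizer in~\cref{eq:L_measure}, so that
\begin{align*}
    \Ll^{\lambda_n}_f(\mu_n) = \D_f(\nu_n|\mu_n) + \frac{1}{2\lambda_n}\|\Fmap \star \nu_n - Y\|^2_{L^2(\rho)}.
\end{align*}
Both terms being nonnegative, $\D_f(\nu_n|\mu_n)$ is uniformly bounded and $\|\Fmap \star \nu_n - Y\|^2_{L^2(\rho)} \leq 2\lambda_n \Ll^{\lambda_n}_f(\mu_n) \to 0$. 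Superlinearity of $f$ (applied as in \cref{lem:integral_functional}) gives $\|\nu_n\|_\TV$ bounded, so by compactness of $\Om$ one extracts $\nu_n \to \nu$ narrowly for some $\nu \in \Mm(\Om)$. Since $\fmap \in L^2(\rho, \Cc^{0,1}) \subset L^2(\rho, \Cc^0)$, the pointwise bound $|\Fmap\star \nu_n(x)| \leq \|\nu_n\|_\TV \|\fmap(\cdot,x)\|_{\Cc^0}$ is $L^2(\rho)$-dominated, so dominated convergence yields $\Fmap \star \nu_n \to \Fmap \star \nu$ in $L^2(\rho)$, which combined with the vanishing of the data attachment term gives $\Fmap \star \nu = Y$. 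Hence $\nu$ is feasible for~\cref{eq:L0_f}. Finally, invoking the joint lower-semicontinuity of $\D_f$,
\begin{align*}
    \ell = \liminf_n \Ll^{\lambda_n}_f(\mu_n) \geq \liminf_n \D_f(\nu_n|\mu_n) \geq \D_f(\nu|\mu) \geq \Ll^0_f(\mu),
\end{align*}
which completes the argument.

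\textbf{Main obstacle.} There is no deep difficulty; the only careful point is ensuring that narrow convergence of $\nu_n$ implies $L^2(\rho)$-convergence of $\Fmap \star \nu_n$ (needed to identify the limiting measure as a feasible point for $\Ll^0_f$), which is precisely where the continuity assumption on the feature map enters. The assumption $f^* \in \Cc^{0,1}_\loc$ is not needed at this stage and is kept only for coherence with subsequent statements.
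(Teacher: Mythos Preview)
Your proof is correct and takes a genuinely different route from the paper. The paper argues via the \emph{dual} formulation~\cref{eq:L_dual}: for the liminf bound it fixes a near-optimal dual variable $\alpha_k$ for $\Ll^0_f(\mu)$, uses it as a test function in the dual of $\Ll^{\lambda_n}_f(\mu_n)$, and controls the error $\int f^*(\Fmap^\top\alpha_k)\,\d(\mu_n-\mu)$ by $\|f^*(\Fmap^\top\alpha_k)\|_{\Cc^{0,1}}\Ww_1(\mu_n,\mu)$, finishing with a diagonal extraction. This is exactly where the hypotheses $\fmap\in L^2(\rho,\Cc^{0,1})$ and $f^*\in\Cc^{0,1}_\loc$ are used. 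You instead work on the \emph{primal} side: extract the minimizers $\nu_n$ from~\cref{eq:L_measure}, use superlinearity of $f$ for tightness, pass to a narrow limit $\nu$, identify $\Fmap\star\nu=Y$ via dominated convergence, and conclude with the joint lower-semicontinuity of $\D_f$ already established in~\cref{lem:L_convexity}. Your argument is the more standard $\Gamma$-convergence pattern (compactness of near-minimizers plus lsc) and, as you correctly observe, it needs only $\fmap\in L^2(\rho,\Cc^0)$ and no regularity on $f^*$; the paper's dual route is more in keeping with its overall emphasis on the dual representation but is heavier in assumptions for this particular lemma.
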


\begin{proof}
	For the second part of the result it suffices to consider the constant sequence $\mu_n = \mu$ for every $n \geq 0$. Indeed, it then directly follows from the definition of $\Ll^\lambda_f$ and $\Ll^0_f$ in~\cref{eq:L_lambda_f} and~\cref{eq:L0_f} that $\Ll^\lambda_f(\mu) \leq \Ll^0_f(\mu)$ for every $\lambda >0$.
	
	To prove the first part of the definition, consider a sequence $(\mu_n)_{n \geq 0}$ converging narrowly to $\mu$ in $\Pp(\Om)$.
	By the dual formulation of $\Ll^0_f$ in~\cref{eq:L_dual}, we can also consider a sequence $(\alpha_k)_{k \geq 0}$ in $L^2(\rho)$ such that:
	\begin{align*}
		- \int_\Om f^*(\Fmap^\top \alpha_k) \d \mu + \left< \alpha_k , Y \right> \xrightarrow{k \to \infty} \Ll^0_f(\mu).
	\end{align*}
	Then, by the dual formulation of $\Ll^\lambda_f$ in~\cref{eq:L_dual}, for every $n,k \geq 0$:
	\begin{align*}
		\Ll^{\lambda_n}(\mu_n) & \geq - \int_\Om f^*(\Fmap^\top \alpha_k) \d \mu_n + \left< \alpha_k, Y \right> - \frac{\lambda_n}{2} \| \alpha_k \|^2_{L^2{\rho}} \\
		& = - \int_\Om f^*(\Fmap^\top \alpha_k) \d (\mu_n - \mu) - \frac{\lambda_n}{2} \| \alpha_k \|^2_{L^2{\rho}} - \int_\Om f^*(\Fmap^\top \alpha_k) \d \mu + \left< \alpha_k, Y \right>  \\
		& \geq - \left\| f^*(\Fmap^\top \alpha_k) \right\|_{\Cc^{0,1}} \Ww_1(\mu_n, \mu) - \frac{\lambda_n}{2} \| \alpha_k \|^2_{L^2{\rho}} - \int_\Om f^*(\Fmap^\top \alpha_k) \d \mu + \left< \alpha_k, Y \right>  .
	\end{align*}
	But then, since $\Ww_1(\mu_n, \mu) \to 0$ and $\lambda_n \to 0$, one can find an increasing sequence $(k_n)_{n \geq 0}$ s.t.\@:
	\begin{align*}
	    \lambda_n \| \alpha_{k_n} \|^2_{L^2(\rho)} \xrightarrow{n \to +\infty} 0 \quad \text{and} \quad \left\| f^*(\Fmap^\top \alpha_{k_n}) \right\|_{\Cc^{0,1}} \Ww_1(\mu_n, \mu) \xrightarrow{n \to +\infty} 0 .
	\end{align*}
	Thus $\Ll^{\lambda_n}_f(\mu_n) \geq \Ll^0_f(\mu) + o(1)$ for every $n \geq 0$ and the result follows.
\end{proof}

It is a direct consequence of the above $\Gamma$-convergence result that the limit when $\lambda \to 0^+$ of a sequence of minimizers of $\Ll^\lambda_f$ is a minimizer of $\Ll^0_f$~\cite[Prop. 7.5]{santambrogio2023course}.
In the case where $\Ll^\lambda_f$ has a unique minimizer $\Bar{\mu}^0_f$, this implies every sequence of minimizers of $\Ll^\lambda_f$ converges to $\Bar{\mu}^0_f$ when $\lambda \to 0^+$.

\begin{prop}[Convergence of minimizers] \label{prop:convergence_minimizers}
	 Assume the result of~\cref{lem:gamma} holds.
	 For every $\lambda >  0$, consider $\Bar{\mu}^\lambda_f \in \argmin \Ll^\lambda_f$ and assume $ \Bar{\mu}^\lambda_f \xrightarrow{\lambda \to 0^+} \mu$.
	 Then $\mu \in \argmin \Ll^0_f$.
	 Notably, if $\Ll^0_f$ has a unique minimizer $\Bar{\mu}^0_f$, then $\Bar{\mu}^\lambda_f \xrightarrow{\lambda \to 0^+} \Bar{\mu}^0_f$.
\end{prop}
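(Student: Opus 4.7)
The plan is to derive this directly from the $\Gamma$-convergence of $(\Ll^\lambda_f)_{\lambda > 0}$ to $\Ll^0_f$ established in \cref{lem:gamma}, following the classical scheme that $\Gamma$-convergence implies convergence of minimizers. Fix a sequence $\lambda_n \to 0^+$ so that $\Bar{\mu}^{\lambda_n}_f \to \mu$ narrowly, and fix an arbitrary test measure $\nu \in \Pp(\Om)$; the goal is to show $\Ll^0_f(\mu) \leq \Ll^0_f(\nu)$.

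First I would apply the \emph{liminf} part of \cref{lem:gamma} to the sequence $\Bar{\mu}^{\lambda_n}_f \to \mu$, obtaining
\begin{align*}
    \Ll^0_f(\mu) \leq \liminf_{n \to \infty} \Ll^{\lambda_n}_f(\Bar{\mu}^{\lambda_n}_f).
\end{align*}
Next I would invoke the \emph{limsup} (recovery sequence) part of \cref{lem:gamma} at $\nu$, producing a sequence $(\nu_n)_{n \geq 0}$ converging narrowly to $\nu$ such that $\limsup_{n \to \infty} \Ll^{\lambda_n}_f(\nu_n) \leq \Ll^0_f(\nu)$. The key inequality is then the optimality of $\Bar{\mu}^{\lambda_n}_f$ for $\Ll^{\lambda_n}_f$, which gives $\Ll^{\lambda_n}_f(\Bar{\mu}^{\lambda_n}_f) \leq \Ll^{\lambda_n}_f(\nu_n)$ for every $n$. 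Chaining these three inequalities yields
\begin{align*}
    \Ll^0_f(\mu) \leq \liminf_{n \to \infty} \Ll^{\lambda_n}_f(\Bar{\mu}^{\lambda_n}_f) \leq \limsup_{n \to \infty} \Ll^{\lambda_n}_f(\nu_n) \leq \Ll^0_f(\nu),
\end{align*}
and since $\nu \in \Pp(\Om)$ was arbitrary, $\mu \in \argmin \Ll^0_f$.

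For the final clause, suppose $\Bar{\mu}^0_f$ is the unique minimizer of $\Ll^0_f$. Since $\Om$ is compact, $\Pp(\Om)$ is narrowly compact, so any sequence $\lambda_n \to 0^+$ admits a subsequence along which $\Bar{\mu}^{\lambda_n}_f$ converges narrowly to some $\mu \in \Pp(\Om)$; by the preceding argument $\mu = \Bar{\mu}^0_f$. Since every subsequence has a further subsequence converging to the same limit $\Bar{\mu}^0_f$, the full family $\Bar{\mu}^\lambda_f$ converges narrowly to $\Bar{\mu}^0_f$ as $\lambda \to 0^+$. There is no genuine obstacle here: the proposition is a textbook corollary of $\Gamma$-convergence combined with compactness of $\Pp(\Om)$, and the only minor care needed is to extract the recovery sequence $(\nu_n)$ at the competitor $\nu$ rather than at the limit $\mu$ itself.
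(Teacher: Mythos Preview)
Your proposal is correct and follows exactly the standard argument that the paper invokes: the paper does not spell out a proof but simply observes that this is a direct consequence of the $\Gamma$-convergence in \cref{lem:gamma}, citing \cite[Prop.~7.5]{santambrogio2023course}. You have unpacked precisely that textbook argument (liminf inequality at the limit, recovery sequence at an arbitrary competitor, optimality of the $\Bar{\mu}^{\lambda_n}_f$, then compactness of $\Pp(\Om)$ for the uniqueness clause), so there is no substantive difference between your approach and the paper's.
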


\section{Training with gradient flow} \label{sec:training}

In the rest of this work we consider the optimization over the feature distribution $\mu \in \Pp(\Om)$ for the minimization of the reduced risk $\Ll^\lambda_f$, for $\lambda \geq 0$.
Specifically, we consider a \emph{gradient flow} algorithm.
In the case of a finite number of features $\{ \om_i \}_{1 \leq i \leq M} \in \Om^M$ such gradient flow is defined as the solution of the equation:
\begin{align} \label{eq:empirical_gradient_flow}
    \forall i \in \lbrace 1, ..., M \rbrace, \quad \frac{\d}{\d t} \om_i(t) = - M \nabla_{\om_i} \hat{\Ll}^\lambda_f ( \lbrace \om_i(t) \rbrace_{1 \leq i \leq M} )
\end{align}
where $\Hat{\Ll}^\lambda_f(\lbrace \om_i \rbrace_{1 \leq i \leq M}) \eqdef \Ll^\lambda_f(\Hat{\mu})$ and $\Hat{\mu}$ is the empirical distribution $\Hat{\mu} = \frac{1}{M} \sum_{i=1}^M \delta_{\om_i}$.
More generally, in terms of the feature distribution $\mu \in \Pp(\Om)$, the above equation corresponds to a \emph{Wasserstein gradient flow} over the functional $\Ll^\lambda_f$, namely:
\begin{align} \label{eq:general_wasserstein_flow}
    \partial_t \mu_t - \div \left( \mu_t \nabla \frac{\delta \Ll^\lambda_f}{\delta \mu}[\mu_t] \right) = 0, \quad \text{on  $(0, \infty) \times \Om$,}
\end{align}
where for $\mu \in \Pp(\Om)$, $\frac{\delta \Ll^\lambda_f}{\delta \mu}[\mu]$ is the \emph{Fréchet differential} of $\Ll^\lambda_f$ at $\mu$~\cite[Def. 7.12]{santambrogio2015optimal}.
Importantly, \textcite{jordan1998variational} have shown that Wasserstein gradient flows can be obtained as limits of proximal update schemes when the discretization step tends to $0$.
Here, the curve $(\mu_t)_{t \geq 0}$ is the limit of the piecewise-constant curve with values $(\mu^\tau_k)_{k \geq 0}$
where, given a time-step $\tau > 0$ and an initialization $\mu^\tau_0 \in \Pp(\Om)$, the sequence $(\mu^\tau_k)_{k \geq 0}$ is defined recursively by:
\begin{align} \label{eq:proximal step}
	\forall k \geq 0, \quad \mu^\tau_{k+1} \in \argmin_{\mu \in \Pp(\Om)} \Ll^\lambda_f(\mu) + \frac{1}{2 \tau} \Ww_2(\mu, \mu^\tau_k)^2.
\end{align}
We study in this section the well-posedness of the above Wasserstein gradient flow equation by distinguishing the case where $\lambda > 0$ and the case $\lambda = 0$.
In the latter case, we show the Wasserstein gradient flow corresponds to a \emph{weighted ultra-fast diffusion equation}~\cite{iacobelli2019weighted}.

\subsection{Wasserstein gradient flows in the case $\lambda > 0$}

In the case where $\lambda > 0$, the presence of the regularization induces sufficient regularity on the objective to study the training dynamic through the lens of classical results from the theory of gradient flows in the Wasserstein space~\cite{ambrosio2008gradient,santambrogio2017euclidean}.
In particular, one can derive the gradient flow equation leveraging the dual representation of $\Ll^\lambda_f$.
Indeed, \cref{eq:L_dual} expresses $\Ll^\lambda_f$ as a maximum over linear functionals, and thus by the envelope theorem one can formally differentiate $\Ll^\lambda_f$ w.r.t.\@ $\mu$ and obtain the Fréchet differential:
\begin{align*}
	 \frac{\delta \Ll^\lambda_f}{\delta \mu}[\mu](\om) = -  f^* (\Fmap^\top \alpha^\lambda_f[\mu]) (\om)
\end{align*}
with $\alpha^\lambda_f[\mu] \in L^2(\rho)$ the maximizer in~\cref{eq:L_dual}.
We show that the gradient field of this potential indeed defines a notion of ``gradient'' for the functional $\Ll^\lambda_f$ w.r.t.\@ the Wasserstein topology on $\Pp(\Om)$.

Locally absolutely continuous curves $(\mu_t)_{t \in (0, +\infty)}$ in the space $\Pp(\Om)$, equipped with the Wasserstein distance $\Ww_2$, are characterised as solutions to the continuity equation:
\begin{align} \label{eq:continuity}
    \partial_t \mu_t + \div (\mu_t v_t) = 0 \quad \text{on $(0, +\infty) \times \Om$}
\end{align}
for some velocity field $v$ such that $\| v_t \|_{L^2(\mu_t)} \in L^1_\loc((0, +\infty))$~\cite[Thm. 5.14]{santambrogio2015optimal}.
This equation has to be understood in the sense of distributions, that is in duality with the set $\Cc^\infty_c((0,+\infty) \times \Om)$ of compactly supported test functions, i.e.:
\begin{align} \label{eq:continuity_weak}
	    \int_0^1 \int_\Om \left( \partial_t \varphi + \left< \nabla \varphi, v_t \right> \right) \d \mu_t \d t = 0, \quad \forall \varphi \in \Cc^\infty_c((0, +\infty) \times \Om). 
\end{align}
The following result shows that the functional $\Ll^\lambda_f(\mu_t)$ is differentiable along those curves and expresses its derivative in terms of the gradient field $\nabla \frac{\delta \Ll^\lambda_f}{\delta \mu}$.

\begin{lem}[Wasserstein chain rule for $\Ll^\lambda_f$] \label{lem:chain_rule}
	Assume $\fmap \in L^2(\rho,\Cc^1)$, $f$ satisfies~\cref{ass:regularization} with $f^* \in \Cc^1_\loc(\RR)$ and consider $\lambda > 0$.
	Let $(\mu_t)_{t \in (0, +\infty)}$ be a locally absolutely continuous curve in $\Pp(\Om)$ solution of the continuity equation~\cref{eq:continuity_weak} for a velocity field $v$ s.t.\@ $\| v_t \|_{L^2(\mu_t)} \in L^1_\loc((0, +\infty))$.
	Then $(\Ll^\lambda_f(\mu_t))_{t \in (0,+\infty)}$ is locally absolutely continuous and for a.e.\@ $t', t \in (0, +\infty)$:
	\begin{align*}
		\Ll^\lambda_f(\mu_{t'}) - \Ll^\lambda_f(\mu_t) = \int_t^{t'} \left< \nabla \Ll^\lambda_f [\mu_s], v_s \right>_{L^2(\mu_s)} \d s \, ,
	\end{align*}
	where for $\mu \in \Pp_2(\Om)$ the velocity field $\nabla \Ll^\lambda_f[\mu] \in L^2(\mu)$ is defined by:
	\begin{align*}
	    \forall \om \in \Om, \quad \nabla \Ll^\lambda_f[\mu](\om) \eqdef - \nabla \left( f^* (\Fmap^\top \alpha^\lambda_f[\mu]) \, , \right)(\om)
	\end{align*}
	with $\alpha^\lambda_f[\mu]$ the maximizer in~\cref{eq:L_dual}.
\end{lem}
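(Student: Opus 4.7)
The strategy is to exploit the dual formulation \cref{eq:L_dual}, which presents $\Ll^\lambda_f(\mu)$ as the supremum over $\alpha \in L^2(\rho)$ of a functional affine in $\mu$. Setting $g_\mu \eqdef f^*(\Fmap^\top \alpha^\lambda_f[\mu])$, the attainment statement of \cref{prop:duality} yields the envelope sandwich
\begin{align*}
    -\int_\Om g_\mu \, \d(\mu' - \mu) \;\leq\; \Ll^\lambda_f(\mu') - \Ll^\lambda_f(\mu) \;\leq\; -\int_\Om g_{\mu'} \, \d(\mu' - \mu)
\end{align*}
for all $\mu, \mu' \in \Pp(\Om)$. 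The plan is then: (i) check that the potentials $g_\mu$ are uniformly $\Cc^1$ regular; (ii) establish continuity of $\mu \mapsto \alpha^\lambda_f[\mu]$ from narrow convergence to strong $L^2(\rho)$ convergence; (iii) deduce absolute continuity of $t \mapsto \Ll^\lambda_f(\mu_t)$ from the envelope; (iv) derive the chain-rule formula via telescoping on a fine partition of $[t, t']$ combined with the distributional continuity equation \cref{eq:continuity_weak} tested against each $g_{\mu_{t_k}}$.

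Regularity and a uniform bound on $g_\mu$ are immediate. Since $\fmap \in L^2(\rho, \Cc^1)$, the operator $\Fmap^\top$ is bounded from $L^2(\rho)$ into $\Cc^1(\Om)$, so composition with $f^* \in \Cc^1_\loc(\RR)$ gives $g_\mu \in \Cc^1(\Om)$ with $\nabla g_\mu = (f^*)'(\Fmap^\top \alpha^\lambda_f[\mu])\, \nabla (\Fmap^\top \alpha^\lambda_f[\mu])$. From \cref{eq:L_alpha_u} and nonnegativity of $f$ one has $\tfrac{\lambda}{2} \|\alpha^\lambda_f[\mu]\|^2_{L^2(\rho)} \leq \Ll^\lambda_f(\mu)$, while testing any constant $u \equiv c \in \dom(f)$ in \cref{eq:L_lambda_f} gives a $\mu$-independent upper bound on $\Ll^\lambda_f(\mu)$. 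Hence $\|\alpha^\lambda_f[\mu]\|_{L^2(\rho)}$, and so $\|g_\mu\|_{\Cc^1}$, is bounded by some constant $C_\lambda$. As each $g_\mu$ is $C_\lambda$-Lipschitz, the envelope yields $|\Ll^\lambda_f(\mu') - \Ll^\lambda_f(\mu)| \leq C_\lambda \Ww_1(\mu, \mu') \leq C_\lambda \Ww_2(\mu, \mu')$, which makes $t \mapsto \Ll^\lambda_f(\mu_t)$ absolutely continuous on $[0,1]$.

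The main obstacle is the continuity of $\mu \mapsto \alpha^\lambda_f[\mu]$. Given $\mu_n \to \mu$ narrowly, the uniform bound allows extracting a weak subsequential limit $\alpha_n \rightharpoonup \alpha^\star$ in $L^2(\rho)$. The key observation is that $\Fmap^\top : L^2(\rho) \to \Cc^0(\Om)$ is \emph{compact}: the set $\{\Fmap^\top \alpha : \|\alpha\|_{L^2(\rho)} \leq C_\lambda\}$ is bounded in $\Cc^1(\Om)$, hence precompact in $\Cc^0(\Om)$ by Arzelà-Ascoli, so $\Fmap^\top \alpha_n \to \Fmap^\top \alpha^\star$ uniformly on $\Om$. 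Combined with narrow convergence of $\mu_n$ and local Lipschitzness of $f^*$, this gives $\int_\Om f^*(\Fmap^\top \alpha_n)\, \d \mu_n \to \int_\Om f^*(\Fmap^\top \alpha^\star)\, \d \mu$. Passing to the limit in the dual objective \cref{eq:L_dual} and invoking the strict concavity from the $-\tfrac{\lambda}{2}\|\cdot\|^2$ term -- which makes the dual maximizer unique -- identifies $\alpha^\star = \alpha^\lambda_f[\mu]$; matching the limits of the dual values then forces $\|\alpha_n\|_{L^2(\rho)} \to \|\alpha^\star\|_{L^2(\rho)}$, upgrading weak to strong convergence. Because $\Fmap^\top$ is bounded into $\Cc^1(\Om)$, this transfers to $\nabla g_{\mu_n} \to \nabla g_\mu$ uniformly on $\Om$.

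For the derivative formula, fix $t < t'$, a partition $t = t_0 < \dots < t_N = t'$ with mesh tending to $0$, and telescope the envelope lower bound to get
\begin{align*}
    \Ll^\lambda_f(\mu_{t'}) - \Ll^\lambda_f(\mu_t) \;\geq\; -\sum_{k=0}^{N-1} \int_\Om g_{\mu_{t_k}}\, \d(\mu_{t_{k+1}} - \mu_{t_k}).
\end{align*}
Since $g_{\mu_{t_k}} \in \Cc^1(\Om)$, testing \cref{eq:continuity_weak} against a time-cutoff approximating the indicator of $[t_k, t_{k+1}]$ times $g_{\mu_{t_k}}$ gives, after using $\nabla g_\mu = -\nabla \Ll^\lambda_f[\mu]$,
\begin{align*}
    -\int_\Om g_{\mu_{t_k}}\, \d(\mu_{t_{k+1}} - \mu_{t_k}) \;=\; \int_{t_k}^{t_{k+1}} \langle \nabla \Ll^\lambda_f[\mu_{t_k}], v_s \rangle_{L^2(\mu_s)}\, \d s.
\end{align*}
Summing and introducing the step function $\sigma_N(s) \eqdef t_k$ for $s \in [t_k, t_{k+1})$ rewrites the lower bound as $\int_t^{t'} \langle \nabla \Ll^\lambda_f[\mu_{\sigma_N(s)}], v_s\rangle_{L^2(\mu_s)}\, \d s$. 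As $N \to \infty$, the narrow continuity $\mu_{\sigma_N(s)} \to \mu_s$ combined with the previous paragraph yields $\nabla \Ll^\lambda_f[\mu_{\sigma_N(s)}] \to \nabla \Ll^\lambda_f[\mu_s]$ uniformly on $\Om$, and dominated convergence with the bound $\|\nabla \Ll^\lambda_f[\mu]\|_\infty \leq C_\lambda$ and the hypothesis $\|v_s\|_{L^2(\mu_s)} \in L^1((0,1))$ passes to the limit. The upper envelope treated identically with $\sigma_N(s) = t_{k+1}$ furnishes the reverse inequality, closing the proof.
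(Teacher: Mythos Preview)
Your proof is correct and follows a genuinely different route from the paper's argument. The paper, after establishing the same uniform bound $\|\alpha^\lambda_f[\mu]\|_{L^2(\rho)} \leq R_\lambda$ (via $u=0$ in \cref{eq:L_lambda_f}), simply checks that each linear-in-$\mu$ functional $\Vv_\alpha(\mu_t) = -\int_\Om f^*(\Fmap^\top \alpha)\,\d\mu_t$ is absolutely continuous along the curve with a $t$-integrable, $\alpha$-uniform derivative bound, and then invokes the Milgrom--Segal envelope theorem \cite[Thm.~2]{milgrom2002envelope} as a black box. No continuity of the maximizer is needed there: the envelope theorem only requires equi-absolute continuity of the family $t \mapsto \Vv_\alpha(\mu_t)$ over the (restricted) index set.

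Your approach is more hands-on: you prove the envelope sandwich directly, establish narrow-to-strong continuity of $\mu \mapsto \alpha^\lambda_f[\mu]$ (exploiting compactness of $\Fmap^\top$ via the $\Cc^1 \hookrightarrow \Cc^0$ embedding and strict concavity of the dual), and then pass to the limit in a telescoped Riemann-type sum. This is longer but fully self-contained, and it yields as a by-product the continuity of the dual maximizer---a fact the paper does not state but which is independently useful (e.g., it would simplify parts of the later convergence analysis in \cref{sec:convergence}). The paper's route is shorter and cleaner if one is willing to cite the envelope theorem; yours is the natural elementary substitute when one is not.
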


\begin{proof}
    Consider the dual formulation of $\Ll^\lambda_f$ in~\cref{eq:L_dual}.
    For every $\mu \in \Pp(\Om)$ we have:
    \begin{align*}
        \Ll^\lambda_f(\mu) = \sup_{\alpha \in L^2(\rho)} \Vv_\alpha(\mu) - \frac{\lambda}{2} \| \alpha \|^2 + \left< \alpha , Y \right>,
    \end{align*}
    where for $\alpha \in L^2(\rho)$ we defined:
    \begin{align*}
        \Vv_\alpha(\mu) \eqdef - \int_\Om f^* (\Fmap^\top \alpha)(\om) \d \mu(\om).
    \end{align*}
    In particular, at fixed $\alpha \in L^2(\rho)$, it follows from the assumptions on $\fmap$ and $f^*$ that the potential $f^*(\Fmap^\top \alpha)$ is in $\Cc^1(\Om)$ with $\|  f^*(\Fmap^\top \alpha) \|_{\Cc^1} \leq C(\| \alpha \|_{L^2(\rho)})$ for some continuous function $C$.
    Thus, by properties of the continuity equation, $\Vv_\alpha(\mu_t)$ is locally absolutely continuous and its derivative is given for a.e.\@ $t \in (0, +\infty)$ by:
    \begin{align*}
        \frac{\d}{\d t} \Vv_\alpha(\mu_t) = - \int_\Om \left< \nabla f^* ( \Fmap^\top \alpha) , v_t \right> \d \mu_t.
    \end{align*}
    Moreover, for $\mu \in \Pp(\Om)$, using~\cref{eq:L_alpha_u} and the fact that $\Ll^\lambda_f \leq \frac{1}{2 \lambda} \| Y \|^2_{L^2(\rho)} + f(0)$ (by taking $u = 0$ in~\cref{eq:L_lambda_f}) we have at the optimum in~\cref{eq:L_dual} that
    $$\| \alpha^\lambda_f[\mu] \|_{L^2(\rho)} \leq \lambda^{-1} \left( \| Y \|^2_{L^2(\rho)} + \lambda f(0) \right)^{1/2} \defeq R_\lambda$$
    Thus, $\Ll^\lambda_f$ is equivalently defined by restricting the supremum to $\alpha \in L^2(\rho)$ such that $\| \alpha \|_{L^2(\rho)} \leq R_\lambda$.
    For such $\alpha$ we have $\left| \frac{\d}{\d t} \Vv_\alpha(\mu_t)  \right| \leq C'$ for some constant $C' = C'(f^*, \lambda)$ independent of $\alpha$.
    Thus we can apply the envelope theorem in~\cite[Thm. 2]{milgrom2002envelope}, which shows the desired result.
\end{proof}

The preceding result has defined a notion of gradient field for the functional $\Ll^\lambda_f$.
One can thus define gradient flows of $\Ll^\lambda_f$ for the $\Ww_2$ metric as the curves solution to the continuity equation:
\begin{align} \label{eq:grad_flow_lambda}
		\partial_t \mu_t - \div (\mu_t \nabla \Ll^\lambda_f[\mu_t]) = 0  \quad \text{on $(0, +\infty) \times \Om$}.
\end{align}
We make the following definition:
\begin{defn}[Gradient flow of $\Ll^\lambda_f$] \label{def:grad_flow_lambda}
	Let $\mu_0 \in \Pp_2(\Om)$. We say $(\mu_t)_{t \geq 0}$ is a gradient flow for $\Ll^\lambda_f$ starting from $\mu_0$ if it is a locally absolutely continuous curve on $(0, +\infty)$ s.t.\@ $\lim_{t \to 0^+} \mu_t = \mu_0$ and if it satisfies the continuity equation~\cref{eq:grad_flow_lambda} in the sense of distribution, i.e.:
	\begin{align} \label{eq:grad_flow_lambda_weak}
	    \int_0^\infty \int_\Om \left( \partial_t \varphi - \nabla \varphi \cdot \nabla \Ll^\lambda_f[\mu_t] \right) \d \mu_t \d t = 0, \quad \forall \varphi \in \Cc^\infty_c((0, +\infty) \times \Om). 
	\end{align}
\end{defn}

\begin{rem}[Boundary conditions]
    Note that, in the case where $\Om$ is a closed, bounded, smooth and convex domain, our definition~\cref{eq:continuity_weak} of solutions to the continuity equation enforces no-flux conditions on the boundary $\partial \Om$.
    Indeed we consider test function $\varphi \in \Cc^\infty_c((0, +\infty) \times \Om)$ that can be supported on the whole domain $\Om$ (which is always assumed closed).
    Thus, \cref{eq:continuity_weak} enforces $\left< \mu_t v_t, \Vec{n} \right> = 0$ in the sense of distribution, where $\Vec{n}$ is the outer normal vector to the boundary $\partial \Om$.
    
    In case of the gradient flow equation~\cref{eq:grad_flow_lambda_weak}, this boundary condition is for example satisfied if one assumes $ \left< \nabla_\om \fmap(\om, x), \Vec{n} \right> = 0$ for every $x \in \RR^d$ and every $\om \in \partial \Om$.
    Another way of ensuring the no-flux condition is to remove the outer part of the gradient field $\nabla \Ll^\lambda_f$ on the boundary $\partial \Om$, which can be performed by clipping the features.
\end{rem}

\paragraph{Well-posedness of the gradient flow equation}

To show the well-posedness of gradient flows, we rely on convexity properties of the functional $\Ll^\lambda_f$.
Indeed, by the dual formulation in~\cref{eq:L_dual}, we can express $\Ll^\lambda_f$ as a supremum over semiconvex functionals.
As a consequence, the~\cref{lem:semi_convexity} below shows that, for $\lambda > 0$, $\Ll^\lambda_f$ is semiconvex along (generalized) geodesics of the Wasserstein space (see \cite[Def. 9.2.4]{ambrosio2008gradient} for the definition of \emph{generalized geodesics}).
However, note that such an argument can not be extended to the case $\lambda = 0$ since the semiconvexity constant blows-up when $\lambda \to 0^+$. For example, in the case $f(t) = |t|^2$ this constant scales as $\lambda^{-2}$.

\begin{lem}[Geodesic semiconvexity] \label{lem:semi_convexity}
Assume $\fmap \in L^2(\rho,\Cc^{1, 1})$, $f$ satisfies~\cref{ass:regularization} with $f^* \in \Cc^{1,1}_\loc(\RR)$ and let $\lambda > 0$. Then $\Ll^\lambda_f$ is $C$-semiconvex along (generalized) geodesics for some constant $C = C(f^*, \lambda)$.
\end{lem}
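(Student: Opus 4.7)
The plan is to exploit the dual representation of $\Ll^\lambda_f$ given in~\cref{eq:L_dual}, which expresses $\Ll^\lambda_f(\mu)$ as a supremum over $\alpha \in L^2(\rho)$ of functionals that are affine in $\mu$. Specifically, setting $V_\alpha \eqdef - f^*(\Fmap^\top \alpha)$, we have
\begin{align*}
	\Ll^\lambda_f(\mu) = \sup_{\alpha \in L^2(\rho)} \left[\int_\Om V_\alpha \d \mu + c_\alpha\right], \quad c_\alpha \eqdef \langle \alpha, Y \rangle - \tfrac{\lambda}{2} \| \alpha \|^2_{L^2(\rho)}.
\end{align*}
Each summand inside the sup is a linear potential energy (plus a constant) and it is classical that such a functional is $(-L)$-convex along (generalized) Wasserstein geodesics as soon as the potential $V_\alpha$ belongs to $\Cc^{1,1}(\Om)$ with $L$-Lipschitz gradient. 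Since a supremum of uniformly $C$-semiconvex functionals is itself $C$-semiconvex (along any fixed geodesic), it will suffice to control $\| V_\alpha \|_{\Cc^{1,1}}$ uniformly in $\alpha$.

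To do this, I would first use the a priori bound established in the proof of~\cref{lem:chain_rule}, namely $\| \alpha^\lambda_f[\mu] \|_{L^2(\rho)} \leq R_\lambda$, which allows one to restrict the dual supremum to the ball $\{\alpha : \|\alpha\|_{L^2(\rho)} \leq R_\lambda\}$ without changing the value of $\Ll^\lambda_f$. For such $\alpha$, the Böchner integral estimate recalled in the preliminaries gives $\| \Fmap^\top \alpha \|_{\Cc^{1,1}} \leq R_\lambda \, \| \fmap \|_{L^2(\rho, \Cc^{1,1})}$, so in particular the range of $\Fmap^\top \alpha$ over the compact domain $\Om$ lies in a fixed compact interval $I_\lambda \subset \RR$ independent of $\alpha$. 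Composing with $f^* \in \Cc^{1,1}_{\mathrm{loc}}(\RR)$, which is then $\Cc^{1,1}$ on $I_\lambda$ with controlled norm, and applying the chain rule yields a uniform bound $\| V_\alpha \|_{\Cc^{1,1}} \leq L$ for some constant $L = L(f^*, \lambda)$.

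With this in hand, $\nabla V_\alpha$ is $L$-Lipschitz uniformly in $\alpha$. Taking any Wasserstein geodesic $\mu_t = ((1-t)\mathrm{id} + t T)_\# \mu_0$, differentiating $t \mapsto \int_\Om V_\alpha \d \mu_t$ twice and using the distributional lower bound $\nabla^2 V_\alpha \geq -L \,\mathrm{Id}$ gives $\frac{\d^2}{\d t^2} \int_\Om V_\alpha \d \mu_t \geq -L \, \Ww_2(\mu_0, \mu_1)^2$; the extension to generalized geodesics based at a measure $\nu$ is obtained by the same computation along the interpolation of optimal maps $(1-t) T^\nu_{\mu_0} + t T^\nu_{\mu_1}$ (see~\cite[Prop. 9.3.2]{ambrosio2008gradient}). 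Passing to the sup over $\alpha$ in the resulting inequality transfers the bound to $\Ll^\lambda_f$ and shows it is $(-L)$-semiconvex along generalized geodesics, with $C \eqdef -L = C(f^*, \lambda)$.

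The main technical step is the uniform $\Cc^{1,1}$ control on $V_\alpha$, which depends crucially on the finite radius $R_\lambda$: since $R_\lambda \to \infty$ as $\lambda \to 0^+$, the constant $L$ blows up, which explains the dependence $C = C(f^*, \lambda)$ and confirms that this convexity-based strategy cannot be extended to the unregularized case $\lambda = 0$ treated separately below.
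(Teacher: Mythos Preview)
Your proposal is correct and follows essentially the same approach as the paper: use the dual formulation~\cref{eq:L_dual} to write $\Ll^\lambda_f$ as a supremum of potential energies, restrict the supremum to the ball $\|\alpha\|_{L^2(\rho)}\le R_\lambda$, bound $\|f^*(\Fmap^\top\alpha)\|_{\Cc^{1,1}}$ uniformly on that ball (you are a bit more explicit about the compact range $I_\lambda$ needed to exploit $f^*\in\Cc^{1,1}_{\loc}$), and invoke \cite[Prop.~9.3.2]{ambrosio2008gradient} together with stability of semiconvexity under suprema. The only cosmetic difference is that you spell out the second-derivative computation along (generalized) geodesics and the blow-up of $R_\lambda$ as $\lambda\to0^+$, whereas the paper simply cites the AGS result and mentions the blow-up in the surrounding text.
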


\begin{proof}
	Consider the dual formulation of $\Ll^\lambda_f$ in~\cref{eq:L_dual}.
	For every $\mu \in \Pp(\Om)$ we have:
	\begin{align*}
		\Ll^\lambda_f(\mu) = \sup_{\alpha \in L^2(\rho)} - \int_\Om f^* (\Fmap^\top \alpha)(\om) \d \mu(\om) - \frac{\lambda}{2} \| \alpha \|^2 + \left< \alpha , Y \right>,
	\end{align*}
	Then, at fixed $\alpha \in L^2(\rho)$, it follows from the assumptions on $\fmap$ that $\Fmap^\top \alpha \in \Cc^{1,1}(\Om)$ with $$\| \Fmap^\top \alpha \|_{\Cc^{1,1}} \leq \| \alpha \|_{L^2(\rho)} \| \fmap \|_{L^2(\rho, \Cc^{1,1})}.$$
	Then, from the assumptions on $f^*$, the composition $f^*(\Fmap^\top \alpha)$ is also in $\Cc^{1, 1}(\Om)$ and by \cite[Prop.9.3.2]{ambrosio2008gradient} the functional $\mu \mapsto \int_\Om f^* (\Fmap^\top \alpha) \d \mu$ is $C$-semiconvex along generalized geodesics for some constant $C = C(f^*, \| \alpha \|_{L^2(\rho)} \| \fmap \|_{L^2(\rho, \Cc^{1,1})})$.
	Moreover, similarly as in the proof of~\cref{lem:chain_rule}, one can restrict the definition of $\Ll^\lambda_f$ to the supremum over $\alpha \in L^2(\rho)$ with $\| \alpha \|_{L^2(\rho)} \leq R_\lambda$.
    The result then follows by taking a supremum over (uniformly) semiconvex functionals.
\end{proof}

The semiconvexity of $\Ll^\lambda_f$ along generalized geodesics ensures the existence and uniqueness of gradient flows in the sense of~\cref{def:grad_flow_lambda}.
	
\begin{thm}[Well-posedness of the gradient flow equation for $\lambda > 0$] \label{thm:wellposed_lambda}
	Assume the assumptions of~\cref{lem:semi_convexity} hold.
	Then for any $\lambda > 0$ and any initialization $\mu_0 \in \Pp_2(\Om)$ there exists a unique gradient flow for $\Ll^\lambda_f$ starting from $\mu_0$ in the sense of~\cref{def:grad_flow_lambda}.
	Moreover, if $(\mu_t)_{t \geq 0}, (\mu_t')_{t \geq 0}$ are gradient flows for $\Ll^\lambda_f$ with respective initializations $\mu_0, \mu_0' \in \Pp(\Om)$ then for every $t \geq 0$:
	\begin{align*}
		\Ww_2(\mu_t, \mu_t') \leq e^{C t} \Ww_2(\mu_0, \mu_0'),
	\end{align*}
	for some constant $C = C(f^*, \lambda)$.
\end{thm}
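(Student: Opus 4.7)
The plan is to apply the general theory of Wasserstein gradient flows for semi-convex functionals developed in~\cite{ambrosio2008gradient}. First I would check the standing hypotheses of~\cite[Chap.~11]{ambrosio2008gradient}: $\Ll^\lambda_f$ is proper and bounded below (since $f \geq 0$ by~\cref{ass:regularization}), narrowly lower-semicontinuous (\cref{lem:L_convexity}), and $C$-semiconvex along generalized geodesics for some $C = C(f^*,\lambda)$ (\cref{lem:semi_convexity}). These hypotheses imply, via~\cite[Thm.~11.2.1]{ambrosio2008gradient}, that for every $\mu_0 \in \Pp_2(\Om)$ the JKO scheme~\cref{eq:proximal step} admits a discrete-time solution $(\mu^\tau_k)_{k \geq 0}$ whose piecewise-constant interpolation converges, as $\tau \to 0^+$, to a locally absolutely continuous curve $(\mu_t)_{t \geq 0}$ satisfying an Evolution Variational Inequality (EVI) of parameter $C$.

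Next I would identify this abstract curve with a distributional solution of~\cref{eq:grad_flow_lambda_weak} in the sense of~\cref{def:grad_flow_lambda}. The AGS theory guarantees that $(\mu_t)$ solves a continuity equation $\partial_t \mu_t + \div(\mu_t v_t) = 0$ for a Borel velocity $v_t$ lying, for a.e. $t$, in the minimal-norm element of the reduced Wasserstein subdifferential of $\Ll^\lambda_f$ at $\mu_t$. To match $v_t$ with our explicit gradient field, I would invoke the chain rule (\cref{lem:chain_rule}): along any absolutely continuous curve passing through $\mu$ with velocity $w \in L^2(\mu)$, the derivative of $\Ll^\lambda_f$ equals $\langle \nabla \Ll^\lambda_f[\mu], w \rangle_{L^2(\mu)}$, so $\nabla \Ll^\lambda_f[\mu] = -\nabla\bigl(f^*(\Fmap^\top \alpha^\lambda_f[\mu])\bigr)$ is a strong subgradient. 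Since $\fmap \in L^2(\rho, \Cc^{1,1})$ and $f^* \in \Cc^{1,1}_\loc$, this field is a gradient of a $\Cc^{1,1}$ potential and therefore lies in the tangent cone $\overline{\{\nabla \varphi : \varphi \in \Cc^\infty_c(\Om)\}}^{L^2(\mu)}$; by the uniqueness of the minimal-norm subgradient for semi-convex functionals it must coincide with $-v_t$ for a.e. $t$.

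Uniqueness and the contraction estimate then follow at once from the EVI characterization. Given two EVI solutions $(\mu_t), (\mu_t')$ with constant $C$, summing the two EVIs at points $\mu_t$ and $\mu_t'$ with the transport plan between them produces $\tfrac{\d}{\d t} \Ww_2^2(\mu_t, \mu_t') \leq 2 C\, \Ww_2^2(\mu_t, \mu_t')$, so that Grönwall's lemma (cf.~\cite[Thm.~11.1.4]{ambrosio2008gradient}) yields $\Ww_2(\mu_t, \mu_t') \leq e^{Ct} \Ww_2(\mu_0, \mu_0')$; uniqueness corresponds to the case $\mu_0 = \mu_0'$.

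The main obstacle is the identification step in the second paragraph: matching the abstract minimal subdifferential provided by~\cite{ambrosio2008gradient} with the explicit field $-\nabla(f^*(\Fmap^\top \alpha^\lambda_f[\mu]))$. The a priori bound $\|\alpha^\lambda_f[\mu]\|_{L^2(\rho)} \leq R_\lambda$ already exploited in~\cref{lem:chain_rule,lem:semi_convexity} is crucial here: it yields uniform $\Cc^{1,1}$ control on the potentials $f^*(\Fmap^\top \alpha^\lambda_f[\mu])$ in $\mu$, which is exactly what is needed both to pass to the limit in the JKO scheme and to guarantee that $\mu \mapsto \nabla \Ll^\lambda_f[\mu]$ is sufficiently regular to be integrated as the driving velocity of the flow.
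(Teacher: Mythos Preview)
Your proposal is correct and follows essentially the same route as the paper: invoke \cref{lem:chain_rule} to identify $-\nabla(f^*(\Fmap^\top\alpha^\lambda_f[\mu]))$ as a strong subdifferential in the sense of~\cite[Def.~10.3.1]{ambrosio2008gradient}, combine with the generalized-geodesic semiconvexity of \cref{lem:semi_convexity}, and then appeal to~\cite[Thm.~11.2.1]{ambrosio2008gradient} for existence, uniqueness and the $e^{Ct}$-contraction. The paper's proof is a terse two-sentence version of exactly this; your additional discussion of the EVI/Gr\"onwall contraction and of the identification step merely unpacks what the cited AGS results already provide.
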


\begin{proof}
    The chain rule formula established in~\cref{lem:chain_rule} shows that for every $\mu \in \Pp(\Om)$ the vector field $\Ll^\lambda_f[\mu]$ is a strong subdifferential of $\Ll^\lambda_f$ in the sense of~\cite[Def. 10.3.1 and eq. (10.3.12)]{ambrosio2008gradient}. Existence, uniqueness and contractivity properties of the gradient flow then follow from the geodesic semiconvexity of $\Ll^\lambda_f$ established in~\cref{lem:semi_convexity} and the application of~\cite[Def. 11.2.1]{ambrosio2008gradient}
\end{proof}

Finally, it is a classical property of weak solutions to continuity equations that gradient flows of $\Ll^\lambda_f$ can be represented in terms of push-forward by a flow map.

\begin{prop} \label{prop:flow_representation}
	Assume the assumptions of~\cref{lem:semi_convexity} hold.
	Then, for any $\lambda > 0$ and any initialization $\mu_0 \in \Pp_2(\Om)$, the gradient flow $(\mu_t)_{t \geq 0}$ of $\Ll^\lambda_f$ starting from $\mu_0$ satisfies $\mu_t = (X_t)_\# \mu_0$ for every $t \geq 0$, where $(X_t)_{t \geq 0}$ is the flow-map solution of the ODE:
	\begin{align*}
		\forall t \geq 0, \quad \frac{\d}{\d t} X_t =  - \nabla \Ll^\lambda_f[\mu_t] \circ X_t, \quad \text{with} \quad X_0 = \Id_\Om.
	\end{align*}
	In particular, if $(\om_i(t))_{t \geq 0}$ for $i \in \{1, ..., M \}$ are solutions to~\cref{eq:empirical_gradient_flow} then the empirical distribution $\Hat{\mu}_t \eqdef \frac{1}{M} \sum_{i=1}^M \delta_{\om_i(t)}$ is a gradient flow for $\Ll^\lambda_f$ in the sense of~\cref{def:grad_flow_lambda} and thus $\om_i(t) = X_t(\om_i(0))$ for $i \in \{ 1, ..., M \}$ and $t \geq 0$.
\end{prop}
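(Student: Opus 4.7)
The plan is to combine three ingredients: (i) uniform spatial Lipschitz regularity of the velocity field driving the continuity equation~\cref{eq:grad_flow_lambda_weak}, (ii) the Cauchy--Lipschitz theorem producing the flow map $X_t$, and (iii) uniqueness of weak solutions of the continuity equation under such regularity, which will identify the pushforward $(X_t)_\# \mu_0$ with the gradient flow $\mu_t$ obtained from~\cref{thm:wellposed_lambda}.

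First I would verify that the velocity field $v_t := -\nabla \Ll^\lambda_f[\mu_t]$ is bounded and Lipschitz in $\om$, uniformly in $t \geq 0$. Starting from the expression in~\cref{lem:chain_rule}, $v_t(\om) = \nabla(f^*(\Fmap^\top \alpha^\lambda_f[\mu_t]))(\om)$. The proof of~\cref{lem:chain_rule} already provides the a priori bound $\|\alpha^\lambda_f[\mu]\|_{L^2(\rho)} \leq R_\lambda$ with $R_\lambda$ independent of $\mu$. Under the standing assumption $\fmap \in L^2(\rho, \Cc^{1,1})$, the operator $\Fmap^\top$ sends $L^2(\rho)$ continuously into $\Cc^{1,1}(\Om)$, so $\Fmap^\top \alpha^\lambda_f[\mu_t]$ lives in a bounded ball of $\Cc^{1,1}(\Om)$ uniformly in $t$. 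Composing with $f^* \in \Cc^{1,1}_\loc(\RR)$ preserves this: the family $\{f^*(\Fmap^\top \alpha^\lambda_f[\mu_t])\}_{t \geq 0}$ is uniformly bounded in $\Cc^{1,1}(\Om)$, so $v_t$ is bounded in $\Cc^{0,1}(\Om)$ uniformly in $t$.

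Next I would invoke the non-autonomous Cauchy--Lipschitz theorem to solve $\dot X_t = v_t \circ X_t$ with $X_0 = \Id_\Om$, yielding a unique flow $X_t : \Om \to \Om$ defined for all $t \geq 0$. In the bounded convex case, the no-flux boundary condition discussed in the Remark in~\cref{sec:training} guarantees $X_t(\Om) \subset \Om$. A direct change-of-variable computation shows that the pushforward $\nu_t := (X_t)_\# \mu_0$ satisfies the very same continuity equation~\cref{eq:grad_flow_lambda_weak} as $\mu_t$, with the fixed velocity $v_t$ determined a priori by $\mu_t$. Applying the classical uniqueness result for weak solutions to a continuity equation with a Lipschitz velocity field~\cite[Prop.~8.1.8]{ambrosio2008gradient} — or equivalently the uniqueness part of~\cref{thm:wellposed_lambda} — yields $\mu_t = (X_t)_\# \mu_0$ for every $t \geq 0$. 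The empirical statement then reduces to a first-variation computation: since $M \nabla_{\om_i} \hat\Ll^\lambda_f(\{\om_j\}) = \nabla \Ll^\lambda_f[\hat\mu](\om_i)$, the particle ODE~\cref{eq:empirical_gradient_flow} rewrites as $\dot \om_i(t) = -\nabla \Ll^\lambda_f[\hat\mu_t](\om_i(t))$; testing the derivative of $\int \varphi \, d\hat\mu_t$ against $\varphi \in \Cc^\infty_c$ shows $\hat\mu_t$ obeys~\cref{eq:grad_flow_lambda_weak}, and both identities $\hat\mu_t = (X_t)_\# \hat\mu_0$ and $\om_i(t) = X_t(\om_i(0))$ follow from the uniqueness results above together with uniqueness of solutions of the particle ODE.

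The hard part will be ensuring joint measurability of $(t,\om) \mapsto v_t(\om)$ with sufficient precision to apply the non-autonomous ODE theorem: beyond the spatial Lipschitz bound, one needs at least Borel measurability in $t$, which should follow from the narrow continuity of $t \mapsto \mu_t$ (inherent to the gradient flow in $\Ww_2$) together with the stability in $\mu$ of the strictly concave dual problem~\cref{eq:L_dual}. This stability argument crucially relies on $\lambda > 0$, since the bound $R_\lambda$ and the continuity modulus both degenerate as $\lambda \to 0^+$; this is precisely why~\cref{sec:convergence} must treat the limiting regime through the separate ultra-fast diffusion framework rather than an analogous pushforward representation.
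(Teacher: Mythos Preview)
Your proposal is correct and follows essentially the same route as the paper's own proof: both establish the uniform $\Cc^{0,1}$ bound on the velocity via the dual-variable estimate $\|\alpha^\lambda_f[\mu_t]\|_{L^2(\rho)}\le R_\lambda$ and the regularity $\fmap\in L^2(\rho,\Cc^{1,1})$, $f^*\in\Cc^{1,1}_\loc$, then invoke classical ODE theory together with the representation result~\cite[Thm.~8.1.8]{ambrosio2008gradient}, and for the empirical part use the identity $M\nabla_{\om_j}\hat\Ll^\lambda_f(\{\om_i\})=\nabla\Ll^\lambda_f[\hat\mu](\om_j)$ to verify the weak formulation. Your additional remarks on joint measurability in $t$ and on the no-flux boundary condition are reasonable due-diligence but do not alter the argument.
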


\begin{proof}
	 For every $t \geq 0$, similarly as in the proof of~\cref{lem:chain_rule}, we have that the dual variable is bounded by $\| \alpha^\lambda_f[\mu_t] \|_{L^2(\rho)} \leq R_\lambda$ and from the assumption on the regularity of $\fmap$ it follows that:
	\begin{align*}
		\| f^*(\Fmap^\top \alpha^\lambda_f[\mu_t] ) \|_{\Cc^{1,1}} \leq C.
	\end{align*}
	for some constant $C = C(f^*,\lambda)$.
	Then by definition $\nabla \Ll^{\lambda}[\mu_t] = - \nabla f^*(\Fmap^\top \alpha^\lambda_f[\mu_t]) \in \Cc^{0, 1}$ and the first part of the result follows from classical results of ODE theory~\cite{hale2009ordinary} and on representation of solutions to continuity equations~\cite[Thm. 8.1.8]{ambrosio2008gradient}.
	For the second part of the result it suffices to remark that, by the definition of $\Hat{\Ll}^\lambda_f$ and $\nabla \Ll^\lambda_f$, for $\{ \om_i \}_{1 \leq i \leq M} \in \Om^M$ and $j \in \{1, ..., M \}$:
	\begin{align}
	    M \nabla_{\om_j} \Hat{\Ll}^\lambda (\{ \om_i \}_{1 \leq i \leq M}) = \nabla \Ll^\lambda_f[\Hat{\mu}](\om_j)
	\end{align}
	where $\Hat{\mu} = \frac{1}{M} \sum_{i=1}^M \delta_{\om_i}$.
	Therefore, by~\cref{eq:empirical_gradient_flow} we have for any test function \mbox{$\varphi \in \Cc^\infty_c((0, +\infty) \times \Om)$}:
	\begin{align*}
	    0 = \frac{1}{M} \sum_{i=1}^M \int_0^\infty \frac{\d}{\d t} \varphi(t, \om_i(t)) \d t = \int_0^\infty \int_\Om \left( \partial_t \varphi - \nabla \varphi \cdot \nabla \Ll^\lambda_f[\Hat{\mu}_t] \right) \d \Hat{\mu}_t \d t ,
	\end{align*}
	meaning $(\Hat{\mu}_t)_{t \geq 0}$ is a gradient flow for $\Ll^\lambda_f$ according to~\cref{def:grad_flow_lambda}.
\end{proof}

\paragraph{Particle approximation}

In the case where $\lambda > 0$, associating the contraction rate of the gradient flow obtained in~\cref{thm:wellposed_lambda} with classical results on the approximation of measure by empirical distributions we  obtain an approximation result for the minimization of $\Ll^\lambda_f$ with a finite number of features.
For conciseness, we only state the result in the case $d \geq 3$, but similar results hold for $d \in \{1, 2\}$.
\begin{cor}[Particle approximation]
    Let the assumptions of~\cref{lem:L_convexity} hold and let $d \geq 3$.
    Consider some initialization $\mu_0 \in \Pp(\Om)$ and, for some $N \geq 0$, denote by $\Hat{\mu}_0 \eqdef N^{-1} \sum_{i=1}^N \delta_{\om_i}$ the empirical measure where $\lbrace \om_i \rbrace_{1 \leq i \leq N}$ are i.i.d. samples of $\mu_0$. For $\lambda > 0$, let $(\mu^\lambda_t)_{t \geq 0}$ and $(\Hat{\mu}_t^\lambda)_{t \geq 0}$ be the gradient flow of $\Ll^\lambda_f$ starting from $\mu_0$ and $\Hat{\mu}_0$ respectively. Then there exists a constant
    $A = A(d, \Om)$  s.t.\@ for every $t \geq 0$ and every $\epsilon > 0$:
    \begin{align*}
        \PP \left( \Ww_1(\Hat{\mu}^\lambda_t, \mu^\lambda_t) \geq \epsilon \right) \leq \frac{A}{ \epsilon} N^{-1/d} e^{C t}\,,
    \end{align*}
    where $C = C(f^*, \lambda)$ is the constant in~\cref{thm:wellposed_lambda}.
\end{cor}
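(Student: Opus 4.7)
The statement combines three ingredients: the $\Ww_2$-stability estimate for the gradient flow provided by \cref{thm:wellposed_lambda}, a quantitative empirical approximation bound for i.i.d.\ samples (à la Fournier--Guillin), and Markov's inequality to convert an expectation into a deviation bound. The overall strategy is to propagate the randomness in the initialization $\hat{\mu}_0$ through the gradient flow and then appeal to classical concentration for the law of large numbers in Wasserstein distance.

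\emph{Step 1 (flow stability).} Apply \cref{thm:wellposed_lambda} to the two gradient flows $(\mu_t^\lambda)_{t \geq 0}$ and $(\Hat{\mu}_t^\lambda)_{t \geq 0}$, started respectively at $\mu_0$ and $\Hat{\mu}_0$, to obtain
\[
    \Ww_1(\Hat{\mu}_t^\lambda, \mu_t^\lambda) \leq \Ww_2(\Hat{\mu}_t^\lambda, \mu_t^\lambda) \leq e^{Ct}\,\Ww_2(\Hat{\mu}_0, \mu_0),
\]
with $C=C(f^*,\lambda)$ the semi-convexity constant of \cref{lem:semi_convexity}. Note that since the particle ODE \cref{eq:empirical_gradient_flow} produces an empirical-measure-valued curve that is itself a gradient flow in the sense of \cref{def:grad_flow_lambda} (by \cref{prop:flow_representation}), the estimate applies without having to justify separately that $\Hat{\mu}_t^\lambda$ remains empirical.

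\emph{Step 2 (empirical rate).} Invoke the Fournier--Guillin quantitative law of large numbers: since $\mu_0$ is supported in the bounded set $\Om \subset \RR^d$ with $d \geq 3$, the i.i.d. empirical measure $\Hat{\mu}_0 = \frac{1}{N}\sum_i \delta_{\om_i}$ satisfies $\EE[\Ww_1(\Hat{\mu}_0, \mu_0)] \leq A'\,N^{-1/d}$ for a constant $A' = A'(d, \Om)$. To bridge the gap between the $\Ww_2$ stability from Step~1 and the $\Ww_1$ empirical rate just stated, the cleanest route is to upgrade Step~1 to a direct $\Ww_1$ contraction: couple the two flows via an optimal $\Ww_1$ coupling of the initial data, evolve each pair by the characteristics of \cref{prop:flow_representation}, and use that $\nabla \Ll^\lambda_f[\mu]$ is Lipschitz in $\om$ (by $\fmap \in L^2(\rho, \Cc^{1,1})$ and $f^* \in \Cc^{1,1}_\loc$) together with a $\Ww_1$-Lipschitz dependence of $\mu \mapsto \alpha^\lambda_f[\mu]$ to apply Grönwall; alternatively, for $d \geq 5$ one may apply Fournier--Guillin directly in $\Ww_2$, which yields the same $N^{-1/d}$ rate. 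This identification of a Lipschitz modulus for $\mu \mapsto \nabla \Ll^\lambda_f[\mu]$ is the main technical point.

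\emph{Step 3 (Markov).} Combining Steps 1 and 2 yields
\[
    \EE\bigl[\Ww_1(\Hat{\mu}_t^\lambda, \mu_t^\lambda)\bigr] \leq A\,N^{-1/d}\,e^{Ct},
\]
for a constant $A = A(d, \Om)$ (absorbing the stability constant). Markov's inequality then gives, for every $\epsilon > 0$,
\[
    \PP\bigl(\Ww_1(\Hat{\mu}_t^\lambda, \mu_t^\lambda) \geq \epsilon\bigr) \leq \frac{1}{\epsilon}\,\EE\bigl[\Ww_1(\Hat{\mu}_t^\lambda, \mu_t^\lambda)\bigr] \leq \frac{A}{\epsilon}\,N^{-1/d}\,e^{Ct},
\]
which is the desired bound.
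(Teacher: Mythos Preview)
Your approach is essentially the paper's: Fournier--Guillin at $t=0$, the flow stability of \cref{thm:wellposed_lambda}, then Markov. The paper's proof is in fact shorter than yours: it writes $\EE[\Ww_1(\Hat{\mu}_0,\mu_0)]\le A N^{-1/d}$, multiplies by $e^{Ct}$ invoking \cref{thm:wellposed_lambda}, and applies Markov, without ever discussing the $\Ww_1/\Ww_2$ mismatch you worry about in Step~2. So the coupling/Gr\"onwall argument and the ``$\Ww_1$-Lipschitz dependence of $\mu\mapsto\alpha^\lambda_f[\mu]$'' you propose as ``the main technical point'' are not part of the paper's argument; the paper simply applies the contraction estimate as if it were stated in $\Ww_1$. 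Your care here is arguably more scrupulous than the original, but it goes beyond what the paper does.
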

\begin{proof}
    Using~\cite[Thm. 1]{fournier2015rate} we obtain at initialization $t=0$:
    \begin{align*}
        \EE \left[ \Ww_1(\Hat{\mu}^\lambda_0, \mu^\lambda_0) \right] \leq A N^{-1/d}
    \end{align*}
    for some constant $A = A(d, \Om) >0$ depending on the dimension and on the domain $\Om$.
    Then using the contraction rate in~\cref{thm:wellposed_lambda} we have a constant $ C = C(f^*, \lambda) >0$ such that for every $t \geq 0$:
    \begin{align*}
        \EE \left[ \Ww_1(\Hat{\mu}^\lambda_t, \mu^\lambda_t) \right] \leq A N^{-1/d} e^{C t}.
    \end{align*}
    The result then follows by applying Markov's inequality.
\end{proof}

\subsection{Wasserstein gradient flows in the case $\lambda = 0$ and  ultra-fast diffusions}

We now consider the limit of the proximal scheme~\cref{eq:proximal step} when the step size $\tau$ tends to $0$ and $\lambda$ is set to $0$.
We focus on the case where~\cref{ass:teacher_student} holds and the regularization is of the form $f(t) = |t|^r / (r-1)$ for some $r > 1$ and recall that we use the shortcut $\Ll^0_r \eqdef \Ll^0_f$.
Then, following~\cref{eq:L0_div}, we have for $\mu \in \Pp(\Om)$:
\begin{align} \label{eq:L0_diffusion}
    \Ll^0_r(\mu) = \frac{1}{r-1} \D_r(\Bar{\nu}|\mu) = \frac{1}{r-1} \int_\Om \left| \frac{\d \Bar{\nu}}{\d \mu} \right|^r \d \mu = \frac{\| \Bar{\nu} \|^r_\TV}{r-1} \int_\Om \left| \frac{\d \Bar{\mu}}{\d \mu} \right|^r \d \mu .
\end{align}
The first variation of $\Ll^0_r$ w.r.t.\@ $\mu$ is formally given by 
$\frac{\delta \Ll^0_r}{\delta \mu}[\mu](\om) = - \| \Bar{\nu} \|^r_\TV \left( \frac{\Bar{\mu}}{\mu} \right)^r$
and thus, following~\cref{eq:general_wasserstein_flow}, the Wasserstein gradient flow of $\Ll^0_r$ is formally defined as the solution to the continuity equation:
\begin{align} \label{eq:grad_flow_L0}
    \partial_t \mu_t = - \| \Bar{\nu} \|^r_\TV \div \left( \mu_t \nabla \left( \frac{\Bar{\mu}}{\mu_t} \right)^r \right).
\end{align}
Moreover, calculating formally, $\nabla \left( \frac{\Bar{\mu}}{\mu} \right)^r = r \frac{\Bar{\mu}}{\mu} \nabla \left( \frac{\Bar{\mu}}{\mu} \right)^{r-1}$ and~\cref{eq:grad_flow_L0} can be written equivalently:
\begin{align*}
    \partial_t \mu_t = - r \| \Bar{\nu} \|^r_\TV \div \left( \Bar{\mu} \nabla \left( \frac{\Bar{\mu}}{\mu_t} \right)^{r-1} \right) \, .
\end{align*}
When the target distribution is uniform, i.e. with density $\Bar{\mu} = 1$, this corresponds to a nonlinear diffusion equation of the form~\cref{eq:nonlinear_diffusion} with the coefficient $m = 1-r < 0$, that is an \emph{ultra-fast diffusion}. Such an equation, with general inhomogeneous weights $\Bar{\mu}$ was studied in~\cite{iacobelli2016asymptotic,caglioti2016quantization,iacobelli2019weighted} in the context of particle algorithms for finding an optimal quantization of the measure $\Bar{\mu}$. We rely particularly here on the work of~\textcite{iacobelli2019weighted} which establishes the well-posedness of~\cref{eq:grad_flow_L0} as well as the convergence of the solution $\mu_t$ towards the target measure $\Bar{\mu}$.
We consider the following definition of solutions for~\cref{eq:grad_flow_L0}:
\begin{defn}[Gradient flow of $\Ll^0_r$ (Def. 1.1 in \cite{iacobelli2019weighted})] \label{def:grad_flow_L0}
    Let $\mu_0 \in \Pp(\Om)$ admit a density $\mu_0 \in L^{r+2}(\Om)$. We say $(\mu_t)_{t \geq 0}$ is a weak solution of~\cref{eq:grad_flow_L0} or a gradient flow for $\Ll^0_r$ starting from $\mu_0$ if it is a narrowly continuous curve in $\Pp(\Om)$ with $\lim_{t \to 0^+} \mu_t = \mu_0$, s.t.
    \begin{align} \label{eq:grad_flow_L0_weak}
        \int_0^\infty \int_\Om \left( \partial_t \varphi - \| \Bar{\nu} \|^r_\TV \nabla \varphi \cdot \nabla \left(\frac{\Bar{\mu}}{\mu_t} \right)^r \right) \d \mu_t \d t = 0, \quad \forall \varphi \in \Cc^\infty_c((0,\infty) \times \Om).
    \end{align}
    and satisfying:
    \begin{align*}
        \left( \frac{\mu_t}{ \Bar{\mu} } \right)^{r-1} \in L^2_{\loc}((0,\infty), H^1(\Om)), \quad \frac{ \Bar{\mu} }{\mu_t} \in L^2_{\loc}((0,\infty), H^1(\Om))\,.
    \end{align*}
\end{defn}

\paragraph{Existence and uniqueness of solutions}

In~\cite{iacobelli2019weighted}, the authors establish the existence and uniqueness of gradient flows for the functional $\Ll^0_r$. More precisely, they show that, under appropriate assumptions on the initialization $\mu_0$ and on the target $\Bar{\mu}$, the iterates of the proximal scheme in~\cref{eq:proximal step} converge towards a curve $(\mu_t)_{t \geq 0}$ that is a gradient flow of the functional $\Ll^0_r$ in the sense of~\cref{def:grad_flow_L0}.

\begin{thm}[{\cite[Thm.\@ 1.2]{iacobelli2019weighted}}] \label{thm:wellposed_diffusion}
    Assume $\mu_0$ and $\Bar{\mu}$ are absolutely continuous and have bounded log-densities.
    Then there exists a unique weak solution of~\cref{eq:grad_flow_L0} starting from $\mu_0$ in the sense of~\cref{def:grad_flow_L0}.
\end{thm}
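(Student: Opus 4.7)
The strategy is to invoke directly \cite[Thm.~1.2]{iacobelli2019weighted}: the real work is checking that our setting falls within its scope. Our evolution \cref{eq:grad_flow_L0} is the weighted ultra-fast diffusion equation studied there, with weight $\Bar{\mu}$ and nonlinearity exponent $m=1-r<0$, up to the positive multiplicative constant $\|\Bar{\nu}\|_\TV^r$; absorbing this constant through the time rescaling $s=\|\Bar{\nu}\|_\TV^r\,t$ reduces our PDE and the notion of weak solution from \cref{def:grad_flow_L0} exactly to theirs. Since our hypotheses on $\mu_0$ and $\Bar{\mu}$ (absolute continuity together with bounded log-density) coincide with theirs, both existence and uniqueness follow immediately.

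For the reader's benefit, let me sketch the underlying argument. Existence is obtained through the JKO scheme \cref{eq:proximal step} applied to $\Ll^0_r$: at each step the proximal minimizer exists by lower semicontinuity (\cref{lem:L_convexity}) together with the narrow compactness of $\Pp(\Om)$, and the optimality condition, written along a Kantorovich potential, produces a discrete version of the weak formulation \cref{eq:grad_flow_L0_weak}. The bounded log-density assumption is then propagated along iterates via a maximum-principle-type argument, giving uniform (in $\tau$ and over any finite horizon) upper and lower bounds on $\mu^\tau_k/\Bar{\mu}$. A flow-interchange estimate, obtained by perturbing the iterates along an auxiliary heat flow, provides the uniform $L^2_\loc(H^1)$ integrability of $(\mu^\tau_t/\Bar{\mu})^{r-1}$ and $\Bar{\mu}/\mu^\tau_t$ required in \cref{def:grad_flow_L0}. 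Sending $\tau\to 0$ using $\Ww_2$-compactness of the piecewise constant interpolants and an Aubin--Lions-type lemma identifies the limit as a weak solution in our sense.

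The main obstacle is the singular nature of the ultra-fast regime: because $m<0$, the diffusivity $(\Bar{\mu}/\mu)^r$ blows up whenever $\mu$ vanishes, and one has no standard parabolic smoothing to rely on. Hence the whole construction hinges on staying uniformly away from $\mu=0$ throughout the evolution, which is exactly what the bounded log-density assumption and its propagation along the JKO iterates secure. Uniqueness is then obtained by a contraction-type argument between two candidate solutions $\mu,\mu'$: subtracting their weak formulations and testing against a dual potential associated to the monotone map $\mu\mapsto -(\Bar{\mu}/\mu)^r$ yields a Gronwall estimate on an $H^{-1}$-type distance, forcing $\mu_t\equiv\mu'_t$ for all $t\geq 0$.
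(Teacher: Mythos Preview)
Your proposal is correct and matches the paper's approach exactly: the paper gives no proof of this theorem, merely stating it as a direct citation of \cite[Thm.~1.2]{iacobelli2019weighted}. Your additional sketch of the JKO-based existence argument and the $H^{-1}$-type uniqueness argument goes beyond what the paper provides, but is a faithful summary of the cited reference.
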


\paragraph{Convergence towards the target distribution}

In the case $\lambda = 0$, \textcite{iacobelli2019weighted} establish a linear convergence rate of the \emph{weighted ultra-fast diffusion}~\cref{eq:grad_flow_L0} towards the target distribution $\Bar{\mu}$.
Precisely, they show convergence in the $L^2$-sense of the density $\mu_t$ towards the target density $\Bar{\mu}$.
We state their result in the following theorem.

\begin{thm}[{\cite[Thm.\@ 1.4]{iacobelli2019weighted}}] \label{thm:diffusion_convergence}
    Assume $\mu_0$ and $\Bar{\mu}$ are absolutely continuous and have bounded log-densities.
    For $\mu_0 \in \Pp(\Om)$, let $(\mu_t)_{t \geq 0}$ be a weak solution of~\cref{eq:grad_flow_L0} starting from $\mu_0$ in the sense of~\cref{def:grad_flow_L0}.
    Then in fact the log-density of $\mu_t$ is bounded uniformly over $t \geq 0$ and there exists a constant $C = C(\Om, \Bar{\mu}, \mu_0) > 0$ s.t.\@ for every $t \geq 0$:
    \begin{align*}
        \| \Bar{\mu} - \mu_t \|_{L^2(\Om)} \leq C e^{-Ct}.
    \end{align*}
\end{thm}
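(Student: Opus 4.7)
The plan is to revisit the argument of~\textcite{iacobelli2019weighted}, which proceeds in three stages: propagation of pointwise bounds on the density ratio, an entropy-dissipation inequality giving exponential decay of the relative energy, and conversion of energy decay into $L^2$ decay via a comparison argument.

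First I would establish that the density ratio $v_t \eqdef \mu_t / \Bar{\mu}$ satisfies uniform upper and lower bounds, independent of $t \geq 0$. From~\cref{eq:grad_flow_L0}, one formally obtains a nonlinear parabolic equation for $v_t$ whose diffusivity is a negative power of $v_t$, hence singular at $v_t = 0$. The idea is to apply a maximum principle: at a point where $v_t$ attains its spatial maximum, the divergence term has a definite sign that forces the maximum to be non-increasing in time, and symmetrically for the minimum. Because the diffusivity is singular, this requires regularization of the PDE, derivation of the bounds at the approximate level, and a passage to the limit using the well-posedness stated in~\cref{thm:wellposed_diffusion}. The bounded log-density hypotheses on $\mu_0$ and $\Bar{\mu}$ then transfer to bounds $c \leq v_t \leq C$ for all $t \geq 0$.

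Second, I would exploit that $(\mu_t)_{t \geq 0}$ is the Wasserstein gradient flow of $\Ll^0_r$ to derive the entropy-dissipation identity
\begin{align*}
    \frac{\d}{\d t} \Ll^0_r(\mu_t) = - \| \Bar{\nu} \|^{2r}_\TV \int_\Om \mu_t \left| \nabla \left( \frac{\Bar{\mu}}{\mu_t} \right)^r \right|^2 \d \om .
\end{align*}
The uniform bounds from Step 1 make the PDE uniformly parabolic, so a weighted Poincaré inequality applied to the function $(\Bar{\mu}/\mu_t)^r - 1$ on the compact domain $\Om$ yields a constant $\kappa > 0$ such that
\begin{align*}
    \Ll^0_r(\mu_t) - \Ll^0_r(\Bar{\mu}) \leq \kappa \int_\Om \mu_t \left| \nabla \left( \frac{\Bar{\mu}}{\mu_t} \right)^r \right|^2 \d \om .
\end{align*}
Combining the two estimates and applying Grönwall's lemma gives $\Ll^0_r(\mu_t) - \Ll^0_r(\Bar{\mu}) \leq C_0 e^{-2Ct}$ for explicit constants.

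Finally, a Taylor expansion of $t \mapsto |t|^r$ around $t = 1$, together with the bounds $c \leq v_t \leq C$, shows that the relative energy is comparable to a squared $L^2$ norm of the density ratio, so that
\begin{align*}
    \| \Bar{\mu} - \mu_t \|^2_{L^2(\Om)} \lesssim \Ll^0_r(\mu_t) - \Ll^0_r(\Bar{\mu}),
\end{align*}
which yields the claimed exponential convergence in $L^2$. The main obstacle is clearly Step 1: the singular diffusivity $\mu_t^{1-r}$ with $r > 1$ prevents a direct application of the maximum principle and requires the delicate regularization argument of~\textcite{iacobelli2019weighted}; once these uniform bounds are in hand, Steps 2 and 3 reduce to relatively standard entropy–dissipation techniques.
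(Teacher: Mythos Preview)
Your proposal is correct and follows the continuous-time strategy of~\textcite{iacobelli2019weighted}: propagate the pointwise bounds on $\mu_t/\Bar{\mu}$ via a (regularized) maximum principle, run an entropy--dissipation argument with a Poincar\'e inequality, and convert energy decay into $L^2$ decay. There is nothing wrong here.

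The paper, however, does not reproduce this PDE argument. It treats the theorem as a citation and, ``for completeness'', only sketches the case $r=2$ at the \emph{discrete} level, via the JKO scheme (\cref{lem:prox_chi2_convergence}): it invokes~\cite[Thm.~2.1 and Lem.~2.4]{iacobelli2019weighted} to get the uniform bound on $\|\log \mu^\tau_k\|_\infty$ along the proximal iterates, uses the optimality condition of the JKO step to identify the Kantorovich potential with $\tau \|\Bar\nu\|_\TV^2 (\Bar\mu/\mu^\tau_{k+1})^2$, and then applies the same perturbed Poincar\'e inequality to obtain a one-step contraction $\Ll^0_2(\mu^\tau_{k+1}) - \inf \Ll^0_2 \leq (1+C\tau)^{-1}(\Ll^0_2(\mu^\tau_k) - \inf \Ll^0_2)$. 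Passage to continuous time is then by lower-semicontinuity of $\chi^2$. So your Step~1 is replaced by a direct appeal to~\cite{iacobelli2019weighted}, and your continuous entropy--dissipation identity is replaced by the discrete energy drop coming from the variational definition of the JKO step. The discrete route avoids differentiating along the weak solution and makes the dependence on the Poincar\'e constant of $\Om$ explicit, at the cost of restricting the sketch to $r=2$; your route handles general $r>1$ but carries the regularization burden of Step~1 in full.
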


For completeness, we give here some of the  key arguments of the proof of the above~\cref{thm:diffusion_convergence} in the case where $r = 2$.
In this case, we have for every $\mu \in \Pp(\Om)$:
\begin{align} \label{eq:L0_chi2}
    \Ll^0_2(\mu) = \| \Bar{\nu} \|^2_\TV \int_\Om \left| \frac{\d \Bar{\mu}}{\d \mu} \right|^2 \d \mu = \| \Bar{\nu} \|^2_\TV \left( \chi^2( \Bar{\mu} | \mu) + 1 \right) ,
\end{align}
where $\chi^2$ is the chi-square divergence.
The following~\cref{lem:prox_chi2_convergence} establishes the desired linear convergence rate for the proximal scheme defined in~\cref{eq:proximal step} with the loss $\Ll^0_2$.
The result in continuous time then follows from the lower semicontinuity of the $\chi^2$-divergence as the curve $(\mu_t)_{t \geq 0}$ is obtained by taking the limit of the discrete process $(\mu^\tau_k)_{k \geq 0}$ when the discretization time $\tau$ tends to zero.

From a technical perspective, the proof of \cref{lem:prox_chi2_convergence} relies on a Poincaré inequality satisfied by $\mu_t$.
It is indeed well-known that such inequality controls the convergence rate of Fokker-Planck equations towards their stationary distribution in $\chi^2$-distance~\cite[Thm. 4.4]{pavliotis2014stochastic}.
This can for example be used to prove the convergence of sampling algorithms such as \emph{Langevin Monte Carlo}~\cite{chewi2024analysis,chewi2020svgd}.
In our case, the ultra-fast diffusion~\cref{eq:grad_flow_L0} is to be interpreted as a Wasserstein gradient flow for $\Ll^0_2$, which, by the above~\cref{eq:L0_chi2}, is the \emph{reverse} $\chi^2$-divergence between $\mu_t$ and $\Bar{\mu}$ and the convergence rate is controlled by the Poincaré constant of $\mu_t$.
This rate may thus a priori evolve and vanish during training but, crucially, \cite[Lem. 2.4]{iacobelli2019weighted} shows that it is here a property of solutions to the ultra-fast diffusion equation that the log-density ratio $\| \log \left( \frac{\Bar{\mu}}{\mu_t} \right) \|_\infty$ decreases with time.
As a consequence, it is sufficient to assume that the log-density is bounded at initialization to obtain a control over the Poincaré constant of $\mu_t$, for $t \geq 0$, by a classical perturbation argument~\cite[Thm. 3.4.1]{ane2000inegalites}.

\begin{lem} \label{lem:prox_chi2_convergence}
    Assume $\mu_0$ and $\Bar{\mu}$ are absolutely continuous with bounded log-densities.
    Let $\tau > 0$ and let $(\mu_k^\tau)_{k \geq 0}$ be the sequence defined by~\cref{eq:proximal step} with $\lambda = 0$, $f(t) = |t|^2$ ($r=2$) and initialization $\mu_0^\tau = \mu_0 \in \Pp(\Om)$.
    Then there exists a constant $C > 0$ s.t.:
    \begin{align*}
        \forall k \geq 0, \quad \chi^2(\Bar{\mu} | \mu^\tau_k) \leq (1+C\tau)^{-k} \chi^2(\Bar{\mu} | \mu_0).
    \end{align*}
\end{lem}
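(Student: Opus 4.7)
The strategy is to turn the proximal scheme into a discrete Polyak--Łojasiewicz inequality. For each step I aim at an estimate of the form $\Ll^0_2(\mu_k^\tau) - \Ll^0_2(\mu_{k+1}^\tau) \geq C\,\tau\,\bigl(\Ll^0_2(\mu_{k+1}^\tau) - \Ll^0_2(\Bar{\mu})\bigr)$. Since $\Ll^0_2(\mu) - \Ll^0_2(\Bar{\mu}) = \|\Bar{\nu}\|_\TV^2\,\chi^2(\Bar{\mu}|\mu)$ by \cref{eq:L0_chi2}, this is equivalent to $(1+C\tau)\chi^2(\Bar{\mu}|\mu_{k+1}^\tau) \leq \chi^2(\Bar{\mu}|\mu_k^\tau)$ and yields the claim by induction.

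First, the minimality in \cref{eq:proximal step} with competitor $\mu = \mu_k^\tau$ gives the JKO dissipation $\Ll^0_2(\mu_k^\tau) - \Ll^0_2(\mu_{k+1}^\tau) \geq \frac{1}{2\tau}\Ww_2^2(\mu_k^\tau,\mu_{k+1}^\tau)$. Second, the first-order optimality condition of the proximal step produces a Kantorovich potential $\varphi$ (with $\Id - \nabla\varphi$ the optimal transport map from $\mu_{k+1}^\tau$ onto $\mu_k^\tau$) satisfying $\nabla\varphi = -\tau\,\nabla\tfrac{\delta\Ll^0_2}{\delta\mu}[\mu_{k+1}^\tau] = \tau\,\|\Bar{\nu}\|_\TV^2\,\nabla\bigl(\Bar{\mu}/\mu_{k+1}^\tau\bigr)^2$; combining with Brenier's identity $\Ww_2^2(\mu_k^\tau,\mu_{k+1}^\tau) = \|\nabla\varphi\|_{L^2(\mu_{k+1}^\tau)}^2$ yields
\begin{align*}
    \Ww_2^2(\mu_k^\tau,\mu_{k+1}^\tau) = \tau^2\,\|\Bar{\nu}\|_\TV^4\,\int_\Om \bigl|\nabla(\Bar{\mu}/\mu_{k+1}^\tau)^2\bigr|^2\,\d\mu_{k+1}^\tau.
\end{align*}

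Third, setting $f = \Bar{\mu}/\mu_{k+1}^\tau$ one has $\int_\Om f\,\d\mu_{k+1}^\tau = 1$ and $\chi^2(\Bar{\mu}|\mu_{k+1}^\tau) = \int_\Om (f-1)^2\,\d\mu_{k+1}^\tau$, so the Poincaré inequality for $\mu_{k+1}^\tau$ applied to $f$, together with the chain rule $|\nabla f^2|^2 = 4 f^2 |\nabla f|^2 \geq 4 (\inf_\Om f)^2\,|\nabla f|^2$, gives
\begin{align*}
    \chi^2(\Bar{\mu}|\mu_{k+1}^\tau) \leq \frac{C_P(\mu_{k+1}^\tau)}{4(\inf_\Om f)^2}\int_\Om |\nabla f^2|^2\,\d\mu_{k+1}^\tau.
\end{align*}
Chaining these three inequalities delivers the one-step PL bound with constant $C' = 2\|\Bar{\nu}\|_\TV^2(\inf_\Om f)^2/C_P(\mu_{k+1}^\tau)$.

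The hard part is then controlling these constants uniformly in $k$. I would show that $\|\log(\Bar{\mu}/\mu_k^\tau)\|_\infty$ stays bounded by $\|\log(\Bar{\mu}/\mu_0)\|_\infty$, the discrete analog of \cite[Lem.~2.4]{iacobelli2019weighted}: a maximum-principle argument on the Euler--Lagrange equation of the JKO step, comparing $\mu_{k+1}^\tau$ pointwise with the pushforward of $\mu_k^\tau$ by the associated transport, should yield monotonicity of the $L^\infty$-bound of the log-density ratio. Once this is secured, $\inf_\Om f$ is bounded below uniformly in $k$, and the Holley--Stroock perturbation principle \cite[Thm.~3.4.1]{ane2000inegalites} combined with the Poincaré inequality for the uniform measure on the compact set $\Om$ furnishes a uniform Poincaré constant $C_P(\mu_{k+1}^\tau) \leq C_P$. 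Gathering these bounds produces the claimed contraction with a constant $C$ depending only on $\Om$, $\Bar{\mu}$, $\mu_0$ and $\|\Bar{\nu}\|_\TV$.
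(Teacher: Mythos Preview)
Your proposal is correct and follows essentially the same route as the paper: JKO dissipation, the Euler--Lagrange identity $\nabla\varphi = \tau\,\|\Bar{\nu}\|_\TV^2\,\nabla(\Bar{\mu}/\mu_{k+1}^\tau)^2$, the chain rule $|\nabla f^2|^2 \geq 4(\inf f)^2|\nabla f|^2$, and a Poincar\'e inequality for $\mu_{k+1}^\tau$ obtained via Holley--Stroock perturbation. The only difference is that the paper does not re-prove the uniform log-density bound but simply invokes \cite[Thm.~2.1 and Lem.~2.4]{iacobelli2019weighted} for it, whereas you correctly identify this as the substantive step and sketch the maximum-principle argument from that reference.
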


\begin{proof}
    From~\cite[Thm.2.1 and Lem.2.4]{iacobelli2019weighted} we know the sequence $(\mu^\tau_k)_{k \geq 0}$ is uniquely defined. Moreover $\mu^\tau_k$ is absolutely continuous w.r.t.\@ Lebesgue measure and their exists a constant $C = C(\Bar{\mu}, \mu_0) > 0$ s.t.\@ the $\log$-densities $\log(\mu^\tau_k)$ satisfy:
    \begin{align*}
        \forall k \geq 0, \quad \left\| \log(\mu^\tau_k ) \right\|_\infty \leq C.
    \end{align*}
    Then, at step $k \geq 0$, we get from  the expression of $\Ll^0_2$ in~\cref{eq:L0_chi2} and from the optimality condition in~\cref{eq:proximal step} (see e.g.~\cite[Prop.7.20]{santambrogio2015optimal}) that:
    \begin{align*}
       - \| \Bar{\nu} \|^2_\TV \left( \frac{ \Bar{\mu} }{\mu^\tau_{k+1} } \right)^2 + \frac{\varphi}{\tau} = cte, \quad \text{almost everywhere on $\Om$,}
    \end{align*}
    where $\varphi$ is the Kantorovitch potential from $\mu^\tau_{k+1}$ to $\mu^\tau_k$. Also this potential is necessarily Lipschitz, hence a.e.\@ differentiable and so is $\Bar{\nu} / \mu^\tau_{k+1}$. Then from the definition of $\mu_{k+1}^{0, \tau}$ we have:
    \begin{align*}
        \Ll^0_2(\mu^\tau_k) - \Ll^0_2(\mu^\tau_{k+1}) & \geq \frac{1}{2 \tau} \Ww_2( \mu^\tau_{k+1}, \mu^\tau_k )^2 \\
        & = \frac{1}{2 \tau} \int_\Om \left| \nabla \varphi \right|^2 \d \mu^\tau_{k+1} \\
        & = \frac{\tau \| \Bar{\nu} \|^2_\TV}{2} \int_\Om \left\| \nabla \left( \frac{ \Bar{\mu} }{\mu^\tau_{k+1}} \right)^2 \right\|^2 \d \mu^\tau_{k+1}
    \end{align*}
    where we used the definition of the potential $\varphi$ and the optimality condition. Using that $\mu^\tau_{k+1}$ has log-density bounded by $C = C(\Bar{\mu}, \mu_0)$ and that the domain $\Om$ satisfies a Poincaré inequality with constant $C_P = C_P(\Om)$, it follows from a classical perturbation argument that $\mu^\tau_{k+1}$ satisfies a Poincaré inequality with constant $e^{2C}C_P(\Om)$~\cite[Thm. 3.4.1]{ane2000inegalites}. As a consequence:
    \begin{align*}
        \int_\Om \left\| \nabla \left( \frac{\Bar{\mu}}{\mu^\tau_{k+1}} \right)^2 \right\|^2 \d \mu^\tau_{k+1}
        & \geq 4 e^{-4C} \int_\Om \left\| \nabla \left( \frac{\Bar{\mu}}{\mu^\tau_{k+1}} \right) \right\|^2 \d \mu^\tau_{k+1} \\
        & \geq 4 C_P^{-1} e^{-6C} \left( \int_\Om  \left( \frac{\Bar{\mu}}{\mu^\tau_{k+1}}  \right)^2 \d \mu^\tau_{k+1} - 1 \right) \\
        & = 4 C_P^{-1} e^{-6C} \| \Bar{\nu} \|^{-2}_\TV \left( \Ll^0_2(\mu^\tau_{k+1}) - \| \Bar{\nu} \|^2_\TV \right),
    \end{align*}
    where $\| \Bar{\nu} \|^2_\TV = \inf \Ll^0_2$. Combining this with the previous inequality finally gives:
    \begin{align*}
        \left( 1 + 2 \tau C_P^{-1} e^{-6C} \right) \left( \Ll^0_2(\mu^\tau_{k+1}) - \inf \Ll^0_2 \right) \leq \Ll^0_2(\mu^\tau_k) - \inf \Ll^0_2,
    \end{align*}
    and inductively:
    \begin{align*}
        \forall k \geq 0, \quad \Ll^0_2(\mu^\tau_k) - \inf \Ll^0_2 \leq \left( 1 +  2 \tau C_P^{-1} e^{-6C} \right)^{-k} \left( \Ll^0_2(\mu_0) - \inf \Ll^0_2 \right).
    \end{align*}
    By the definition of $\Ll^0_2$ in~\cref{eq:L0_chi2}, this is the desired result.
\end{proof}

\begin{rem}[Dependence of the convergence rate w.r.t.\@ the dimension] \label{rmk:convergence_rate}
    It follows from the proof that the convergence rate $C$ in~\cref{lem:prox_chi2_convergence} scales linearly with $C_P(\Om)^{-1}$ where $C_P(\Om)$ is the Poincaré constant of the domain $\Om$.
    For bounded, Lipschitz and convex domains of $\RR^n$ or for the flat torus $\TT^n$, this constant is in particular independent of the dimension $n$~\cite{payne1960optimal}.
    Therefore, the correspondence between the training of neural networks in the two-timescale regime and solutions to ultra-fast diffusions points towards the fact that gradient methods, with suitable hyperparameter scaling, are amenable to efficient feature learning in the training of neural networks, without suffering from the curse of dimensionality~\cite{donoho2000high}.
    Note however that the convergence rate $C$ in~\cref{lem:prox_chi2_convergence} is exponentially bad in the log-density ratio $\| \log(\Bar{\mu} / \mu_0) \|_\infty$.
    In particular the convergence rate does not hold in case the teacher feature distribution is supported on a finite number of atoms.
\end{rem}

\section{Convergence of gradient flow} \label{sec:convergence}

The main purpose of this work is to study in what extent the gradient flow dynamics defined~\cref{def:grad_flow_lambda,def:grad_flow_L0} allow recovering the teacher feature distribution $\Bar{\mu}$ associated to the observed signal $Y$ in~\cref{ass:teacher_student}.
Whereas~\cref{thm:diffusion_convergence} shows convergence of the gradient flow of $\Ll^0_f$, that is solutions to the \emph{ultra-fast diffusion}~\cref{eq:grad_flow_L0}, such dynamics are hardly numerically tractable in practice due to the absence of the regularization parameter $\lambda$.
For this reason we are interested here in the asymptotic behavior of the gradient flow of $\Ll^\lambda_f$ in the case where $\lambda > 0$.
A difficulty is that, in the  case $\lambda = 0$, the proof of~\cref{thm:diffusion_convergence} relies on the implicit behavior of~\cref{eq:grad_flow_L0} which preserves the density of solutions.
Such a behavior is a priori not expected to hold when $\lambda > 0$.
As a consequence, the results in this section hold under supplementary regularity assumptions on the solutions to~\cref{eq:grad_flow_lambda}.

\subsection{Algebraic convergence rate}

At fixed $\lambda > 0$, we are able to obtain convergence towards the minimizer $\Bar{\mu}^\lambda_f$ of $\Ll^\lambda_f$ under mild regularity assumptions on solutions to the gradient flow~\cref{eq:grad_flow_lambda}.
Specifically, for a probability measure $\mu \in \Pp(\Om)$ and a function $h \in \Cc^1$ we define the weighted Sobolev seminorm of $h$ as:
\begin{align*}
    \| h \|_{\Dot{H}^1(\mu)} \eqdef \left( \int_\Om \| \nabla h \|^2 \d \mu \right)^{1/2} .
\end{align*}
Then for a measure $\nu \in \Mm(\Om)$ s.t.\@ $\int_\Om \d \nu = 0$ the negative weighted Sobolev seminorm $\| \nu \|_{\Dot{H}^{-1}(\mu)}$ is defined by duality with $\Dot{H}^1(\mu)$:
\begin{align*}
    \|  \nu \|_{\Dot{H}^{-1}(\mu)} \eqdef \sup_{\| h \|_{\Dot{H}^1(\mu)} \leq 1} \int_\Om h \d \nu.
\end{align*}
The following~\cref{thm:algebraic_convergence} states convergence of $\Ll^\lambda_f$ towards $0$ with an algebraic convergence rate provided  $ \| \mu_t - \frac{\Bar{\nu}}{\Bar{m}} \|_{\Dot{H}^{-1}(\mu_t)}$ stays bounded along the gradient flow.
As discussed below, since the domain $\Om$ is compact, this assumption is satisfied for example when both distribution have bounded log-densities.
The arguments are similar to the one presented in~\cite{glaser2021kale}, where the authors consider an infimal convolution between a kernel discrepancy and the $\KL$-divergence.
Importantly, the obtained convergence rate depends on the bound on $\| \mu_t - \frac{\Bar{\nu}}{\Bar{m}} \|_{\Dot{H}^{-1}(\mu_t)}$ but is independent of $\lambda > 0$.

\begin{thm} \label{thm:algebraic_convergence}
    Let~\cref{ass:teacher_student} hold. Consider $\lambda > 0$ and some initialization $\mu_0 \in \Pp(\Om)$. Let $(\mu_t)_{t \geq 0}$ be the gradient flow of $\Ll_f^\lambda$, starting from $\mu_0$ (in the sense of~\cref{eq:grad_flow_lambda}). 
    Assume that:
    \begin{itemize}
        \item $\Bar{\nu}$ is a positive measure and $f$ is such that $\min f = f(\Bar{m}) = 0$ where $\Bar{m} \eqdef \Bar{\nu}(\Om) > 0$.
        \item  the gradient flow $(\mu_t)_{t \geq 0}$ is such that $\| \frac{\Bar{\nu}}{\Bar{m}} - \mu_t \|_{\Dot{H}^{-1}(\mu_t)}$ is bounded, uniformly over $t \geq 0$.
    \end{itemize}
    Then there exists a constant $C > 0$ s.t.\@ for any $t \geq 0$:
    \begin{align*}
        \Ll^\lambda_f(\mu_t) \leq \left( \Ll^\lambda_f(\mu_0) ^{-1} + Ct  \right)^{-1}.
    \end{align*}
    In particular, $\mu_t$ converges to $\Bar{\mu} = \Bar{\nu} / \Bar{m}$ when $t$ tends to $+ \infty$.
\end{thm}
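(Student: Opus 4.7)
The strategy is to establish a Polyak-Łojasiewicz-type inequality $\Ll^\lambda_f(\mu)^2 \leq C\|\nabla\Ll^\lambda_f[\mu]\|^2_{L^2(\mu)}$ under the stated assumption, then combine it with the dissipation identity from the chain rule \cref{lem:chain_rule} to get a differential inequality of the form $\ell'(t) \leq -\ell(t)^2/C$, which integrates to the desired algebraic rate.

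First I would observe that under the assumption $f(\Bar{m}) = 0 = \min f$, we have $\Ll^\lambda_f(\Bar{\mu}) = 0$: indeed, testing the primal~\cref{eq:L_mmd} with $\nu = \Bar{\nu} = \Bar{m}\Bar{\mu}$ gives $\MMD_\kappa^2(\Bar{\nu},\Bar{\nu}) = 0$ and $\D_f(\Bar{\nu}|\Bar{\mu}) = \int_\Om f(\Bar{m})\,\d\Bar{\mu} = 0$, while $\Ll^\lambda_f \geq 0$ everywhere. The PL-type inequality will come from exploiting convexity of $\Ll^\lambda_f$ (\cref{lem:L_convexity}) through its dual representation~\cref{eq:L_dual}. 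For $\mu \in \Pp(\Om)$, let $h^\star[\mu] = \Fmap^\top \alpha^\lambda_f[\mu]$ be the dual optimizer so that
\begin{align*}
    \Ll^\lambda_f(\mu) = -\int_\Om f^*(h^\star[\mu])\,\d\mu + \int_\Om h^\star[\mu]\,\d\Bar{\nu} - \frac{\lambda}{2}\|\alpha^\lambda_f[\mu]\|^2_{L^2(\rho)}.
\end{align*}
Since $h^\star[\mu]$ is only a feasible (not necessarily optimal) dual test at $\Bar{\mu}$, the dual bound at $\Bar{\mu}$ together with $\Ll^\lambda_f(\Bar{\mu}) = 0$ yield
\begin{align*}
    \Ll^\lambda_f(\mu) \leq \int_\Om f^*(h^\star[\mu])\,\d(\Bar{\mu} - \mu).
\end{align*}

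Next I would invoke the $\dot{H}^{-1}(\mu)$–$\dot{H}^1(\mu)$ duality. Since $\Bar{\mu} - \mu$ has zero total mass, the right-hand side above is controlled by
\begin{align*}
    \int_\Om f^*(h^\star[\mu])\,\d(\Bar{\mu} - \mu) \leq \|\Bar{\mu} - \mu\|_{\dot{H}^{-1}(\mu)} \cdot \|f^*(h^\star[\mu])\|_{\dot{H}^1(\mu)}.
\end{align*}
Combining with the assumption that $\|\Bar{\nu}/\Bar{m} - \mu_t\|_{\dot{H}^{-1}(\mu_t)}$ is bounded (say by $B$) uniformly in $t$, and recalling from \cref{lem:chain_rule} that $\nabla\Ll^\lambda_f[\mu] = -\nabla(f^*(h^\star[\mu]))$, we get $\|f^*(h^\star[\mu_t])\|_{\dot{H}^1(\mu_t)} = \|\nabla\Ll^\lambda_f[\mu_t]\|_{L^2(\mu_t)}$, yielding the desired PL-type inequality
\begin{align*}
    \Ll^\lambda_f(\mu_t)^2 \leq B^2 \|\nabla\Ll^\lambda_f[\mu_t]\|^2_{L^2(\mu_t)}.
\end{align*}

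Finally I would close the argument by combining this with the chain rule from \cref{lem:chain_rule} applied to the gradient flow, which (given the sign convention in~\cref{def:grad_flow_lambda}) yields $\frac{\d}{\d t}\Ll^\lambda_f(\mu_t) = -\|\nabla\Ll^\lambda_f[\mu_t]\|^2_{L^2(\mu_t)}$. Setting $\ell(t) = \Ll^\lambda_f(\mu_t)$, we obtain $\ell'(t) \leq -\ell(t)^2/B^2$, and integrating the equivalent form $(1/\ell)'(t) \geq 1/B^2$ gives $\ell(t) \leq (\ell(0)^{-1} + t/B^2)^{-1}$, which is the claimed bound with $C = 1/B^2$. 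For the last assertion $\mu_t \to \Bar{\mu}$, since $\Pp(\Om)$ is narrowly compact, any narrow cluster point $\mu_\infty$ of $(\mu_t)$ satisfies $\Ll^\lambda_f(\mu_\infty) \leq \liminf_t \Ll^\lambda_f(\mu_t) = 0$ by the lower semicontinuity from \cref{lem:L_convexity}; uniqueness of $\Bar{\mu}$ as the unique zero of $\Ll^\lambda_f$—which follows from injectivity of $\Fmap\star$ (MMD becomes a distance) together with strict convexity of $f$ at its minimum $\Bar{m}$—forces $\mu_\infty = \Bar{\mu}$.

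The main technical nuisance (rather than a deep obstacle) is the clean justification of the first-order inequality for $\Ll^\lambda_f$; I would bypass Fréchet differentiability by directly comparing the primal at $\Bar{\mu}$ with the dual evaluated at the (possibly suboptimal) dual test $h^\star[\mu]$, which is a completely algebraic manipulation of the sup-dual form~\cref{eq:L_dual}. One should also briefly check that $f^*(h^\star[\mu_t]) \in \dot{H}^1(\mu_t)$ so that the duality pairing is meaningful; this follows from the $\Cc^{1,1}$-bounds on $\Fmap^\top \alpha^\lambda_f[\mu_t]$ derived in the proof of \cref{lem:semi_convexity} together with the assumed local Lipschitzness of $f^*$.
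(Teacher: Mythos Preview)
Your proof is correct and follows essentially the same route as the paper: derive the key inequality $\Ll^\lambda_f(\mu) \leq \int_\Om f^*(h^\star[\mu])\,\d(\Bar{\mu}-\mu)$ from the dual representation, pair it with the $\dot{H}^{-1}(\mu)$–$\dot{H}^1(\mu)$ duality and the dissipation identity to obtain $\ell' \leq -C^{-1}\ell^2$, then integrate. The only cosmetic difference is that the paper obtains the key inequality via the pointwise subgradient bound $f^*(h) \geq f^*(0)+\Bar{m}\,h=\Bar{m}\,h$ (using $\partial f^*(0)=\Bar{m}$ and $f^*(0)=0$) and then drops the $-\tfrac{\lambda}{2}\|\alpha\|^2$ term, whereas you reach it more directly by testing the dual at $\Bar{\mu}$ with the suboptimal point $\alpha^\lambda_f[\mu]$ and using $\Ll^\lambda_f(\Bar{\mu})=0$.
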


\begin{proof}
    Note that it follows from the assumptions on $f$ and $\Bar{\nu}$ that $\inf \Ll^\lambda_f = 0$ and that this infimum is attained only for $\mu = \Bar{\nu} / \Bar{m}$.
    Thus, the last statement on the convergence of $\mu_t$ follows from the convergence of $\Ll^\lambda_f(\mu_t)$ to $0$ and from the lower semicontinuity of $\Ll^\lambda_f$ (see~\cref{sec:minimizer}).
    
    To obtain the convergence rate, consider $\mu \in \Pp(\Om)$ and note that by~\cref{eq:L_dual} we have:
    $$
    \Ll^\lambda_f(\mu) = \max_{\alpha} \Kk(\alpha, \mu) \, ,
    $$
    where for every $\alpha \in L^2(\rho)$ we defined:
    \begin{align*}
        \Kk (\alpha, \mu) \eqdef \int_\Om (\Fmap^\top \alpha) \d \Bar{\nu} - \int_\Om f^*(\Fmap^\top \alpha) \d \mu - \frac{\lambda}{2} \| \alpha \|^2_{L^2(\rho)},
    \end{align*}
    with $f^*$ the Legendre transform of $f$.
    Let us denote by $\alpha^\lambda = \alpha^\lambda_f[\mu]$ the maximizer of $\Kk (\alpha, \mu)$.
    Then, using the convexity of $f^*$, we have for every $\om \in \Om$:
    \begin{align*}
        f^*(0) + \partial f^*(0) (\Fmap^\top \alpha^\lambda)(\om) \leq f^*((\Fmap^\top \alpha^\lambda)(\om)).
    \end{align*}
    Also by assumption $\partial f(\Bar{m}) = 0$ and hence by properties of the Legendre transform $\partial f^*(0) = \Bar{m}$.
    Also $f^*(0) = -f(\Bar{m}) = 0$ and after integrating w.r.t.\@ $\Bar{\nu}$ :
    \begin{align*}
        \int_\Om (\Fmap^\top \alpha^\lambda) \d \Bar{\nu} \leq \int_\Om f^* (\Fmap^\top \alpha^\lambda) \frac{\d \Bar{\nu}}{\Bar{m}}.
    \end{align*}
    Then, replacing $\alpha$ by $\alpha^\lambda$ in $\Kk$ and using the previous inequality:
    \begin{align*}
       \Ll^\lambda_f(\mu) = \Kk(\alpha^\lambda, \mu) \leq \int_\Om f^* (\Fmap^\top \alpha^\lambda) \d ( \frac{\Bar{\nu}}{\Bar{m}} - \mu ) 
        \leq \left\| f^* (\Fmap^\top \alpha^\lambda) \right\|_{\Dot{H}^1(\mu)}   \| \frac{\Bar{\nu}}{\Bar{m}} - \mu \|_{\Dot{H}^{-1}(\mu)} 
    \end{align*}
    Also, by the gradient flow equation~\cref{eq:grad_flow_lambda}, the dissipation of $\Ll^\lambda_f$ along the gradient flow curve $(\mu_t)_{t \geq 0}$ is given for every $t \geq 0$ by:
    \begin{align*}
        \frac{\d}{\d t} \Ll^\lambda_f(\mu_t)
        = - \int_\Om \left\| \nabla (f^* (\Fmap^\top \alpha^\lambda_t) ) \right\|^2 \d \mu_t
        = - \left\| f^* (\Fmap^\top \alpha^\lambda_t) \right\|^2_{\Dot{H}^1(\mu_t)} ,
    \end{align*}
    where $\alpha^\lambda_t = \alpha^\lambda_f[\mu_t]$ maximizes $\Kk(\alpha, \mu_t)$.
    Thus using the previous inequality on $\Ll^\lambda_f(\mu_t)$ and that $ \| \frac{\Bar{\nu}}{\Bar{m}} - \mu_t \|_{\Dot{H}^{-1}(\mu_t)}$ is bounded, uniformly over $t \geq 0$, we get for every $t \geq 0$:
    \begin{align*}
        \frac{\d}{\d t}  \Ll^\lambda_f(\mu_t)  \leq - C^{-1} \Ll^\lambda_f(\mu_t)^2,
    \end{align*}
    for some constant $C > 0$. The desired convergence rate follows from this inequality by applying a Grönwall lemma.
\end{proof}

Let us comment on the assumptions of~\cref{thm:algebraic_convergence}.
The second assumption specifically, is automatically satisfied in case $\Bar{\nu}$ has bounded density and $\mu_t$ has bounded log-density, uniformly over $t \geq 0$.
Indeed, for $\mu \in \Pp(\Om)$ having a lower-bounded log-density, we have that the \emph{weighted} Sobolev seminorm $\|. \|_{\Dot{H}^1(\mu)}$ is lower-bounded by the classical Sobolev seminorm $\| . \|_{\Dot{H}^1(\pi)}$, where we recall that $\pi$ is the (normalized) Lebesgue measure over $\Om$. Precisely, if $\pi \ll \mu$ and $\d \pi / \d \mu \leq C_1$ then for every $f \in \Cc^1$:
\begin{align*}
    \| f \|_{\Dot{H}^1(\pi)} \leq C_1 \| f \|_{\Dot{H}^1(\mu)}.
\end{align*}
In this case, the weighted \emph{negative} Sobolev seminorm $\|. \|_{\Dot{H}^{-1}(\mu)}$ is upper-bounded by the seminorm $\|. \|_{\Dot{H}^{-1}(\pi)}$ and for every $\nu \in \Mm(\Om)$ with $\int_\Om \d \nu = 0$ we have:
\begin{align*}
    \|  \nu \|_{\Dot{H}^{-1}(\mu)} \leq  C_1 \|  \nu \|_{\Dot{H}^{-1}(\pi)}.
\end{align*}
Moreover, this last quantity can be estimated by the Wasserstein distance.
Indeed, for probability measures having bounded log-densities, the Wasserstein distance $\Ww_2$ is equivalent to the negative Sobolev seminorm $\|  . \|_{\Dot{H}^{-1}(\pi)}$.
If $\mu, \nu \in \Pp(\Om)$ are such that $\frac{\d \mu}{\d \pi}, \frac{\d \nu}{\d \pi} \leq C_2$ for some constant $C_2 > 0$ we have~\cite[Lem. 5.33 and Thm. 5.34]{santambrogio2015optimal}:
\begin{align*}
    \| \mu - \nu \|_{\Dot{H}^{-1}(\pi)} \leq C_2^{1/2} \Ww_2(\mu, \nu).
\end{align*}
Finally, the Wasserstein distance $\Ww_2(\mu, \nu)$ is always bounded by $\diam(\Om)$ which is finite, hence ensuring the second assumption of~\cref{thm:algebraic_convergence} is satisfied.

\subsection{Convergence to ultra-fast diffusion.}

The algebraic convergence rate stated in the above~\cref{thm:algebraic_convergence} in the case $\lambda > 0$ stands in contrast with the faster linear convergence stated in~\cref{thm:diffusion_convergence} in the case $\lambda = 0$.
For this reason, we are interested in comparing the gradient flow dynamics with and without regularization.

Below we assume $f(t) = |t|^r / (r-1)$ for some $r > 1$ and \cref{thm:gradient_flow_approximation} shows local uniform in time convergence of gradient flows of $\Ll^\lambda_r$ to gradient flows of $\Ll^0_r$, i.e. solutions to the ultra-fast diffusion equation~\cref{eq:L0_diffusion}, when the regularization strength $\lambda$ vanishes.
To obtain such a result we assume regularity on the density ratio $\frac{\d \Bar{\nu}}{\d \mu^\lambda_t}$.
Namely, we assume that the Legendre-conjugate $\partial f(\frac{\d \Bar{\nu}}{\d \mu^\lambda_t})$ stays bounded in the RKHS $\Hh$, defined as the image of the convolution operator $\Fmap^\top : L^2(\rho) \to \Cc^0(\Om)$ (\cref{eq:RKHS_characterization}).
Using classical results from the theory of inverse problems, such a \emph{source condition} ensures the dual variable $\alpha \in L^2(\rho)$ stays uniformly bounded for $\lambda > 0$ (\cref{lem:source}).
Therefore, provided $\Hh$ is sufficiently regular, such a regularity assumption ensures compactness of the Wasserstein gradient $\nabla \Ll^\lambda_r[\mu_t] = \nabla f^*(\Fmap^\top \alpha^\lambda_t)$ in $\Cc^1$ and allows passing to the limit in~\cref{eq:grad_flow_lambda_weak} to obtain~\cref{eq:grad_flow_L0_weak}.

\begin{thm} \label{thm:gradient_flow_approximation}
    Let~\cref{ass:teacher_student} hold with $\Bar{\nu}$ a positive measure with bounded log-density, consider the regularization function $f(t) = |t|^r / (r-1)$ for some $r > 1$ and let the assumptions of~\cref{thm:wellposed_lambda} and~\cref{thm:wellposed_diffusion} be satisfied.
    Consider some initialization $\mu_0 \in \Pp(\Om)$ s.t.\@ $\mu_0$ has bounded log-density.
    For $\lambda \geq 0$, let $(\mu^\lambda_t)_{t \geq 0}$ be the gradient flow of $\Ll^\lambda_r$ starting from $\mu_0$ in the sense of~\cref{def:grad_flow_lambda} (when $\lambda > 0$) and~\cref{def:grad_flow_L0} (when $\lambda = 0$).
    Moreover, for $\Hh$ defined by~\cref{eq:RKHS_characterization}, assume $\Hh$ is compactly embedded in $\Cc^1(\Om)$ and $\partial f (\frac{\d \Bar{\nu}}{\d \mu^\lambda_t})$ is bounded in $\Hh$, locally uniformly over $t \geq 0$ and uniformly over $\lambda > 0$.
    Then for any $T \geq 0$:
    \begin{align*}
        \lim_{\lambda \to 0^+} \sup_{t \in [0,T]} \Ww_2(\mu_t^0, \mu_t^\lambda) = 0.
    \end{align*}
\end{thm}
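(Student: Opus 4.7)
The plan is a compactness and stability argument for Wasserstein gradient flows. First I would derive $\lambda$-uniform bounds on the velocity field $\nabla \Ll^\lambda_r[\mu^\lambda_t]$ strong enough to yield Arzelà--Ascoli compactness of the family $(\mu^\lambda_\cdot)_{\lambda > 0}$ in $\Cc^0([0,T], (\Pp(\Om), \Ww_2))$; then pass to the limit in the weak form \cref{eq:grad_flow_lambda_weak} along a subsequence $\lambda_n \to 0^+$ to identify the limit $\mu^*_\cdot$ as a weak solution of \cref{eq:grad_flow_L0}; and finally invoke uniqueness of such solutions (\cref{thm:wellposed_diffusion}) to obtain $\mu^* = \mu^0$ and upgrade subsequential convergence to convergence of the whole family.

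\textbf{A priori estimates.} By the source condition $\partial f(\tfrac{\d\Bar{\nu}}{\d\mu^\lambda_t}) \in \Hh$, uniformly in $\lambda$ and locally uniformly in $t$, the referenced \cref{lem:source} yields a uniform $L^2(\rho)$ bound on the dual optimizer $\alpha^\lambda_t \eqdef \alpha^\lambda_r[\mu^\lambda_t]$. Since $\Fmap^\top$ is a partial isometry onto $\Hh$ (\cref{eq:RKHS_characterization}), the potential $h^\lambda_t \eqdef \Fmap^\top \alpha^\lambda_t$ lies in a bounded subset of $\Hh$, and the compact embedding $\Hh \hookrightarrow \Cc^1(\Om)$ places $\{h^\lambda_t : \lambda > 0,\, t \in [0,T]\}$ in a relatively compact subset of $\Cc^1(\Om)$. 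Because $f^*$ is locally $\Cc^{1,1}$ when $f(t) = |t|^r/(r-1)$ with $r > 1$, we obtain uniform $\Cc^{0,1}(\Om)$ bounds on $\nabla\Ll^\lambda_r[\mu^\lambda_t] = -\nabla f^*(h^\lambda_t)$, and in particular $\|\nabla\Ll^\lambda_r[\mu^\lambda_t]\|_\infty \leq C$.

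\textbf{Compactness and limit identification.} The flow representation \cref{prop:flow_representation} together with the uniform $L^\infty$ velocity bound gives $\Ww_2(\mu^\lambda_s, \mu^\lambda_t) \leq C|s-t|$ on $[0,T]$; since $(\Pp(\Om), \Ww_2)$ is compact, Arzelà--Ascoli extracts $\lambda_n \to 0^+$ with $\sup_{t \in [0,T]} \Ww_2(\mu^{\lambda_n}_t, \mu^*_t) \to 0$ for some narrowly continuous limit $\mu^*_\cdot$. In the weak continuity equation \cref{eq:grad_flow_lambda_weak}, the term linear in $\varphi$ passes to the limit by uniform narrow convergence of $\mu^{\lambda_n}_t$. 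For the velocity-field term, one extracts a further subsequence so that $h^{\lambda_n}_t \to h^*_t$ in $\Cc^1(\Om)$ for a.e.\ $t$, and identifies $h^*_t = \partial f(\tfrac{\d\Bar{\nu}}{\d\mu^*_t})$ through the duality relations \cref{eq:duality_mmd}: as $\lambda_n \to 0^+$, the primal optimizer $\nu^{\lambda_n}_t$ in \cref{eq:L_mmd} is forced towards the unique feasible measure $\Bar{\nu}$ (by injectivity of $\Fmap\star$ in \cref{ass:teacher_student}), hence $u^{\lambda_n}_t = \tfrac{\d\nu^{\lambda_n}_t}{\d\mu^{\lambda_n}_t} \to \tfrac{\d\Bar{\nu}}{\d\mu^*_t}$ and $h^{\lambda_n}_t = \partial f(u^{\lambda_n}_t) \to \partial f(\tfrac{\d\Bar{\nu}}{\d\mu^*_t})$. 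Consequently $\nabla f^*(h^{\lambda_n}_t) \to \|\Bar{\nu}\|^r_\TV \nabla(\Bar{\mu}/\mu^*_t)^r$, which is precisely the velocity field of the Wasserstein gradient flow of $\Ll^0_r$, yielding \cref{eq:grad_flow_L0_weak} in the limit.

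\textbf{Conclusion and main obstacle.} The source condition further provides uniform upper and lower bounds on the density ratio $\tfrac{\d\Bar{\nu}}{\d\mu^{\lambda_n}_t}$, which transfer to $\tfrac{\d\Bar{\nu}}{\d\mu^*_t}$ and supply the integrability requirements of \cref{def:grad_flow_L0}; the uniqueness assertion of \cref{thm:wellposed_diffusion} then forces $\mu^* = \mu^0$, and the standard argument ``every subsequence contains a further subsequence converging to the same limit'' upgrades this to the claimed uniform convergence. The main technical obstacle is the \emph{joint} passage to the limit in $(t,\om) \mapsto \nabla f^*(h^{\lambda_n}_t)(\om)$: the source condition controls $\om$-regularity, but one must additionally justify sufficient $t$-regularity of $\alpha^\lambda_t$ (e.g.\ through an equicontinuity estimate derived from the gradient-flow energy identity, or via a quantitative Tikhonov rate $\|h^\lambda_t - \partial f(\tfrac{\d\Bar{\nu}}{\d\mu^\lambda_t})\|_\Hh = O(\sqrt{\lambda})$ afforded by the source condition) in order to apply dominated convergence in the time integral and, crucially, to identify the cluster point $h^*_t$ with the $\lambda = 0$ dual at $\mu^*_t$ rather than some spurious limit.
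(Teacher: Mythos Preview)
Your proposal follows the same compactness-and-identification strategy as the paper, but two execution points deserve comment.

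First, the ``main obstacle'' you flag admits a simpler resolution than $t$-equicontinuity of $\alpha^\lambda_t$ or a quantitative Tikhonov rate. Once the subsequence $\lambda_n$ has been fixed by Arzel\`a--Ascoli (so that $\mu^{\lambda_n}_t \to \mu_t$ uniformly in $t$), the paper argues \emph{pointwise in $t$}: for each fixed $t$, compactness of $\{h^{\lambda_n}_t\}_n$ in $\Cc^1$ gives a cluster point $h_t$, and this cluster point is \emph{uniquely identified} as $\partial f(\tfrac{\d\Bar{\nu}}{\d\mu_t})$. Uniqueness of the cluster point forces the whole sequence $h^{\lambda_n}_t \to h_t$ in $\Cc^1$ without any $t$-dependent sub-extraction, and then dominated convergence (the integrands being uniformly bounded) passes to the limit in the time integral of \cref{eq:grad_flow_lambda_weak}. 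No $t$-regularity of $\alpha^\lambda_t$ is needed.

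Second, your identification of $h^*_t$ is in the wrong order and does not quite work as written. You argue $\nu^{\lambda_n}_t \to \Bar{\nu}$ (correct), then ``hence $u^{\lambda_n}_t = \tfrac{\d\nu^{\lambda_n}_t}{\d\mu^{\lambda_n}_t} \to \tfrac{\d\Bar{\nu}}{\d\mu^*_t}$'' and ``$h^{\lambda_n}_t = \partial f(u^{\lambda_n}_t) \to \partial f(\ldots)$''. But $u^{\lambda_n}_t$ is only defined $\mu^{\lambda_n}_t$-a.e., so convergence of these densities across varying base measures is not directly meaningful, and the relation $h^{\lambda_n}_t = \partial f(u^{\lambda_n}_t)$ holds only $\mu^{\lambda_n}_t$-a.e. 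The paper goes the other way: it first extracts a $\Cc^1$-cluster point $h_t$ of $h^{\lambda_n}_t$ (via the compact embedding $\Hh \hookrightarrow \Cc^1$), then uses the \emph{inverse} duality $u^{\lambda_n}_t = \partial f^*(h^{\lambda_n}_t)$, which is a continuous function converging uniformly to $\partial f^*(h_t)$; testing $\int \varphi\, \d\nu^{\lambda_n}_t = \int \varphi\, \partial f^*(h^{\lambda_n}_t)\, \d\mu^{\lambda_n}_t$ against the narrow convergences $\nu^{\lambda_n}_t \to \Bar{\nu}$ and $\mu^{\lambda_n}_t \to \mu_t$ then yields $\partial f^*(h_t) = \tfrac{\d\Bar{\nu}}{\d\mu_t}$, i.e.\ $h_t = \partial f(\tfrac{\d\Bar{\nu}}{\d\mu_t})$.
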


\begin{proof}
    For $\lambda > 0$, the curves $(\mu_t)^\lambda$ are gradient flows for the functionals $\Ll^\lambda_r$ and classical computations show that for every $t, s \geq 0$:
    \begin{align*}
        \Ww_2(\mu^\lambda_t, \mu^\lambda_s)^2
        \leq | t-s | \left| \Ll^\lambda_r(\mu^\lambda_t) - \Ll^\lambda_r(\mu^\lambda_s) \right|
        \leq | t-s | \Ll^0_r(\mu_0),
    \end{align*}
    where we used the fact that the functionals $\Ll^\lambda_r$ converge pointwise from below to $\Ll^0_r$. Thus, for $T \geq 0$, the sequence $(\mu_t^\lambda)_{t \in [0,T]}$ is uniformly equicontinuous with value in the compact space $\Pp(\Om)$ and Arzela-Ascoli's theorem ensures the existence of a subsequence $\lambda_n \to 0^+$ s.t.:
    \begin{align*}
         (\mu_t^\lambda)_{t \in [0,T]} \xrightarrow{n \to \infty} (\mu_t)_{t \in [0,T]} \in \Cc^0([0,T], \Pp(\Om)).
    \end{align*}
    To prove the result one needs to identify $\mu_t$ with $\mu_t^0$ and the supplementary regularity assumptions on $\mu_t^\lambda$ are sufficient for this purpose.
    Let us fix some $t \in [0, T]$ and denote by $u^\lambda_t = u^\lambda_f[\mu^\lambda_t]$ the minimizer in~\cref{eq:L_lambda_f}, $\nu^\lambda_t \in \Mm(\Om)$ the minimizer in~\cref{eq:L_mmd} s.t.\@ $\frac{\d \nu^\lambda_t}{\d \mu^\lambda_t} = u^\lambda_t$ and $\alpha^\lambda_t \in L^2(\rho)$ the maximizer in~\cref{eq:L_dual}.

    Then for every $\lambda > 0$, since $\mu_0$ has bounded log-density we have by the flow-map representation in~\cref{prop:flow_representation} that $\mu^\lambda_t$ has bounded log-density.
    Also, since $\Bar{\nu}$ is positive with bounded log-density and $\Fmap \star$ is injective, we have that $u^\dagger_t \eqdef \frac{\d \Bar{\nu}}{\d \mu^\lambda_t}$ is the unique solution to~\cref{eq:L0_f}.
    But then, by the characterization of the RKHS $\Hh$ in~\cref{thm:mercer}, we have that $\Fmap^\top : L^2(\rho) \to \Hh$ is a partial isometry and the assumption that $\partial f (\frac{\d \Bar{\nu}}{\d \mu^\lambda_t}) \in \Hh$ is equivalent to a source condition of the form~\cref{eq:source}.
    Hence by~\cref{lem:source} the dual variable $\alpha^\lambda_t$ is bounded in $L^2(\rho)$, uniformly over $\lambda > 0$, which implies that, up to extraction of a subsequence, $\Fmap^\top \alpha^\lambda_t$ converges to some $h_t$ in $\Cc^1(\Om)$.
    
    Also for every $\lambda > 0$, by the duality relations in~\cref{eq:duality}, we have $\frac{\d \nu^\lambda_t}{ \d \mu^\lambda_t} = \partial f^*(\Fmap^\top \alpha^\lambda_t)$ and hence --- recalling that $f(t) = |t|^r/(r-1)$ for some $r > 1$ --- $\frac{\d \nu^\lambda_t}{ \d \mu^\lambda_t} \to \partial f^*(h_t)$ in $\Cc^0(\Om)$.
    Since $\Ll^\lambda_r(\mu^\lambda_t) \leq \Ll^0_r(\mu_0)$ is bounded we have by~\cref{eq:L_mmd} that $\nu^\lambda_t \to \Bar{\nu}$ narrowly and then for every $\varphi \in \Cc^0(\Om)$:
    \begin{align*}
        \int_\Om \varphi \d \nu^\lambda_t = \int_\Om \varphi \frac{\d \nu^\lambda_t}{\d \mu^\lambda_t} \d \mu^\lambda_t \xrightarrow{\lambda \to 0^+} \int_\Om \varphi \d \Bar{\nu} = \int_\Om \varphi \partial f^*(h_t) \d \mu_t.
    \end{align*}
    This shows that $\Bar{\nu}$ is absolutely continuous w.r.t.\@ $\mu_t$ and that $\frac{\d \Bar{\nu}}{\d \mu_t} = \partial f^*(h_t)$.
    By duality, this is equivalent to $h_t = \partial f(\frac{\d \Bar{\nu}}{\d \mu_t})$, which shows that $\Fmap^\top \alpha^\lambda_t$ converges to $h_t$ in $\Cc^1(\Om)$.
    
    Finally, using the gradient flow equation in~\cref{eq:grad_flow_lambda_weak}, the previously described convergence of $\Fmap^\top \alpha^\lambda_t$ is sufficient to have for every test function $\varphi \in \Cc^\infty_c((0,T) \times \Om)$:
    \begin{align*}
        & \int_0^T \int_\Om \left( \partial_t \varphi_t - \frac{1}{2} \nabla \varphi_t \cdot \nabla f^*(\Fmap^\top \alpha^\lambda_t) \right) \d \mu^\lambda_t \d t = 0 \\
        \xrightarrow{\lambda \to 0^+}
        & \int_0^T \int_\Om \left( \partial_t \varphi_t - \frac{1}{2} \nabla \varphi_t \cdot \nabla f^*(\partial f(\frac{\d \Bar{\nu}}{\d \mu_t})) \right) \d \mu_t \d t = 0.
    \end{align*}
    Since $f(t) = |t|^r/(r-1)$ for some $r > 1$ the above equation is equivalent to~\cref{eq:grad_flow_L0_weak} which shows $\mu_t$ is the weak solution starting from $\mu_0$ of the ultra-fast diffusion equation~\cref{eq:L0_diffusion} according to~\cref{def:grad_flow_L0}, that is $\mu_t = \mu^0_t$.
\end{proof}

The proof of the above~\cref{thm:gradient_flow_approximation} relies on the following result on solutions to inverse problems with nonlinear regularization~\cite{benning2018modern}.
The following~\cref{lem:source} is similar to~\cite[Prop. 3]{iglesias2018note}.

\begin{lem} \label{lem:source}
    Let $f$ satisfy~\cref{ass:regularization} and~\cref{ass:teacher_student}.
    For $\mu \in \Pp(\Om)$, let $u^\dagger \in L^1(\mu)$ be a solution of \cref{eq:L0_f}.
    We say $u^\dagger$ satisfies a \emph{source condition} if there exists $\alpha \in L^2(\rho)$ s.t.
    \begin{align} \label{eq:source}
        \Fmap^\top \alpha \in \partial f (u^\dagger) \quad \text{in $L^1(\mu)$}.
    \end{align}
    Then in this case, noting $\alpha^\dagger \in L^2(\rho)$ the $\alpha$ of minimal norm satisfying the above source condition, we have for every $\lambda > 0$:
    \begin{align*}
        \| \alpha^\lambda_f[\mu] \|_{L^2(\rho)} \leq  \| \alpha^\dagger \|_{L^2(\rho)} \quad \text{and} \quad \alpha^\lambda_f[\mu] \xrightarrow{\lambda \to 0^+} \alpha^\dagger
    \end{align*}
    where $\alpha^\lambda_f[\mu]$ is the solution to~\cref{eq:L_dual}.
\end{lem}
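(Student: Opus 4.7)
The plan is to recognize the source condition as an optimality condition for the $\lambda = 0$ dual problem in \cref{prop:duality}, and then to exploit the strong concavity that the Tikhonov term $-\tfrac{\lambda}{2}\|\alpha\|^2$ injects into the regularized dual to deliver both the norm estimate and (via weak compactness) the convergence. Writing the dual objective as $D^\lambda(\alpha) = D^0(\alpha) - \tfrac{\lambda}{2}\|\alpha\|^2_{L^2(\rho)}$ with $D^0(\alpha) \eqdef -\int_\Om f^*(\Fmap^\top \alpha) \d \mu + \langle \alpha, Y\rangle_{L^2(\rho)}$, the first step is to observe that every $\alpha$ satisfying the source condition is automatically a maximizer of $D^0$. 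Indeed the Fenchel--Young equality case $f(u^\dagger) + f^*(\Fmap^\top \alpha) = u^\dagger \, \Fmap^\top \alpha$ holds $\mu$-a.e., and integrating while using $\Fmap_\mu \cdot u^\dagger = Y$ gives $D^0(\alpha) = \int_\Om f(u^\dagger) \d \mu = \Ll^0_f(\mu)$, which by \cref{prop:duality} is $\sup D^0$. The set of such maximizers is therefore a closed convex subset of $L^2(\rho)$ (the level set of the concave upper-semicontinuous functional $D^0$), so the minimum-norm element $\alpha^\dagger$ is well-defined and unique as the projection of $0$ onto this set.

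The second step derives the norm estimate. Since $\alpha^\lambda \eqdef \alpha^\lambda_f[\mu]$ maximizes the $\lambda$-strongly concave $D^\lambda$, one has
\begin{align*}
    D^\lambda(\alpha^\lambda) - D^\lambda(\alpha^\dagger) \geq \tfrac{\lambda}{2}\|\alpha^\lambda - \alpha^\dagger\|^2_{L^2(\rho)}.
\end{align*}
Combining this with $D^0(\alpha^\lambda) \leq D^0(\alpha^\dagger)$ (Step~1) and expanding $D^\lambda = D^0 - \tfrac{\lambda}{2}\|\cdot\|^2$, the $D^0$ contributions cancel favorably and only the quadratic terms survive:
\begin{align*}
    \|\alpha^\dagger\|^2_{L^2(\rho)} - \|\alpha^\lambda\|^2_{L^2(\rho)} \geq \|\alpha^\lambda - \alpha^\dagger\|^2_{L^2(\rho)}.
\end{align*}
This immediately delivers the desired bound $\|\alpha^\lambda\|_{L^2(\rho)} \leq \|\alpha^\dagger\|_{L^2(\rho)}$ and, after expanding the right-hand side, the sharper inequality $\|\alpha^\lambda\|^2_{L^2(\rho)} \leq \langle \alpha^\lambda, \alpha^\dagger\rangle_{L^2(\rho)}$ that will be used to identify the limit.

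For convergence, the uniform bound allows extracting a weak cluster point $\alpha^* \in L^2(\rho)$ of $(\alpha^\lambda)_{\lambda > 0}$ along some $\lambda_n \to 0^+$ by Banach--Alaoglu. To identify $\alpha^* = \alpha^\dagger$, I first show $\alpha^*$ also satisfies the source condition: using $D^{\lambda_n}(\alpha^{\lambda_n}) = \Ll^{\lambda_n}_f(\mu)$ from \cref{prop:duality}, the sandwich $\Ll^{\lambda_n}_f(\mu) \leq D^0(\alpha^{\lambda_n}) \leq \Ll^0_f(\mu)$ combined with $\Ll^{\lambda_n}_f(\mu) \to \Ll^0_f(\mu)$ (constant-sequence case of \cref{lem:gamma}) forces $D^0(\alpha^{\lambda_n}) \to \Ll^0_f(\mu)$; weak upper-semicontinuity of $D^0$ then gives $D^0(\alpha^*) = \Ll^0_f(\mu) = \sup D^0$, so $\alpha^*$ is a dual maximizer and satisfies the source condition. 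Passing weakly to the limit in $\|\alpha^{\lambda_n}\|^2 \leq \langle \alpha^{\lambda_n}, \alpha^\dagger\rangle$ also gives $\|\alpha^*\|^2 \leq \langle \alpha^*, \alpha^\dagger\rangle \leq \|\alpha^*\|\|\alpha^\dagger\|$, hence $\|\alpha^*\| \leq \|\alpha^\dagger\|$; the minimality of $\|\alpha^\dagger\|$ over the set of maximizers then forces $\alpha^* = \alpha^\dagger$. Finally, the pinching $\|\alpha^\dagger\| = \|\alpha^*\| \leq \liminf_n \|\alpha^{\lambda_n}\| \leq \|\alpha^\dagger\|$ upgrades weak convergence to strong convergence in the Hilbert space $L^2(\rho)$, and uniqueness of the limit promotes subsequential convergence to convergence of the whole family.

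The most delicate ingredient is the weak upper-semicontinuity of $D^0$ invoked in Step~3: it follows from $D^0$ being concave and strongly continuous on $L^2(\rho)$ (strong continuity of $\alpha \mapsto \int_\Om f^*(\Fmap^\top \alpha) \d \mu$ is a consequence of $\Fmap^\top : L^2(\rho) \to \Cc^0(\Om)$ being bounded and $f^*$ being continuous, both guaranteed by \cref{ass:regularization} and $\fmap \in L^2(\rho, \Cc^0)$), via the standard convex-analysis fact that strongly continuous concave functionals on a Banach space are weakly u.s.c. Once the key inequality $\|\alpha^\dagger\|^2 - \|\alpha^\lambda\|^2 \geq \|\alpha^\lambda - \alpha^\dagger\|^2$ of Step~2 is in place, the remainder reduces to standard Hilbert-space manipulations.
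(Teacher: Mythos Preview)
Your proof is correct and follows essentially the same route as the paper's: both recognize the source condition as dual optimality at $\lambda=0$, compare the dual values at $\alpha^\lambda$ and $\alpha^\dagger$ to obtain the norm bound, and then combine weak compactness with identification of the weak limit as a dual maximizer and norm-pinching to upgrade to strong convergence. Your use of strong concavity of $D^\lambda$ yields the slightly sharper inequality $\|\alpha^\dagger\|^2 - \|\alpha^\lambda\|^2 \geq \|\alpha^\lambda - \alpha^\dagger\|^2$ (the paper only extracts $\|\alpha^\lambda\| \leq \|\alpha^\dagger\|$ from the plain optimality inequality), and your appeal to \cref{lem:gamma} for $\Ll^{\lambda}_f(\mu) \to \Ll^0_f(\mu)$ is formally over-citing---that lemma assumes $\fmap \in L^2(\rho,\Cc^{0,1})$ and $f^* \in \Cc^{0,1}_\loc$, which are not hypothesized here---though the needed pointwise limit follows directly from the sandwich $D^0(\alpha^\dagger) - \tfrac{\lambda}{2}\|\alpha^\dagger\|^2 \leq \Ll^\lambda_f(\mu) \leq \Ll^0_f(\mu)$.
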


\begin{proof}
    Let $u^\dagger \in L^1(\mu)$ and $\alpha^\dagger \in L^2(\rho)$ be as in the statement. By the source condition~\cref{eq:source} we have in $L^1(\mu)$:
    \begin{align*}
        - f^*(\Fmap^\top \alpha^\dagger) + (\Fmap^\top \alpha^\dagger) u^\dagger \geq f(u^\dagger) 
    \end{align*}
    and integrating w.r.t.\@ $\mu$ and using that $\int_\Om (\Fmap^\top \alpha^\dagger) u^\dagger \d \mu = \left< \alpha^\dagger, Y \right>_{L^2(\rho)}$ we obtain:
    \begin{align*}
        - \int_\Om f^*(\Fmap^\top \alpha^\dagger) \d \mu + \left< \alpha^\dagger, Y \right>_{L^2(\rho)} \geq \int_\Om f(u^\dagger) \d \mu = \inf_{\Fmap_\mu \cdot u = Y} \int_\Om f(u) \d \mu.
    \end{align*}
    Thus, $\alpha^\dagger$ achieves the supremum in~\cref{eq:L_dual} with $\lambda=0$ and we have for every other $\alpha \in L^2(\rho)$:
    \begin{align*}
        - \int_\Om f^*(\Fmap^\top \alpha^\dagger) \d \mu + \left< \alpha^\dagger, Y \right>_{L^2(\rho)} \geq - \int_\Om f^*(\Fmap^\top \alpha) \d \mu + \left< \alpha, Y \right>_{L^2(\rho)}.
    \end{align*}
    Moreover, for $\lambda > 0$, noting $\alpha^\lambda \eqdef \alpha^\lambda_f[\mu]$, we have by definition:
    \begin{align*}
        - \int_\Om f^*(\Fmap^\top \alpha^\lambda) \d \mu + \left< \alpha^\lambda, Y \right>_{L^2(\rho)} - \frac{\lambda}{2} \| \alpha^\lambda \|^2_{L^2(\rho)} \geq - \int_\Om f^*(\Fmap^\top \alpha^\dagger) \d \mu + \left< \alpha^\dagger, Y \right>_{L^2(\rho)} - \frac{\lambda}{2} \| \alpha^\dagger \|^2_{L^2(\rho)}.
    \end{align*}
    Substracting the two previous inequalities and simplifying gives:
    \begin{align*}
        \| \alpha^\lambda \|_{L^2(\rho)} \leq \| \alpha^\dagger \|_{L^2(\rho)}.
    \end{align*}
    Thus $\alpha^\lambda$ is bounded, uniformly over $\lambda > 0$. For the convergence part, note that since it is bounded in $L^2(\rho)$ it converges weakly to some $\alpha^0 \in L^2(\rho)$. Also taking the optimality condition for $\alpha^\lambda$ and taking the limit when $\lambda \to 0^+$ we obtain for every $\alpha \in L^2(\rho)$:
    \begin{align*}
        & - \int_\Om f^*(\Fmap^\top \alpha^\lambda) \d \mu + \left< \alpha^\lambda, Y \right>_{L^2(\rho)} - \frac{\lambda}{2} \| \alpha^\lambda \|^2_{L^2(\rho)} \geq - \int_\Om f^*(\Fmap^\top \alpha) \d \mu + \left< \alpha, Y \right>_{L^2(\rho)} - \frac{\lambda}{2} \| \alpha \|^2_{L^2(\rho)} \\
        \xrightarrow{\lambda \to 0+} &
        - \int_\Om f^*(\Fmap^\top \alpha^0) \d \mu + \left< \alpha^0, Y \right>_{L^2(\rho)} \geq - \int_\Om f^*(\Fmap^\top \alpha) \d \mu + \left< \alpha, Y \right>_{L^2(\rho)},
    \end{align*}
    which shows $\alpha^0$ is also a maximizer of the dual problem~\cref{eq:L_dual} when $\lambda = 0$ and, as a consequence, also satisfies the source condition~\cref{eq:source}.
    But, by minimality of the norm of $\alpha^\dagger$ and by weak lower semicontinuity of the norm we have:
    \begin{align*}
        \| \alpha^\dagger \|_{L^2(\rho)} \leq  \| \alpha^0 \|_{L^2(\rho)} \leq \liminf_{\lambda \to 0^+}  \| \alpha^\lambda \|_{L^2(\rho)} \leq  \| \alpha^\dagger \|_{L^2(\rho)}
    \end{align*}
    which shows that in fact $\alpha^\lambda \to \alpha^\dagger$ strongly in $L^2(\rho)$.
\end{proof}

\section{Numerics} \label{sec:numerics}

We report in this section numerical results.
First, to assess the validity of our theory, we tested the VarPro algorithm on simple low-dimensional examples with synthetic data:
experiments with a $1$-dimensional feature space are detailed in~\cref{subsec:numerics_1d} and supplementary experiments in $2$-d are detailed in~\cref{sec:numerics_2d}.
Those experiments indicate that, when the regularization is sufficiently low, the VarPro dynamic indeed enters an ultra-fast diffusion regime where the student feature distribution converges to the teacher's at a linear rate.
Moreover, if the stepsize is sufficiently small, the VarPro dynamic can also be efficiently approximated by a two-timescale learning strategy.

Finally, to investigate the large-scale applicability and generalization capabilities of the VarPro algorithm, we tested it on an image classification problem with the CIFAR10 dataset~\cite{krizhevsky2009learning} and compare its performances with other standard stochastic optimization methods.
Those results are detailed in~\cref{subsec:numerics_cifar}.

The code for reproducing the results is available at:
\url{https://github.com/rbarboni/VarPro}.

\subsection{Single-hidden-layer neural networks with $1$-dimensional feature space} \label{subsec:numerics_1d}

We tested the VarPro algorithm for the training of a simple SHL with features on the $1$-dimensional sphere $\SS^1$.
The feature space is here $\Om = \SS^1$, the data dimension is $d=2$, and the feature map is given by $\fmap : (\om, x) \in \SS^1 \times \RR^2 \mapsto \relu(\om^\top x)$ where $\relu : t \in \RR \mapsto \max(0,t)$ is the \emph{Rectified Linear Unit} activation.
Recalling~\cref{eq:SHL}, we thus consider a SHL of width $M$  defined for inner weights $\{ \om_i \}_{i=1}^M \in (\SS^1)^M$ and outer weights $\{u_i \}_{i=1} \in  \RR^M$ by:
\begin{align} \label{eq:SHL_relu1d}
    F_{\{(\om_i, u_i)\}} : x \in \RR^2 \mapsto \frac{1}{M} \sum_{i=1}^M u_i \relu(\om_i^\top x) \, .
\end{align}
We consider a target signal $Y$ that is given by a teacher network of width $\Bar{M}$:
\begin{align*}
    \forall x \in \RR^2, \quad Y(x) = \frac{1}{\Bar{M}} \sum_{i=1}^{\Bar{M}} \relu( \Bar{\om}_i^\top x) \, .
\end{align*}
The teacher feature distribution is hence $\Bar{\mu}_\gamma = \frac{1}{\Bar{M}} \sum_{i=1}^{\Bar{M}} \delta_{\Bar{\om}_i}$ with i.i.d. features $\Bar{\om}_i \sim \mu_\gamma$ where, for $\gamma > 0$, we consider $\mu_\gamma \eqdef \left( \frac{2}{3} \delta_{\om_1^*} + \frac{1}{3} \delta_{\om_2^*} \right) \star \pi_\gamma$.
The target feature modes are here fixed to $\om_1^* = 0$ and $\om_2^* = 0.4 \pi$ and $\pi_\gamma \in \Pp(\SS^1)$ is the distribution with density:
\begin{align} \label{eq:mu_gamma}
    \pi_\gamma(\om) \propto \frac{1}{1+ \gamma \sin^2(\om / 2)}, \quad \forall \om \in \SS^1 \, , 
\end{align}
where by abuse of notation we identify $\om \in \SS^1$ with the corresponding angle in $\nicefrac{\RR}{2\pi\ZZ}$.
In particular, the parameter $\gamma \geq 0$ controls the shape of the distribution $\mu_\gamma$ and the concentration around its modes:
when $\gamma = 0$, $\mu_\gamma$ is the uniform distribution and, when $\gamma \to \infty$, we have $\mu_\gamma \to \mu_\infty \eqdef \frac{2}{3} \delta_{\om_1^*} + \frac{1}{3} \delta_{\om_2^*}$.
Plots of the density $\mu_\gamma$ and of the corresponding teacher signal are shown in~\cref{fig:teacher_signal}.
Finally, we consider the input data $x$ to be distributed according to an empirical distribution $\Hat{\rho} = \frac{1}{N} \sum_{i=1}^N \delta_{x_i}$ with i.i.d. standard Gaussian samples $x_i \sim \Nn(0, \Id)$.

Recall that, for a regularization function $f: \RR \to \RR$ and a regularization strength $\lambda > 0$, the reduced risk defined by~\cref{eq:L_lambda_f} associated to the features $\{ \om_i \}_{i=1}^M \in (\SS^1)^M$ reads:
\begin{align} \label{eq:reduced_risk_SHL}
    \Hat{\Ll}^\lambda_f (\{ \om_i \}_{i=1}^M)
    = \min_{u \in \RR^M} \frac{1}{2 \lambda N} \sum_{j=1}^N \left| F_{\{(\om_i, u_i)\}}(x_j) - Y(x_j) \right|^2 + \frac{1}{M} \sum_{i=1}^M f(u_i).
\end{align}
In this setting, the \emph{VarPro algorithm} is the time discretization of the particle evolution~\cref{eq:gradient_flow} and consists in performing gradient descent over the reduced risk $\Hat{\Ll}^\lambda_f$:
\begin{align} \label{eq:gradient_descent_SHL}
    \forall i \in \lbrace 1, ..., M \rbrace, \forall k \geq 0, \quad \om_i^{k+1} = \om_i^k - M \tau \nabla_{\om_i} \hat{\Ll}^\lambda_f ( \lbrace \om_i^k \rbrace_{1 \leq i \leq M} ) \,.
\end{align}
where $\tau > 0$ is some stepsize parameter and $\{ \om_i^0 \}_{i=1}^M \in (\SS^1)^M$ is some random  initialization.
We consider here an uniform initialization with i.i.d.\@ $\om_i^0 \sim \Uu(\SS^1)$.

\begin{figure}
    \centering
    \centerline{
    \includegraphics[scale=0.75,trim={0 0 0 0},clip]{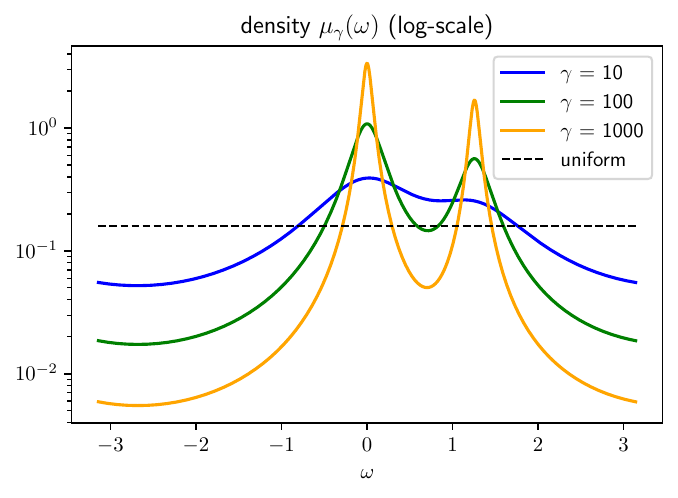}
    \includegraphics[scale=0.38,trim={0 0 10 0},clip]{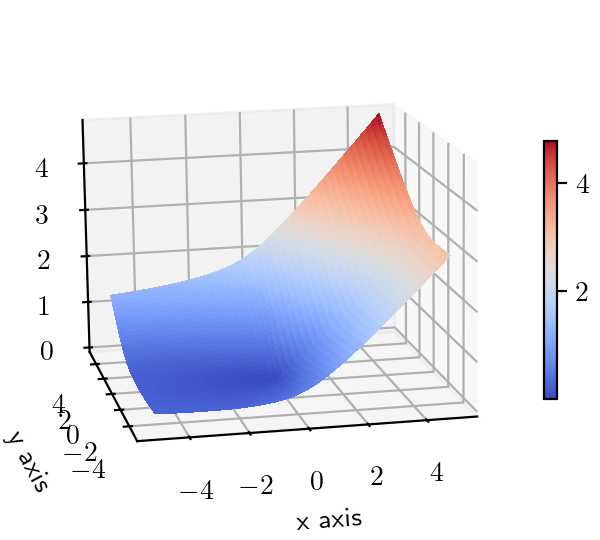}
    }
    \caption{Left: density of the teacher distributions $\mu_\gamma$ for $\gamma \in \{ 10, 100, 1000 \}$. Right: corresponding teacher signal for $\gamma = 100$.}
    \label{fig:teacher_signal}
\end{figure}

\paragraph{Experimental setting}

We test the performance of the VarPro algorithm (\cref{eq:gradient_descent_SHL}) for the training of SHLs (\cref{eq:SHL_relu1d}) of varying width $M \in \{ 32, 128, 512, 1024 \}$.
We use either the ``biased'' quadratic regularization $f_b : t \mapsto \frac{1}{2} t^2$, for which the minimizer of the reduced risk differs from $\Bar{\mu}_\gamma$, or the ``unbiased'' quadratic regularization $f_u : t \mapsto \frac{1}{2}|t-1|^2$, for which the minimizer of the reduced risk is the teacher distribution $\Bar{\mu}_\gamma$ (c.f.~\cref{sec:minimizer}), and we consider varying regularization strength $\lambda \in \{10^{-1}, 10^{-2}, 10^{-3}, 10^{-4} \}$.
We also consider different teacher distributions $\Bar{\mu}_\gamma$ by changing the parameter $\gamma \in \{10, 100, 1000 \}$.
In order to stick with our theoretical results, we consider a number of data samples $N = 4096 \gg M$, such that the injectivity assumption in~\cref{ass:teacher_student} is satisfied, and we consider the teacher has a width $\Bar{M} = 4096 \gg M$, such that the approximation $\Bar{\mu}_\gamma \simeq \mu_\gamma$ holds.
Finally, to closely model the gradient flow equation~\cref{eq:empirical_gradient_flow} we consider a stepsize $\tau = 2^{-10}$.

\paragraph{Qualitative comparison with ultra-fast diffusion on $\SS^1$}

Conveniently, the choice of the $1$-dimensional domain $\SS^1$ enables the use of standard numerical schemes to solve the weighted ultra-fast diffusion equation~\cref{eq:grad_flow_L0}.
This setting thus allows for comparison of the solutions to ultra-fast diffusion computed with high accuracy on a fine grid --- we use here the ``LSODA'' integration method~\cite{hindmarsh1983odepack} --- and the training dynamics computed with our VarPro method with particles~\cref{eq:gradient_descent_SHL}.
The two dynamics can be compared in~\cref{fig:density_evolution}.
Qualitatively, one can observe a close resemblance between the two dynamics, especially around the modes of the target distribution $\mu_\gamma$ where the densities progressively concentrates.
While the learned feature distribution seems to concentrate less than the exact solution, this is likely due to the convolution with a gaussian kernel which is used to plot the density.
However, the dynamics seems to differ more on the sides of the plots.
These are indeed regions where the density $\mu_t$ becomes very low and thus where approximation of the velocity field $\nabla \left( \frac{\mu_\gamma}{\mu_t} \right)^2$ likely suffers from numerical instabilities.

\begin{figure}
    \centering
    \includegraphics[scale=0.9,trim={20 15 0 0}]{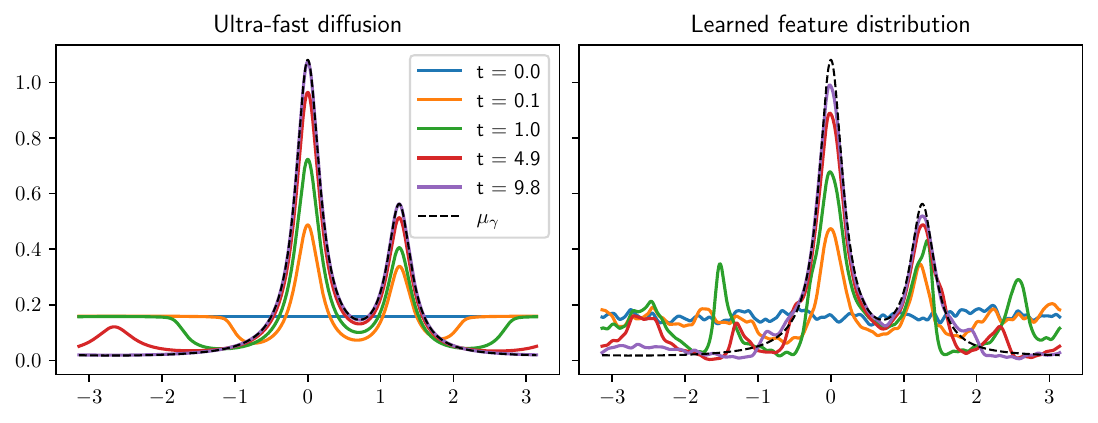}
    \caption{Left: Solution $\mu_t$ to the ultra-fast diffusion~\cref{eq:grad_flow_L0} equation with exponent $r=2$ and weights $\mu_\gamma$, $\gamma=100$. Right: Evolution of the feature distribution learned by gradient descent on a SHL of width $M = 1024$ for the minimization the reduced risk $\Hat{\Ll}^\lambda_f$ with regularization function $f_b : t \mapsto \frac{1}{2} t^2$ and $\lambda = 10^{-4}$ (c.f.~\cref{eq:reduced_risk_SHL,eq:gradient_descent_SHL}). The density is obtained by convolving the empirical feature distribution $\Hat{\mu}$ with a gaussian kernel of variance $\sigma^2 = (0.03)^2$ and the plots are averages over $6$ independent runs.}
    \label{fig:density_evolution}
\end{figure}

\paragraph{Neural networks of varying width}

We investigate the behavior of the gradient descent dynamic for the minimization of the reduced risk (\cref{eq:gradient_descent_SHL}) when varying the width $M$ of the neural network.
For this purpose we consider the teacher distribution $\Bar{\mu}_\gamma \simeq \mu_\gamma$ with $\gamma = 100$, fix the regularization strength to $\lambda = 10^{-3}$ and consider SHLs of varying width $M \in \{ 32, 128, 512, 1024 \}$ with regularization either $f_b$ or $f_u$.

In this setting, \cref{fig:relu1d_width_risk} reports evolution of the reduced risk $\Hat{\Ll}^\lambda_f$ along iterations of gradient descent.
In the case of the biased regularization $f_b$, the reduced risk monotonically decreases to the same (strictly positive) value for every width.
This is normal since one should expect the feature distribution to converge to a minimizer $\Bar{\mu}_\gamma^\lambda \neq \Bar{\mu}_\gamma$ for which the reduced risk is strictly positive.
On the contrary, in the case of the unbiased regularization $f_u$, the reduced risk monotonically decreases to different values depending on the width $M$.
Indeed, in this case the gradient descent is expected to converge to the true teacher distribution $\Bar{\mu}_\gamma \simeq \mu_\gamma$ and these different values corresponds to different levels of discretization of $\mu_\gamma$.
Also, in this case, the convergence speed seems to increase with the width.

\begin{figure}
    \centering
    \includegraphics[scale=0.95,trim={20 10 0 0}]{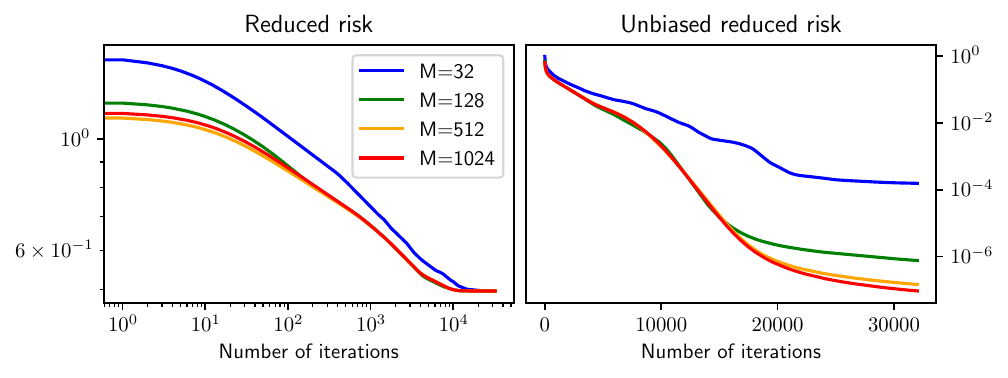}
    \caption{Evolution of the reduced risk $\Hat{\Ll}^\lambda_f$ (\cref{eq:gradient_descent_SHL}) along iterations of gradient descent for a SHL of width $M \in \{32, 128, 512, 1024 \}$. The regularization strength is $\lambda = 10^{-3}$ and the regularization function is either $f_b : t \mapsto \frac{1}{2} t^2$ (left) or $f_u : t \mapsto \frac{1}{2} |t-1|^2$ (right). Plots are averages over $6$ independent runs.}
    \label{fig:relu1d_width_risk}
\end{figure}

In~\cref{fig:relu1d_width_distance}, we report the evolution of a MMD distance between the learned feature distribution and two references which are the teacher distribution $\Bar{\mu}_\gamma \simeq \mu_\gamma$ and the exact ultra-fast diffusion dynamic.
We used the MMD distance~\cref{eq:MMD} associated to the energy-distance kernel $\kappa(\om, \om') = -\|\om - \om'\|$.
In coherence with what was observed before, in the case of the unbiased regularization $f_u$, the distance to the teacher distribution decreases monotonically to some value which is lower when the width increases.
Illustrating our~\cref{thm:algebraic_convergence}, this shows gradient descent converges to a feature distribution discretizing the teacher distribution.
On the contrary, when considering the biased regularization $f_b$, the positive regularization strength introduces a bias.
In turn, plots of the distance to the diffusion dynamic show this distance decreases with the width, which is normal since a higher number of features corresponds to a better discretization.
These plots also show that gradient descent stays close from the diffusion limit, as predicted by~\cref{thm:diffusion_convergence}.

\begin{figure}
    \centering
    \includegraphics[scale=1,trim={10 10 0 0}]{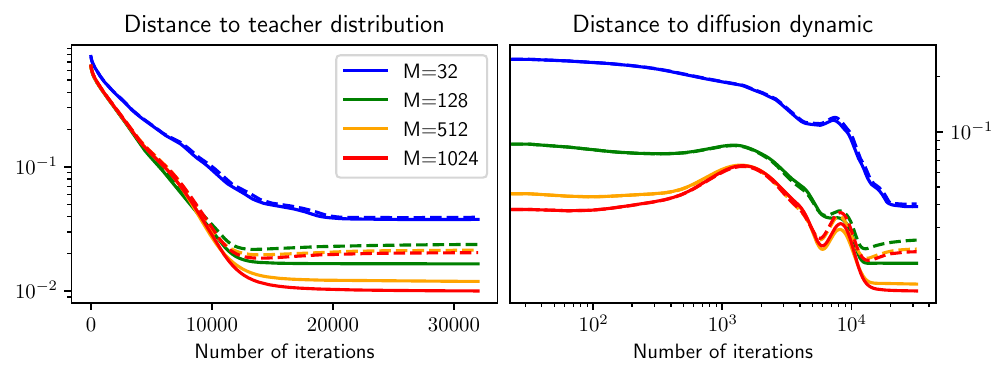}
    \caption{
    Evolution of the MMD distance to the teacher distribution and to the diffusion dynamic along iterations of gradient descent over the reduced risk $\Hat{\Ll}^\lambda_f$ (\cref{eq:gradient_descent_SHL}) for a SHL of width $M \in \{32, 128, 512, 1024 \}$.
    Left: distance to the teacher distribution $\Bar{\mu}_\gamma \simeq \mu_\gamma$ ($\gamma = 100$).
    Right: distance to the diffusion dynamic.
    The regularization strength is $\lambda = 10^{-3}$ and the regularization function is either $f_b : t \mapsto \frac{1}{2} t^2$ (dashed) or $f_u : t \mapsto \frac{1}{2} |t-1|^2$ (plain).
    Plots are averages over $6$ independent runs.}
    \label{fig:relu1d_width_distance}
\end{figure}

\paragraph{Role of the regularization strength $\lambda$}

We now investigate the role played in the gradient descent dynamic by the regularization strength $\lambda > 0$.
For this purpose, we consider a neural network of fixed width $M=1024$ and train it with gradient descent for the minimization of the reduced risk $\Hat{\Ll}^\lambda_f$ for varying values $\lambda \in \{10^{-1}, 10^{-2}, 10^{-3}, 10^{-4} \}$ of the regularization strength.

Evolution of the MMD distance between the learned feature distribution and respectively the teacher feature distribution and the diffusion dynamic are shown in~\cref{fig:relu1d_lmbda_distance}.
On the plots of distance to the teacher distribution, one can first observe that the bias introduced in the case of the regularization $f_b$ decreases with the regularization strength $\lambda$.
This illustrates well our~\cref{prop:convergence_minimizers}, showing convergence of minimizers of the reduced risk towards the true teacher distribution when the regularization strength vanishes.
In the case of the unbiased regularization $f_u$, one can observe a difference of behavior between low regularization regimes $\lambda \in \{10^{-2}, 10^{-3}, 10^{-4} \}$ and large regularization $\lambda = 10^{-1}$.
While in the former case convergence seems to operate at a linear rate, which is the convergence rate of the diffusion limit (\cref{thm:diffusion_convergence}), in the latter the convergence rate is significantly slower which could indicate an algebraic rate as predicted by~\cref{thm:algebraic_convergence}.
Indeed, $\lambda = 10^{-1}$ is the order of magnitude of the most significant eigenvalues of the tangent kernel $K_\mu$ (numerically, the spectrum of $K_\mu$ is, in descending order, $\Sp(K_\mu) \simeq \left( 0.2, 0.1, 0.1, 0.02, ... \right)$).
Recalling that the risk can be expressed in terms of $(K_\mu + \lambda)^{-1}$~(\cref{eq:kernel_learning}), an explanation is thus that, the unregularized reduced risk is well approximated only when $\lambda \ll K_\mu$.
In contrast, in the high regularization regime ($\lambda \gtrsim K_\mu$), the reduced risk receive more influence from the MMD distance term than from the $f$-divergence term in~\cref{eq:L_mmd} and gradient flows of MMD distances are known to be associated with slower convergence rates.

Finally, plots of the distance between the gradient flow and ultra-fast diffusion dynamics show this distance is lower and stays also lower for longer time when the regularization strength decreases.
This supports the ``local uniform in time convergence'' behavior predicted by~\cref{thm:gradient_flow_approximation}.
Note however that this result says nothing about the long time behavior of the dynamic, which is why the number of iterations is displayed in log-scale.

\begin{figure}
    \centering
    \includegraphics[scale=0.95,trim={20 10 0 0}]{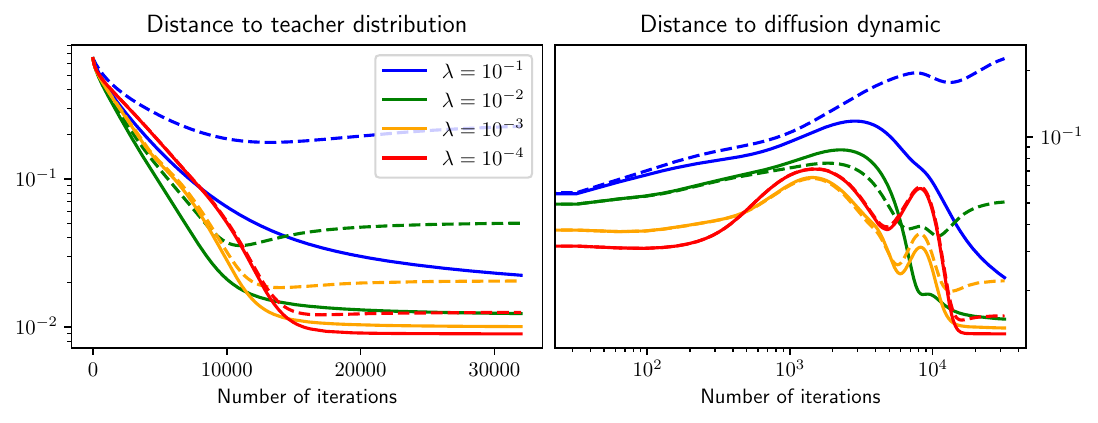}
    \caption{
    Evolution of the MMD distance to the teacher distribution and to the diffusion dynamic along iterations of gradient descent over the reduced risk $\Hat{\Ll}^\lambda_f$ (\cref{eq:gradient_descent_SHL}) for a SHL of width $M=1024$ with regularization $\lambda \in \{10^{-1}, 10^{-2}, 10^{-3}, 10^{-4} \}$.
    Left: distance to the teacher distribution $\Bar{\mu}_\gamma \simeq \mu_\gamma$ ($\gamma=100$).
    Right: distance to the diffusion dynamic.
    The regularization function is either $f_b : t \mapsto \frac{1}{2} t^2$ (dashed) or $f_u : t \mapsto \frac{1}{2} |t-1|^2$ (plain).
    Plots are averages over $6$ independent runs.}
    \label{fig:relu1d_lmbda_distance}
\end{figure}

\paragraph{Role of the shape of the teacher distribution}

We investigate the role played by the shape of the distribution $\mu_\gamma$, controlled by the parameter $\gamma$.
We consider teacher distributions $\Bar{\mu}_\gamma \simeq \mu_\gamma$ for $\gamma \in \{10, 100, 1000 \}$ and  train a neural network of fixed width $M=1024$ with gradient descent over the reduced risk (\cref{eq:gradient_descent_SHL}) with the unbiased regularization $f_u$ and $\lambda=10^{-4}$.
Plot of the log-densities $\mu_\gamma$ are shown in~\cref{fig:teacher_signal}.
In particular the distribution $\mu_\gamma$ approximates the atomic distribution $\mu_\infty = \frac{2}{3} \delta_{\om_1^*} + \frac{1}{3} \delta_{\om_2^*}$ in the limit $\gamma \to \infty$.

Plots of the evolution of the reduced risk, of the distance to the teacher distribution and of the distance to the ultra-fast diffusion dynamic are shown in~\cref{fig:relu1d_gamma}.
One can clearly observe that the convergence speed of gradient descent is affected by the parameter $\gamma$.
In particular, looking at the distance to the teacher distribution, every curve exhibits a linear convergence rate but this convergence rate deteriorates when $\gamma$ increases.
This supports the conclusions of~\cref{thm:diffusion_convergence} in which the convergence rate of ultra-fast diffusion towards the target distribution is exponentially bad in the log-density ratio $\log(\mu_\gamma / \mu_0)$ (see also~\cref{rmk:convergence_rate}).
Finally, on can observe in the last plot that gradient descent deviates more quickly from the diffusion dynamic when $\gamma$ increases.
When $\gamma$ is large, there are indeed regions where the density $\mu_t$ will become very low, hence leading to numerical instabilities when estimating the velocity field $\nabla \left( \frac{\mu_\gamma}{\mu} \right)^2$.

\begin{figure}
    \centering
    \centerline{
    \includegraphics[scale=0.95,trim={10 10 0 0}]{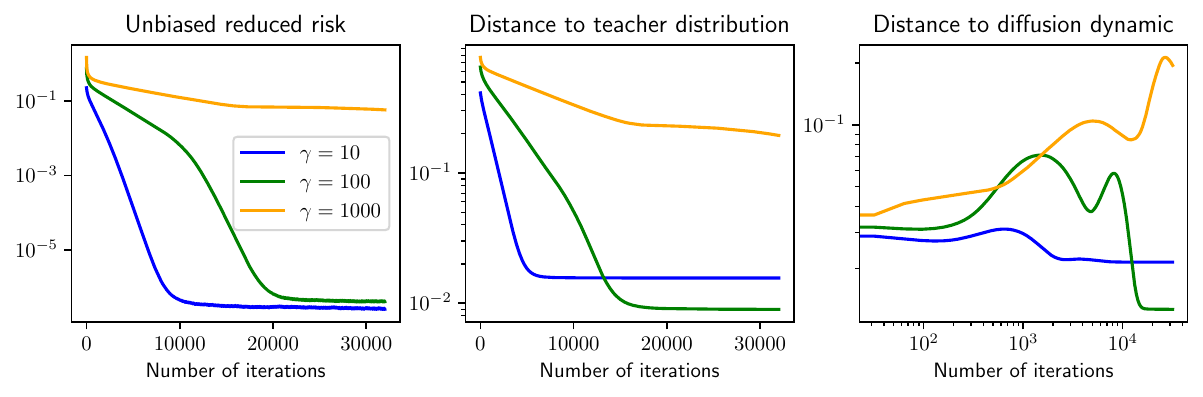}
    }
    \caption{Gradient descent over the reduced risk (\cref{eq:gradient_descent_SHL}) for a SHL of width $M=1024$ with unbiased regularization $f_u : t \mapsto \frac{1}{2} |t-1|^2$, $\lambda = 10^{-4}$ and teacher distribution $\Bar{\mu}_\gamma \simeq \mu_\gamma$ for $\gamma \in \{10, 100, 1000 \}$.
    Left: Evolution of the reduced risk.
    Middle: Evolution of the MMD distance to the teacher distribution $\Bar{\mu}_\gamma \simeq \mu_\gamma$.
    Right: Distance to the ultra-fast diffusion dynamic.
    Plots are averages over $6$ independent runs.}
    \label{fig:relu1d_gamma}
\end{figure}

\paragraph{Comparison with two-timescale gradient descent}

Since performing exact projection of the outer layer at every gradient step might have a prohibitive algorithmic cost, it is interesting to compare the VarPro algorithm with the two-timescale gradient descent which consist in affecting a different learning rate to the inner and outer weights of the neural network.
For a regularization function $f: \RR \to \RR$ and a regularization strength $\lambda > 0$ we recall that the risk defined by~\cref{eq:R_lambda_f} associated to the parameters $\{ (\om_i, u_i) \}_{i=1}^M \in (\SS^1 \times \RR)^M$ reads:
\begin{align} \label{eq:R_lambda_SHL}
    \frac{1}{\lambda} \Hat{\Rr}^\lambda_f(\{ (\om_i, u_i) \}_{i=1}^M) =  \frac{1}{2 \lambda N} \sum_{j=1}^N \left| F_{\{(\om_i, u_i)\}}(x_j) - Y(x_j) \right|^2 + \frac{1}{M} \sum_{i=1}^M f(u_i).
\end{align}
Then, for a timescale parameter $\eta > 0$, we implement the two-timescale gradient descent algorithm defined by :
\begin{align} \label{eq:2ts_gradient_descent_SHL}
    \forall i \in \lbrace 1, ..., M \rbrace, \forall k \geq 0, \quad
    \left\{
    \begin{array}{rcl}
        \om_i^{k+1} & = & \om_i^k - \frac{M \tau}{\lambda} \nabla_{\om_i} \hat{\Rr}^\lambda_f ( \lbrace (\om_i^k, u_i^k) \rbrace_{1 \leq i \leq M} ) , \\[5pt]
        u_i^{k+1} & = & u_i^k - \frac{\eta}{\lambda} \nabla_{u_i} \hat{\Rr}^\lambda_f ( \lbrace (\om_i^k, u_i^k) \rbrace_{1 \leq i \leq M} ) .
    \end{array}
    \right.
\end{align}
As for the VarPro algorithm (\cref{eq:gradient_descent_SHL}), we take the stepsize parameter $\tau = 2^{-10}$ and $\{ \om_i^0 \}_{i=1}^M \in (\SS^1)^M$ is some random  initialization with i.i.d. $\om_i^0 \sim \Uu(\SS^1)$.
For a fair comparison with VarPro, we first perform one projection step before training such that the outer weights initialization verifies:
\begin{align*}
    u^0 \in \argmin_{u \in \RR^M} \hat{\Rr}^\lambda_f ( \lbrace (\om_i^0, u_i) \rbrace_{1 \leq i \leq M} ).
\end{align*}

Concerning the timescale parameter $\eta$, we find it empirically efficient to set it to $\eta = \lambda M$.
Lower values of $\eta$ leads to slower training and higher values to numerical instabilities.
An explanation for this is that, in the case of a quadratic regularization, by~\cref{eq:R_lambda_SHL} the risk as a function of the outer weights $u \in \RR^M$ reads:
\begin{align*}
    \frac{1}{\lambda} \Hat{\Rr}^\lambda_f(  \lbrace (\om_i, u_i) \rbrace_{1 \leq i \leq M} ) = \frac{1}{2 \lambda N} \sum_{j=1}^N \left| \frac{1}{M} ( \Hat{\Fmap} \cdot u )_j - Y(x_j) \right|^2 + \frac{1}{2M} \sum_{i=1}^M u_i^2,
\end{align*}
where $\Hat{\Fmap} \in \RR^{N \times M}$ is some feature matrix depending on the features $\lbrace \om_i \rbrace_{1 \leq i \leq M}$.
Numerically, one observes $\frac{1}{N M}\lambda_{\max} (\Hat{\Fmap}^\top \Hat{\Fmap}) \simeq 1$, such that $\eta ^{-1} = \frac{1}{\lambda M} \simeq \lambda_{\max}(\lambda^{-1} \nabla^2_{u,u} \Hat{\Rr}^\lambda_f)$ indeed corresponds to the smoothness constant of the ridge regression problem w.r.t. $u$.

\begin{rem}
    Note that, as explain above, for numerical stability, one can not consider an arbitrarily large time-scale parameter $\eta$ and we fix here $\eta = \lambda M$.
    In this setting, the ratio between the lerning rates of inner and outer weights is given by $\frac{\eta}{M \tau} = \frac{\lambda}{\tau}$.
    Therefore, we can only expect to be in the two-timescale regime, i.e. when the two-timescale gradient descent is a good approximation of VarPro, if the stepsize $\tau$ is chosen s.t. $\tau \ll \lambda$.
    
    We stress that, for low-regularization regimes, this can be numerically prohibitive and VarPro, i.e. exact optimization of the outer weights at each step, can provide an efficient alternative to gradient descent.
    Interestingly, we in fact observe in our case that, as soon as $\tau \gg \lambda$ and thus $\eta \gg M\tau$ (which for examples happens here for $\lambda = 10^{-4}$), the VarPro algorithm (\cref{eq:gradient_descent_SHL}) efficiently learns the teacher feature distribution (see e.g. \cref{fig:relu1d_lmbda_distance}), whereas two-timescale gradient descent (\cref{eq:2ts_gradient_descent_SHL}) does not converge.
\end{rem}

In this setting we train SHLs of varying width using either the VarPro algorithm (\cref{eq:gradient_descent_SHL}) or the two-timescale gradient descent algorithm (\cref{eq:2ts_gradient_descent_SHL}) and report results in~\cref{fig:relu1d_2ts}.
As predicted, one can observe the two dynamics are very close in the case case of a sufficiently high regularization, here $\lambda \geq 10^{-2}$, for which we have  $\eta \gg M \tau$.
This supports the fact that, in this regime, the VarPro dynamic can be obtained as the two-timescale limit of gradient descent.
On the other hand, the two dynamics significantly differ in the low regularization regime $\lambda = 10^{-3}$ for which we have $\eta = \lambda M \simeq M \tau$.
In this case, independently of the width $M$, the VarPro algorithm converges at a linear rate, while two-timescale gradient descent is slower and even seems to introduce a bias in the learned feature distribution.
An explanation is that, in this regime, the two-timescale gradient descent quickly deviates from the ultra-fast diffusion dynamic, which one can observe in the last column of~\cref{fig:relu1d_2ts}.
Overall, the most favorable setting seems to be when $\lambda = 10^{-2}$.
Indeed, in this case $\eta = \lambda M \gg \tau M$ s.t. two-timescale gradient descent efficiently emulates the VarPro dynamic, while $\lambda \ll \| K_\mu \|_{op} \simeq 0.5$, the spectral norm of tangent kernel, s.t. both dynamics benefit from the linear convergence rate of ultra-fast diffusion (see also~\cref{fig:relu1d_lmbda_distance}).

\begin{figure}
    \centering
    \centerline{
    \includegraphics[scale=0.9]{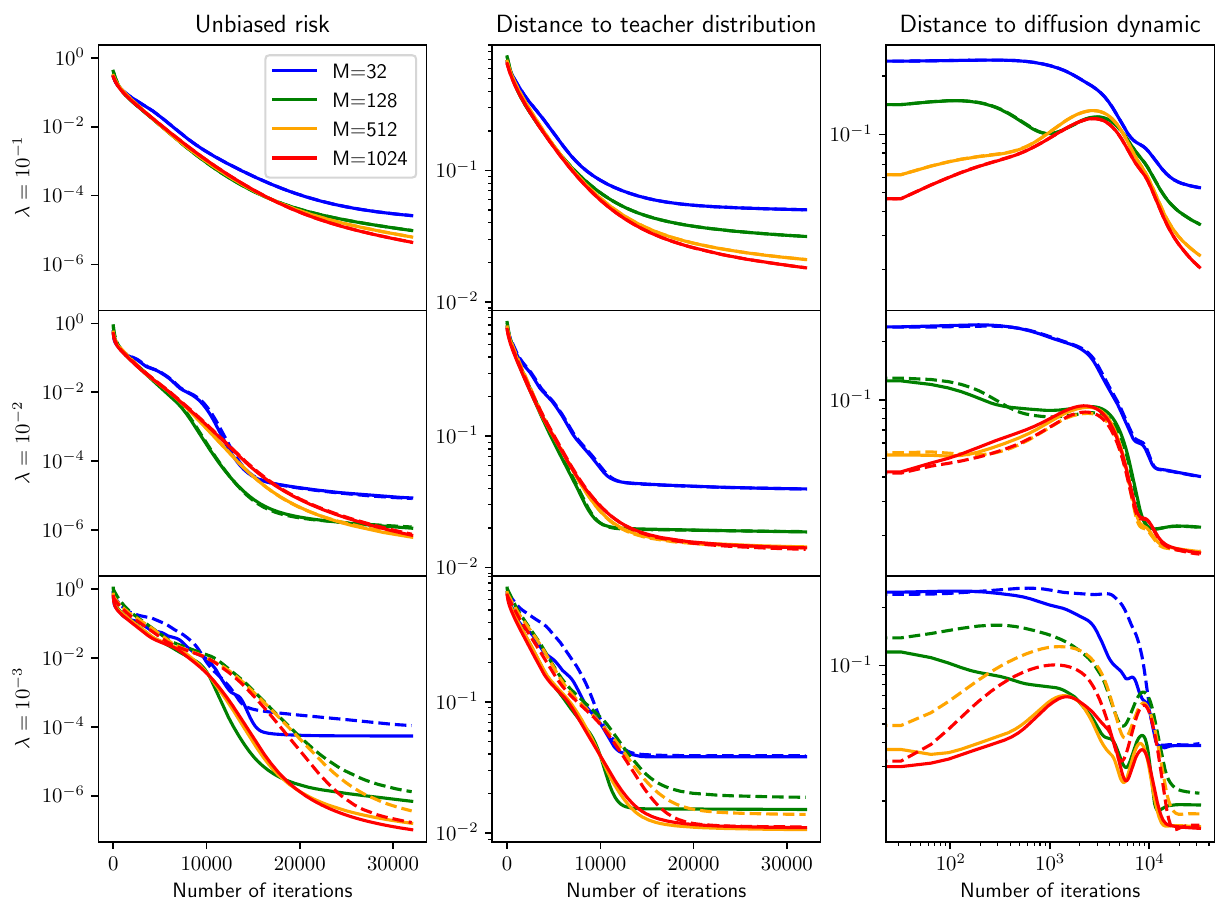}
    }
    \caption{VarPro (\cref{eq:gradient_descent_SHL}, plain lines) and two-timescale gradient descent (\cref{eq:2ts_gradient_descent_SHL}, dashed lines) over the risk for SHLs of varying width $M \in \{ 32, 128, 512, 1024 \}$ with unbiased regularization function $f_u : t \mapsto \frac{1}{2} |t-1|^2$ and regularization strength $\lambda = 10^{-1}$ (top), $\lambda = 10^{-2}$ (middle) or $\lambda = 10^{-3}$ (bottom).
    The teacher distribution is $\Bar{\mu}_\gamma \simeq \mu_\gamma$ with $\gamma = 100$.
    Left: Evolution of the risk.
    Middle: Evolution of the MMD distance to the teacher distribution.
    Right: Distance to the ultra-fast diffusion dynamic.
    Plots are averages over $6$ independent runs.}
    \label{fig:relu1d_2ts}
\end{figure}

\subsection{VarPro for image classification on CIFAR10}
\label{subsec:numerics_cifar}

We conclude this section by performing experiments on an image classification task with the CIFAR10 dataset~\cite{krizhevsky2009learning}.
We thereby aim at testing the large scale applicability of the VarPro algorithm.
Note that applications of variable projection strategies to the training of deep neural network architectures were also studied in~\cite{newman2021train}.
However, such setting goes outside of the scope of the theory developed in this paper as the neural network can no longer be represented as a linear operator acting on measure.

We consider a \emph{residual neural network (ResNet)} architecture with $20$ layers and $0.27$M parameters, whose precise description can be found in~\cite[Sec. 4.2]{he2016deep}.
This model has a Euclidean parameter space $\Theta$ and for parameters $\theta \in \Theta$ and images $x \in \RR^{3 \times 32 \times 32}$ it produces features which we denote by $\resnet(\theta, x) \in \RR^M$, with $M = 64$.
We consider the last fully connected layer separately as a weight matrix $U \in \RR^{c \times M}$ with here $c=10$ the number of classes.
Overall, for parameters $(\theta, U) \in \Theta \times \RR^{c \times M}$ and an input image $x \in \RR^{3 \times 32 \times 32}$, the output of the model is given by:
\begin{align*}
    F_{(\theta, U)} (x) \eqdef \frac{1}{M} U \cdot \resnet(\theta, x) \in \RR^c
\end{align*}
To apply the VarPro algorithm we need to have an efficient way of computing the exact projection of the linear parameters $U$.
For this purpose and instead of a cross-entropy loss, we consider here simply the square error between the outputs of our model and the true labels converted to one-hot vectors $y \in \{0,1\}^c$.
In this manner, the training risk for a batch of data $\Bb$ and parameters $(\theta, U) \in \Theta \times \RR^{c \times M}$ reads:
\begin{align} \label{eq:resnet_risk}
    \Hat{\Rr}^\lambda_\Bb(\theta, U) \eqdef \frac{1}{2 |\Bb|} \sum_{(x,y) \in \Bb} \| F_{(\theta, U)}(x) - y \|^2 + \frac{\lambda}{2 M} \| U \|^2. 
\end{align}
For an initialization $(\theta^0, U^0) \in \Theta \times \RR^{c \times M}$, a stepsize $\tau > 0$ and a momentum parameter $\m > 0$ the training dynamic reads:
\begin{align} \label{eq:resnet_varpro}
    \forall k \geq 0, \quad
    \left\{
    \begin{array}{rcl}
        U^{k+1} & = & \m U^k + (1-\m) \Bar{U}^k   \\[5pt]
        \theta^{k+1} & = & \theta^k - \frac{ M \tau}{\lambda} \nabla_\theta \Hat{\Rr}^\lambda_{\Bb_k}(\theta^k, U^{k+1})
    \end{array}
    \right.
\end{align}
where $\Bb_k$ is the mini-batch at step $k$ and $\Bar{U}_k$ is the corresponding projection of the outer weights i.e.
$\Bar{U}^k \in \argmin_{U \in \RR^{c \times M}} \Hat{\Rr}^\lambda_{\Bb_k}(\theta^k, U)$.

Note the introduction of the momentum parameter $\m > 0$ which is here to compensate the variability of the projection $\Bar{U}_k$ w.r.t.\@ the sampling of mini-batches at each step.
Indeed, intuitively, for evaluation on test-data, rather than having a classifier computed only on the last mini-batch, it is preferable to have an average of the last computed classifiers.

\paragraph{Experimental setting}
In practice, we find it effective to consider a regularization strength $\lambda = 10^{-3}$, a momentum $\m = 0.9$ and a stepsize $\tau = 10^3$.
We consider different values of the batch size $|\Bb| \in \{ 64, 128, 256, 512, 1024 \}$.
We train our model by performing $110$ passes over the training set, evaluating the model accuracy on the test set in-between each pass.
The stepsize is divided by $2$ for the last $10$ passes on the training set.
Note that this setting allows for a fair comparison of performances with the results presented in~\cite[Sec. 4.2]{he2016deep} for the training of ResNets on the CIFAR10 dataset.
We also follow the same data-augmentation procedure.

\paragraph{Comparison with other stochastic optimization methods}

We compare the above described VarPro algorithm (\cref{eq:resnet_varpro}) with other stochastic optimization methods for the  minimization of the training risk in~\cref{eq:resnet_risk}.
We compare with standard \emph{Stochastic Gradient Descent (SGD)} on the full parameterization $(\theta, U) \in \Theta \times \RR^{c \times M}$ with momentum $\m = 0.9$ and stepsize $\tau = 10^{-3}$.
We also compare with the \emph{Shampoo} algorithm~\cite{gupta2018shampoo} which is a preconditioned gradient method\footnote{We used the implementation from \url{https://github.com/moskomule/shampoo.pytorch}} and set the learning rate to $\tau = 10^{-2}$.

\Cref{fig:resnet_risk} reports the evolution of the training risk (\cref{eq:resnet_risk}) along training.
One can observe that, in terms of minimization of the training risk, performances of VarPro at convergence are similar to the one of SGD and better than Shampoo.
Compared with these last two methods, the convergence speed of VarPro however seems to be slower during the first stages of training.
Behavior of the methods w.r.t.\@ the batch size is also different.
While the batch size has no or little influence on the convergence speed of SGD or Shampoo, one can observe that the VarPro algorithm tends to converge more slowly when the batch size increases.
Since this method is based on the exact resolution of a quadratic minimization problem on each mini-batch at each step, an explanation is thus that this subproblem becomes less well-conditioned when the size of the mini-batches increases.

\Cref{fig:resnet_risk} reports the evolution of the top-$1$ accuracy of the model on the test set.
All optimization methods seems to achieve the same generalization performances on the test set, that is more than $90\%$ accuracy, which is in par with the $91.25\%$ reported for the same model in~\cite{he2016deep}.
As before, one can observe the Varpro algorithm (\cref{eq:resnet_varpro}) seems to take more time to achieve the same accuracy.
Also, whereas SGD and Shampoo generalize better when the batch size is smaller, the converse happens for VarPro and one can see the generalization performance of our ResNet model trained with the VarPro algorithm deteriorates for smaller batch sizes.

\begin{figure}
    \centering
    \centerline{
    \includegraphics[scale=0.85]{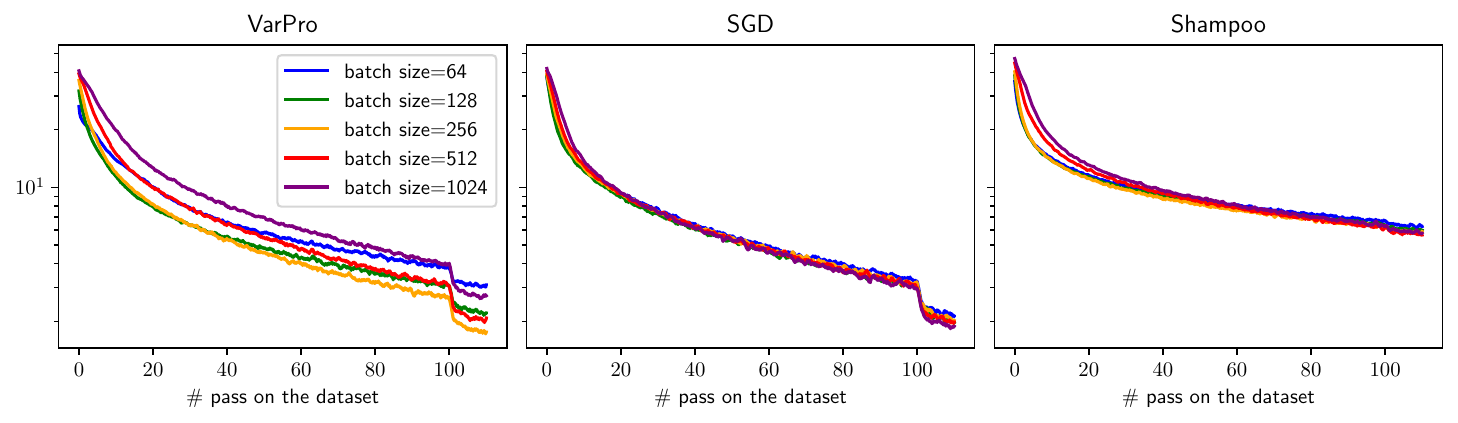}
    }
    \caption{Evolution of the training risk $\frac{1}{\lambda} \Hat{\Rr}^\lambda_\Bb$ (\cref{eq:resnet_risk}) along training for different batch sizes and different optimization methods.
    Plots are averages of the risk associated to each mini-batch encountered during one pass.
    VarPro corresponds to~\cref{eq:resnet_varpro}.}
    \label{fig:resnet_risk}
\end{figure}

\begin{figure}
    \centering
    \centerline{
    \includegraphics[scale=0.85]{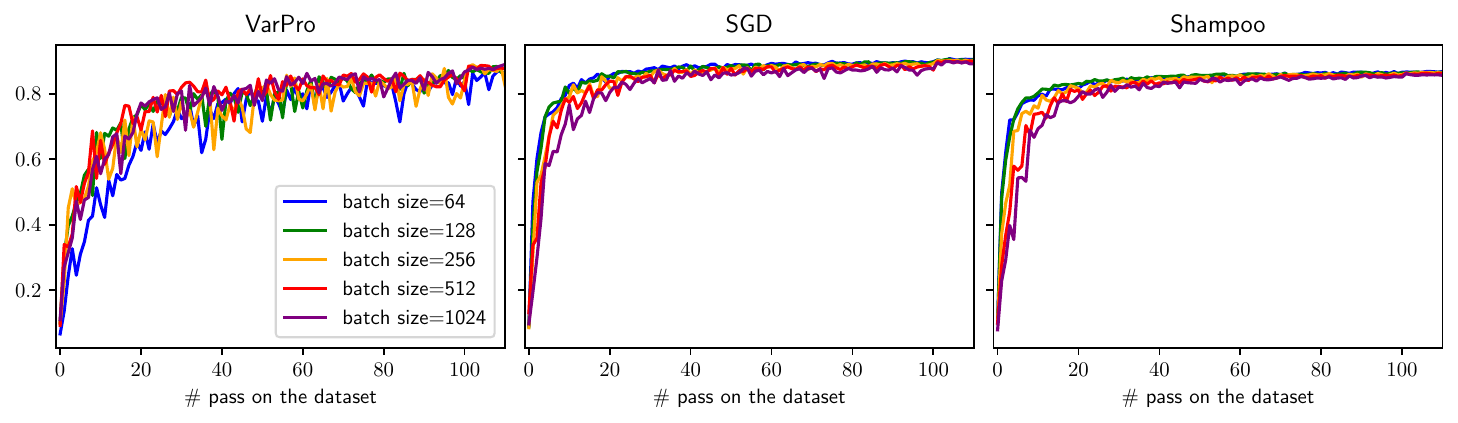}
    }
    \caption{Evolution of the top-$1$ accuracy along training for different batch sizes and different optimization methods. VarPro corresponds to~\cref{eq:resnet_varpro}.}
    \label{fig:resnet_accuracy}
\end{figure}

\section*{Conclusion}

In this work we have investigated the convergence of gradient based methods for the training of mean-field models of neural networks.
To this end, we have adopted a \emph{Variable Projection (VarPro)} strategy, which eliminates the linear parameters and reduces the training problem to the learning of the nonlinear features.
Using tools from the theory of Wasserstein gradient flows, we have shown theoretically that, when the regularization strength $\lambda$ vanishes, the training dynamic converges, under regularity assumptions, to solutions of \emph{weighted ultra-fast diffusion} PDEs (\cref{thm:gradient_flow_approximation}).
In such a low regularization regime, this allows establishing convergence of the learned feature distribution to the teacher's at a linear rate (\cref{thm:diffusion_convergence}).
Moreover, in presence of regularization, we also obtain a quantitative convergence result but with a slower algebraic rate (\cref{thm:algebraic_convergence}).

Our theoretical predictions are supported by numerical results on simple experiments with synthetic data.
One can observe that, when the regularization strength $\lambda$ is negligible compared to the tangent kernel, the VarPro and ultra-fast diffusion dynamics are similar and converge to the teacher feature distribution at a linear rate (\cref{fig:relu1d_lmbda_distance}).
Moreover, if the time step is sufficiently small, this dynamic is also recovered with a simple two-timescale gradient descent algorithm (\cref{fig:relu1d_2ts}).
Finally, experiments with a ResNet architecture on the CIFAR10 dataset show that a VarPro strategy can be easily adapted to the training of complex architectures on large datasets.

We conclude by mentioning possible future research directions:
\begin{itemize}
    \item On a theoretical perspective, our convergence results in~\cref{sec:convergence} hold under regularity assumptions on the training dynamic.
    It would be interesting to see if one can relax or ensure these assumptions, possibly by strengthening~\cref{ass:teacher_student}.
    \item We have considered here simple $2$-layer neural networks but it has been pointed out by several works that the learning of good nonlinear representations of the data also plays an important role in the training of deep architectures such as ResNets or Transformers~\cite{barboni2022global,barboni2024understanding,gao2024global}.
    It might thus be interesting to see in what extent the two-timescale approach could be extended to deeper architectures.
    A difficulty is that, in deep architectures, separability of the regression problem w.r.t.\@ linear and nonlinear variables of each layer is lost due to composition.
\end{itemize}

\clearpage

\section*{Acknowledgements}

The work of G. Peyr\'e was supported by the European Research Council (ERC project WOLF) and the French government under the management of Agence Nationale de la Recherche as part of the “France 2030” program, reference ANR-23-IACL-0008 (PRAIRIE-PSAI).
The work of F.-X. Vialard was supported by the Bézout Labex (New Monge Problems), funded by ANR, reference ANR-10-LABX-58.
This work was performed using HPC resources from GENCI-IDRIS (Grant 2025-AD011013400R3).

\printbibliography

\clearpage

\appendix

\section{Positive definite kernels and RKHS} \label{sec:rkhs}

We recall in this section basic properties on the theory of Reproducing Kernel Hilbert Spaces and refer to classical textbooks for a complete presentation of the topic~\cite{steinwart2008support,scholkopf2002learning}.
In this work we consider a mapping $\fmap : \Om \times \RR^d \to \RR$ as well as a probability measure $\rho \in \Pp(\RR^d)$. This choice of $\fmap$ and $\rho$ determines a \emph{symmetric, positive definite kernel}~\cite[Def. 4.15]{steinwart2008support} $\kappa :  \Om \times \Om \to \RR$ defined by:
\begin{align*}
    \forall \om, \om' \in \Om, \quad \kappa(\om, \om') \eqdef \int_X \fmap(\om, x) \fmap(\om', x) \d \rho(x) = \left< \fmap(\om,.), \fmap(\om', .) \right>_{L^2(\rho)}.
\end{align*}
Thus, associated to $\kappa$ is a (uniquely defined) structure of \emph{Reproducing Kernel Hilbert Space (RKHS)} $\Hh$ with scalar product $\left< ., . \right>_\Hh$, that is a Hilbert space of functions on $\Om$ such that~\cite[Def. 4.18]{steinwart2008support}: (i) $\kappa(\om, .) \in \Hh$ for every $\om \in \Om$ and (ii) the following \emph{reproducing property} holds:
\begin{align*}
    \forall h \in \Hh, \om \in \Om, \quad h(\om) = \left< h, \kappa(\om,.) \right>_\Hh.
\end{align*}
Following the definition of $\kappa$, $L^2(\rho)$ is a \emph{feature space} for $\Hh$ and $\fmap$ a \emph{feature map}~\cite[Def. 4.1]{steinwart2008support}.
Also, $\Hh$ can be isometrically identified as a subspace of $L^2(\rho)$ and the convolution with $\fmap$ is a partial isometry~\cite[Thm. 4.21]{steinwart2008support}. Precisely, we have $\Hh = \lbrace h : \Om \to \RR \; | \; \exists \alpha \in L^2(\rho), h = \int_{\RR^d} \fmap(.,x) \alpha(x) \d \rho(x)  \rbrace$ and the RKHS norm on $\Hh$ satisfies:
\begin{align} \label{eq:RKHS_characterization}
    \forall h \in \Hh, \quad \| h \|_\Hh = \inf \left\{ \| \alpha \|_{L^2(\rho)} \; | \; h = \int_{\RR^d} \fmap(.,x) \alpha(x) \d \rho(x)   \right\} .
\end{align}
In this paper, we always work with the following minimal assumption on the feature map $\fmap$:
\begin{assumption}[Assumption on $\fmap$] \label{ass:feature_map}
    \hfill \\
    The feature map $\fmap$ is in $L^2(\rho, \Cc^0)$. In particular, it implies the kernel $\kappa$ is continuous on $\Om \times \Om$.
\end{assumption}

\paragraph{Kernel embeddings and kernel discrepancy between measures}

The above assumption is sufficient to ensure $\Hh$ is included in $\Cc(\Om)$ and guarantees the existence of kernel embeddings for finite Borel measures~\cite{muandet2017kernel,gretton2012kernel}. For a measure $\nu \in \Mm(\Om)$ its \emph{kernel embedding} $M_\kappa(\nu)$ is defined as the unique element of $\Hh$ satisfying:
\begin{align} \label{eq:kernel_embedding}
    \forall h \in \Hh, \quad \int_\Om h \d \nu = \left< h, M_\kappa(\nu) \right>_\Hh.
\end{align}
Equivalently, the kernel embedding is given by the Bochner integral $M_\kappa(\nu) = \int_\Om \kappa(., \om) \d \nu(\om) \in \Hh$.
This embedding defines a discrepancy between measures by seeing them as element of the Hilbert space $\Hh$. For two measures $\nu, \nu' \in \Mm(\Om)$ the \emph{Maximum Mean Discrepancy (MMD)} between $\nu$ and $\nu'$ is defined as~\cite{muandet2017kernel,gretton2012kernel}:
\begin{align*}
    \MMD_\kappa(\nu, \nu') \eqdef \left\| M_\kappa(\nu) - M_\kappa(\nu') \right\|_\Hh.
\end{align*}
Alternatively, \cref{ass:feature_map} is sufficient to ensure the operator $\Fmap \star : \Mm(\Om) \to L^2(\rho)$ defined in~\cref{eq:Fmap_star} is bounded and by construction we have:
\begin{align} \label{eq:MMD}
    \MMD_\kappa(\nu, \nu') = \left( \iint_{\Om \times \Om} \kappa(\om, \om') \d (\nu-\nu')(\om) \d (\nu-\nu')(\om') \right)^{1/2} = \left\| \Fmap \star (\nu - \nu') \right\|_{L^2(\rho)} .
\end{align}
The discrepancy $\MMD_\kappa$ is in particular a distance between measures whenever the kernel $\kappa$ is \emph{universal}, that is when the associated RKHS $\Hh$ is dense in the space of continuous functions on $\Om$~\cite{micchelli2006universal,sriperumbudur2011universality}.
One can show this condition is equivalent to an injectivity assumption on $\Fmap \star$.

\begin{lem}[see also {\cite[Prop. 1]{micchelli2006universal}}] \label{lem:rkhs_universality}
    Let $\fmap$ satisfy~\cref{ass:feature_map}. Then $\Fmap \star : \Mm(\Om) \to L^2(\rho)$ is injective if and only if $\Hh$ is dense in the space $\Cc^0(\Om)$ of continuous functions over $\Om$. In this case, $\MMD_\kappa$ is a distance on $\Mm(\Om)$.
\end{lem}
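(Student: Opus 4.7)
The plan is a standard duality argument between $\Fmap \star : \Mm(\Om) \to L^2(\rho)$ and its formal adjoint $\Fmap^\top : L^2(\rho) \to \Cc^0(\Om)$, together with Riesz representation for the topological dual of $\Cc^0(\Om)$.

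First I would record the key adjunction. Under \cref{ass:feature_map}, for every $\nu \in \Mm(\Om)$ and $\alpha \in L^2(\rho)$, Fubini's theorem gives
\begin{align*}
\langle \Fmap\star \nu, \alpha\rangle_{L^2(\rho)} \;=\; \int_{\RR^d} \alpha(x) \int_\Om \fmap(\om,x) \,\d\nu(\om) \,\d\rho(x) \;=\; \int_\Om (\Fmap^\top \alpha)\,\d\nu.
\end{align*}
Combined with the characterization $\Hh = \Fmap^\top(L^2(\rho))$ from~\cref{eq:RKHS_characterization}, this shows that $\Fmap\star\nu = 0$ in $L^2(\rho)$ is equivalent to $\int_\Om h \,\d\nu = 0$ for every $h \in \Hh$.

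For the forward implication, assume $\Hh$ is dense in $\Cc^0(\Om)$ and suppose $\Fmap\star\nu = 0$. By the adjunction, $\nu$ annihilates $\Hh$, and by continuity of $\varphi \mapsto \int \varphi\,\d\nu$ and density, $\nu$ annihilates all of $\Cc^0(\Om)$; since $\Om$ is compact, Riesz representation gives $\nu = 0$. For the converse, I argue by contrapositive: if $\Hh$ is not dense in $\Cc^0(\Om)$, then by Hahn--Banach there exists a non-zero continuous linear functional $\ell$ on $\Cc^0(\Om)$ vanishing on $\overline{\Hh}$. Riesz representation again identifies $\ell$ with a non-zero $\nu \in \Mm(\Om)$, and the adjunction gives $\langle \Fmap\star\nu, \alpha\rangle_{L^2(\rho)} = 0$ for every $\alpha \in L^2(\rho)$, so $\Fmap\star\nu = 0$, contradicting injectivity. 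The final sentence on $\MMD_\kappa$ being a distance follows immediately: symmetry and the triangle inequality are inherited from the Hilbert norm on $\Hh$ via~\cref{eq:MMD}, and separation is exactly the injectivity of $\Fmap\star$ applied to $\nu - \nu'$.

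No part of this is genuinely hard; the only mild subtlety is ensuring the correct functional-analytic setup (that $\Om$ is compact, so $\Cc^0(\Om)^* = \Mm(\Om)$, and that $\Hh \subset \Cc^0(\Om)$ continuously under~\cref{ass:feature_map}, which justifies talking about denseness in the sup norm). I would state this briefly at the start to make the Hahn--Banach/Riesz step unambiguous.
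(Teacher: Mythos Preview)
Your argument is correct and coincides with the paper's own proof: both rest on the adjunction $\langle \Fmap\star\nu,\alpha\rangle_{L^2(\rho)}=\int_\Om(\Fmap^\top\alpha)\,\d\nu$ together with the identification $\Cc^0(\Om)^*=\Mm(\Om)$, and both deduce the $\MMD_\kappa$ statement directly from~\cref{eq:MMD}. The only cosmetic difference is that you phrase the implication ``injective $\Rightarrow$ dense'' via Hahn--Banach/contrapositive, whereas the paper writes it directly as $\Hh^\perp=\{0\}$; these are the same argument.
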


\begin{proof}
    The fact the $\MMD$ is a distance on $\Mm(\Om)$ when $\Fmap \star$ is injective directly follows from~\cref{eq:MMD}.
    For the direct implication, assume $\Fmap \star$ is injective and consider some measure $\nu \in \Hh^\perp$ i.e. such that for every $h \in \Hh$ we have $\int h \d \nu = 0$. Then by the characterisation in~\cref{eq:RKHS_characterization} we have for every $\alpha \in L^2(\rho)$:
    \begin{align*}
        0 = \int_\Om \left( \int_{\RR^d} \fmap(\om,x) \alpha(x) \d \rho(x) \right) \d \nu(\om) = \left< \alpha, \Fmap \star \nu \right>_{L^2(\rho)}.
    \end{align*}
    Hence $\Fmap \star \nu = 0$, implying $\nu = 0$ and thus that $\Hh^\perp = \{ 0 \}$ i.e. $\Hh$ is dense in $\Cc^0(\Om)$ by Hahn-Banach theorem. For the converse implication, assume that $\Hh$ is dense in $\Cc^0(\Om)$ and consider some $\nu \in \Mm(\Om)$ s.t.\@ $\Fmap \star \nu = 0$. Then for every $\alpha \in L^2(\rho)$ we have $\left< \alpha, \Fmap \star \nu \right>_{L^2(\rho)} = 0$ and by similar calculations this implies $\nu \in \Hh^\perp$ i.e. $\nu = 0$.
\end{proof}

\paragraph{Kernel and integral operator}

In this work we have used properties of the RKHS $\Hh$ seen as a subspace of the Hilbert space $L^2(\mu)$ for probability measures $\mu \in \Pp(\Om)$.
For such a probability measure $\mu \in \Pp(\Om)$, it indeed follows from~\cref{ass:feature_map} that $\Hh$ is compactly embedded in $L^2(\mu)$~\cite[Lem. 2.3]{steinwart2012mercer}.
Also, the kernel defines an integral operator $J_\mu : L^2(\mu) \to L^2(\mu)$ given by:
\begin{align*}
    \forall f \in L^2(\mu), \quad J_\mu \cdot f = \int_\Om k(. , \om) f(\om) \d \mu(\om).
\end{align*}
Then $J_\mu$ is a compact, self-adjoint and positive operator and, by the spectral theorem, it can be diagonalized in an orthonormal basis $(e_i)_{i \geq 0}$ of $L^2(\mu)$ with associated eigenvalues $(\lambda_i)_{i \geq 0}$ s.t.\@ $\lambda_1 \geq \lambda_2 \geq ...\geq 0$. In particular, $J_\mu = \Fmap_\mu^\top \Fmap_\mu$ with $\Fmap_\mu : L^2(\mu) \to L^2(\rho)$ the \emph{feature operator} defined in~\cref{eq:feature_operator}, thus $(\sqrt{\lambda_i})_{i \geq 0}$ are the (right) singular values of $\Fmap_\mu$ (which is a compact operator) and, if $\Fmap \star$ is injective, then $\lambda_i > 0$ for every $i \geq 0$. Mercer's theorem gives a representation of the kernel $\kappa$ and of the associated RKHS $\Hh$ in terms of this eigenvalue decomposition~\cite[Thm. 4.51]{steinwart2008support}.

\begin{thm}[Mercer representation of RKHSs] \label{thm:mercer}
    Assume $\Fmap \star$ is injective and let $\mu \in \Pp(\Om)$ be a probability measure with full support on $\Om$. Consider $(\lambda_i)_{i \geq 0}$ and $(e_i)_{i \geq 0}$ the eigenvalue decomposition of the operator $J_\mu$. Then we have:
    \begin{align*}
        \forall \om, \om' \in \Om \times \Om, \quad \kappa(\om, \om') = \sum_{i \geq 0} \lambda_i e_i(\om) e_i(\om'),
    \end{align*}
    where the convergence is absolute and uniform over $\Om \times \Om$. Moreover
    \begin{align*}
        \Hh = \left\{ \sum_{i \geq 0} a_i \sqrt{\lambda_i} e_i, \, (a_i)_{i \geq 0} \in \ell^2(\NN) \right\}
    \end{align*}
    is the RKHS associated to the kernel $\kappa$ when provided with the scalar product $\left< .,. \right>_\Hh$ defined for $f = \sum_{i \geq 0} a_i \sqrt{\lambda_i}$ and $g = \sum_{i \geq 0} b_i \sqrt{\lambda_i}$ by
    $\left< f, g \right>_\Hh = \sum_{i \geq 0} a_i b_i$.
\end{thm}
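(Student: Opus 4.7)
The plan is to combine the spectral theory of the integral operator $J_\mu$ with the classical Mercer argument: a spectral decomposition gives an $L^2(\mu)$-expansion of $\kappa(\om,\cdot)$, a positivity estimate controls the series on the diagonal, then Dini and Cauchy--Schwarz upgrade this to absolute uniform convergence on $\Om\times\Om$, and the RKHS identification is book-keeping. Because $\kappa$ is continuous on the compact $\Om\times\Om$ (under \cref{ass:feature_map}), $J_\mu$ is self-adjoint, positive and Hilbert--Schmidt on $L^2(\mu)$, so the spectral theorem furnishes the orthonormal basis $(e_i)$ and eigenvalues $(\lambda_i)$ stated in the theorem. The identity $e_i=\lambda_i^{-1}J_\mu e_i$ gives each $e_i$ (for $\lambda_i>0$) a continuous representative by the dominated convergence theorem, and \cref{lem:rkhs_universality} together with the injectivity of $\Fmap\star$ forces $\ker J_\mu=\{0\}$, so $\lambda_i>0$ for every $i$. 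Fixing $\om\in\Om$ and expanding $\kappa(\om,\cdot)\in L^2(\mu)$, the Fourier coefficients are $\langle\kappa(\om,\cdot),e_i\rangle_{L^2(\mu)}=(J_\mu e_i)(\om)=\lambda_i e_i(\om)$, giving $\kappa(\om,\cdot)=\sum_i\lambda_i e_i(\om)\,e_i$ in $L^2(\mu)$.

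Next, set $r_N(\om,\om')\eqdef\kappa(\om,\om')-\sum_{i\leq N}\lambda_i e_i(\om)e_i(\om')$. Its integral operator $\sum_{i>N}\lambda_i\,e_i\otimes e_i$ is positive on $L^2(\mu)$; averaging $r_N$ against normalized indicators of shrinking balls around $\om$ and using the continuity of $r_N$ yields the pointwise estimate $r_N(\om,\om)\geq 0$, i.e.\ $\sum_{i\leq N}\lambda_i e_i(\om)^2\leq \kappa(\om,\om)$. Sending $N\to\infty$, monotone convergence together with the trace identity $\sum_i\lambda_i=\operatorname{tr}(J_\mu)=\int_\Om\kappa(\om,\om)\,\d\mu$ give
\begin{align*}
\int_\Om \Bigl(\kappa(\om,\om)-\sum_i\lambda_i e_i(\om)^2\Bigr)\,\d\mu(\om)=0;
\end{align*}
since the integrand is nonnegative and $\mu$ has full support, the pointwise monotone limit of the continuous partial sums equals the continuous function $\om\mapsto\kappa(\om,\om)$ at every $\om$. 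Dini's theorem on the compact $\Om$ upgrades this to uniform convergence on the diagonal, and Cauchy--Schwarz
\begin{align*}
\Bigl|\sum_{i>N}\lambda_i e_i(\om)e_i(\om')\Bigr|\leq\Bigl(\sum_{i>N}\lambda_i e_i(\om)^2\Bigr)^{1/2}\Bigl(\sum_{i>N}\lambda_i e_i(\om')^2\Bigr)^{1/2}
\end{align*}
transfers it to absolute uniform convergence of $\sum_i\lambda_i e_i(\om)e_i(\om')$ to $\kappa(\om,\om')$ on all of $\Om\times\Om$.

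For the RKHS identification, the space $\Hh_0\eqdef\{\sum_i a_i\sqrt{\lambda_i}e_i:(a_i)\in\ell^2(\NN)\}$ equipped with the stated inner product is isometric to $\ell^2(\NN)$ (since all $\lambda_i>0$), hence a Hilbert space of continuous functions on $\Om$. Writing the Mercer expansion as $\kappa(\om,\cdot)=\sum_i\sqrt{\lambda_i}e_i(\om)\cdot\sqrt{\lambda_i}e_i$, its coefficients $a_i=\sqrt{\lambda_i}e_i(\om)$ satisfy $\sum_i a_i^2=\kappa(\om,\om)<\infty$, so $\kappa(\om,\cdot)\in\Hh_0$; for any $h=\sum_i b_i\sqrt{\lambda_i}e_i$ one computes directly $\langle h,\kappa(\om,\cdot)\rangle_\Hh=\sum_i b_i\sqrt{\lambda_i}e_i(\om)=h(\om)$, which is the reproducing property. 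Uniqueness of the RKHS associated to the positive-definite kernel $\kappa$ then gives $\Hh_0=\Hh$ isometrically.

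The main obstacle is the second paragraph: upgrading the $L^2(\mu)$- and $L^1(\mu)$-level identities to the pointwise equality $\sum_i\lambda_i e_i(\om)^2=\kappa(\om,\om)$ at \emph{every} $\om\in\Om$. Positivity of the tail operator, continuity of each $e_i$ and the full-support assumption on $\mu$ all play a role here. Once the diagonal is controlled, Dini and Cauchy--Schwarz finish the convergence analysis and the RKHS step is routine.
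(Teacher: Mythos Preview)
The paper does not give its own proof of this statement; it records it as the classical Mercer theorem and defers to \cite[Thm.~4.51]{steinwart2008support}. Your write-up is essentially the standard argument, and most of it is fine: the spectral setup, the continuity of each $e_i$, the positivity of all $\lambda_i$ via injectivity of $\Fmap\star$, the diagonal inequality $\sum_{i\le N}\lambda_i e_i(\om)^2\le\kappa(\om,\om)$, and the RKHS identification at the end are all correct.

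There is, however, a real gap at exactly the step you flag as the obstacle. From the trace identity you correctly get $\int_\Om g\,\d\mu=0$ with $g(\om)\eqdef\kappa(\om,\om)-\sum_i\lambda_i e_i(\om)^2\ge 0$, hence $g=0$ $\mu$-a.e. But $g$ is only \emph{upper} semicontinuous (a continuous function minus an increasing limit of continuous functions), and an upper semicontinuous nonnegative function that vanishes $\mu$-a.e.\ for a full-support $\mu$ need not vanish identically: on $[0,1]$ with Lebesgue measure, the indicator of $\{0\}$ is such a function. So the clause ``since the integrand is nonnegative and $\mu$ has full support \dots\ at every $\om$'' is not justified, and without pointwise equality on the diagonal Dini cannot be invoked.

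The usual repair bypasses the trace. For fixed $\om$, Cauchy--Schwarz together with the diagonal bound at $\om'$ gives
\[
\sup_{\om'\in\Om}\Bigl|\sum_{i>N}\lambda_i e_i(\om)e_i(\om')\Bigr|
\le\Bigl(\sum_{i>N}\lambda_i e_i(\om)^2\Bigr)^{1/2}\Bigl(\sup_{\om'\in\Om}\kappa(\om',\om')\Bigr)^{1/2}
\xrightarrow{N\to\infty}0,
\]
so for each fixed $\om$ the series converges \emph{uniformly in $\om'$} to a continuous function agreeing with $\kappa(\om,\cdot)$ in $L^2(\mu)$, hence $\mu$-a.e., hence everywhere (now both sides are continuous and $\mu$ has full support, so equality on a dense set suffices). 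Evaluating at $\om'=\om$ yields $\sum_i\lambda_i e_i(\om)^2=\kappa(\om,\om)$ for every $\om$, after which your Dini and Cauchy--Schwarz steps go through verbatim.
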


\section{Radial basis function neural network on the $2$-dimensional torus} \label{sec:numerics_2d}

We performed numerical experiments to test the performance of the VarPro algorithm for the training of \emph{Radial Basis Function (RBF)} neural networks.
Notably, due to the particular structure of the architecture, the learning problem corresponds to performing a deconvolution, which has important applications in signal processing~\cite{de2012exact,duval2015exact}.

The feature space is here the $2$-dimensional torus $\Om =  \nicefrac{\RR^2}{4\ZZ^2} \subset \RR^2$ and the data dimension is $d=2$.
The RBF neural network architecture performs the convolution with a kernel $k : \RR^2 \to \RR$ and corresponds to considering the feature map $\fmap : (\om, x) \mapsto k(\om-x)$.
We will use here the Laplace kernel
    $k : x \in \RR^2 \mapsto 8 \exp(-\frac{1}{2} \| [x] \|)$, 
where $[x]$ represents the projection of $x$ in $\Om =  \nicefrac{\RR^2}{4\ZZ^2}$.
For features $\{ \om_i \}_{i=1}^M \in (\Om)^M$ and outer weights $\{u_i \}_{i=1} \in  \RR^M$ the RBF neural network model reads:
\begin{align} \label{eq:convolutional_model}
    F_{\{(\om_i, u_i)\}} : x \in \RR^2 \mapsto \frac{1}{M} \sum_{i=1}^M u_i k(\om_i-x) = (k \star \Hat{\nu})(x),
\end{align}
where $\Hat{\nu} = \frac{1}{M} \sum_{i=1}^M u_i \delta_{\om_i} \in \Mm(\Om)$ and $\star$ is the convolution operator.
We consider a teacher feature distribution $\Bar{\mu}_\gamma = \frac{1}{\Bar{M}} \sum_{i=1}^{\Bar{M}} \delta_{\Bar{\om}_i}$ for features $\{ \Bar{\om}_i \}_{1 \leq i \leq \Bar{M}} \in \Om^{\Bar{M}}$ and the target signal $Y$ is thus:
\begin{align*}
    \forall x \in \RR^2, \quad Y(x) = \frac{1}{\Bar{M}} \sum_{i=1}^{\Bar{M}} k(\om_i-x) = (k \star \Bar{\mu}_\gamma).
\end{align*}
The teacher features are i.i.d. with $\Bar{\om}_i \sim \mu_\gamma = ( \frac{1}{2} \delta_{\om^*_1} + \frac{1}{2} \delta_{\om^*_2} ) \star \pi_\gamma$, where $\om_1^* = (-1,0)$, $\om^*_2 = (1, 1)$ are two target modes and
$\pi_\gamma$ is the product measure with density:
\begin{align*}
    \forall (z_1, z_2) \in \nicefrac{\RR^2}{4\ZZ^2}, \quad \pi_\gamma(z_1, z_2) \propto \frac{1}{1+ \gamma \sin^2(z_1 \pi / 4)} \times \frac{1}{1+ \gamma \sin^2(z_2 \pi / 4)} \, .
\end{align*}
The parameter $\gamma$ controls the shape of the distribution $\mu_\gamma$ such that in the large $\gamma$ limit one recovers $\mu_\gamma \simeq \mu_\infty \eqdef \frac{1}{2} \delta_{\om^*_1} + \frac{1}{2} \delta_{\om^*_2}$.
A scatter plot of the teacher measure $\Bar{\mu}_\gamma$ and of the resulting teacher signal for $\gamma = 100$ are shown in~\cref{fig:teacher_signal_2d}.
Finally, we consider the input data $x$ to be distributed according to an empirical distribution $\Hat{\rho} = \frac{1}{N} \sum_{i=1}^N \delta_{x_i}$ with i.i.d. standard gaussian samples $x_i \sim \Nn(0, \Id)$.
In this setting we consider training the model by performing a VarPro algorithm i.e. gradient descent over the reduced risk, as in~\cref{eq:gradient_descent_SHL}.

\begin{figure}
    \centering
    \includegraphics[scale=0.85]{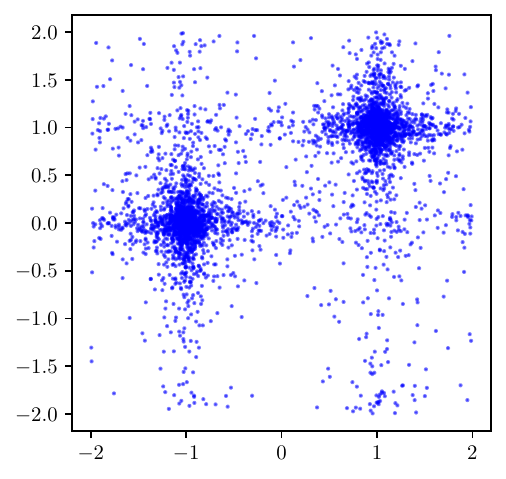}
    \includegraphics[scale=0.9,trim={10 10 0 0}]{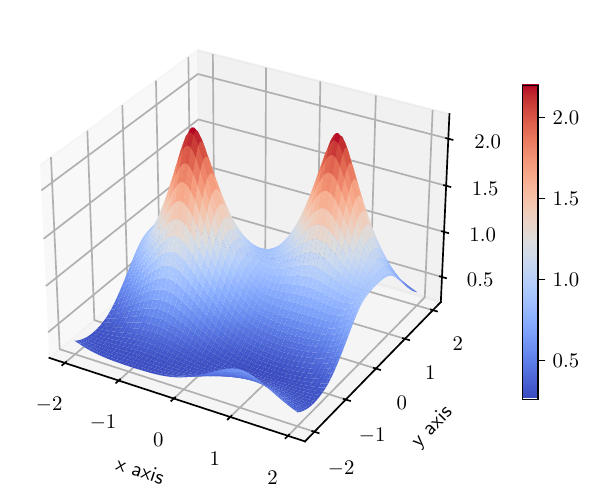}
    \caption{Left: scatter plot the empirical teacher distribution $\Bar{\mu}_\gamma$ for $\gamma = 100$. Right: corresponding target signal.}
    \label{fig:teacher_signal_2d}
\end{figure}

\paragraph{Experimental setting}

We test the performances of the VarPro algorithm (\cref{eq:gradient_descent_SHL}) for the training of a RBF neural network (\cref{eq:convolutional_model}) of varying width $M \in \{ 32, 128, 512, 1024 \}$.
We use either the ``biased'' quadratic regularization $f_b : t \mapsto \frac{1}{2} t^2$ or the ``unbiased'' quadratic regularization $f_u : t \mapsto \frac{1}{2}|t-1|^2$ and we consider varying regularization strength $\lambda \in \{10^{-1}, 10^{-2}, 10^{-3} \}$.
We consider different teacher distributions $\Bar{\mu}_\gamma$ by changing the parameter $\gamma \in \{100, +\infty \}$.
As in~\cref{subsec:numerics_1d}, we consider a number of data samples $N = 4096 \gg M$, a teacher of width $\Bar{M} = 4096 \gg M$ (s.t. the approximation $\Bar{\mu}_\gamma \simeq \mu_\gamma$ holds) and a stepsize $\tau = 2^{-10}$.

\paragraph{Student of varying width}

As before, we first investigate the role played by the width $M$ of the student in the training dynamic.
For this purpose, we fix the regularization strength to $\lambda = 10^{-3}$ and consider training RBF neural networks (\cref{eq:convolutional_model}) of varying width $M \in \{32, 128, 512, 1024 \}$ with the teacher distribution $\Bar{\mu}_\gamma$, $\gamma = 100$.

\Cref{fig:convolution_width_risk} reports evolution of the biased and unbiased reduce risk during training and~\cref{fig:convolution_width_distance} reports evolution of the distance to the teacher distribution $\Bar{\mu}_\gamma$.
As for our $1$-dimensional experiments, one can observe that the VarPro algorithm converges to lower values of the unbiased reduced risk when the width of the student increases.
In turn, at convergence, this corresponds to learned feature distributions that approximate the teacher distribution with different levels of discretization.
On the contrary, using the biased regularization $f_b : t \mapsto \frac{1}{2} t^2$ introduces a bias in the learned distribution.

\begin{figure}
    \centering
    \includegraphics[scale=1,trim={20 15 0 0}]{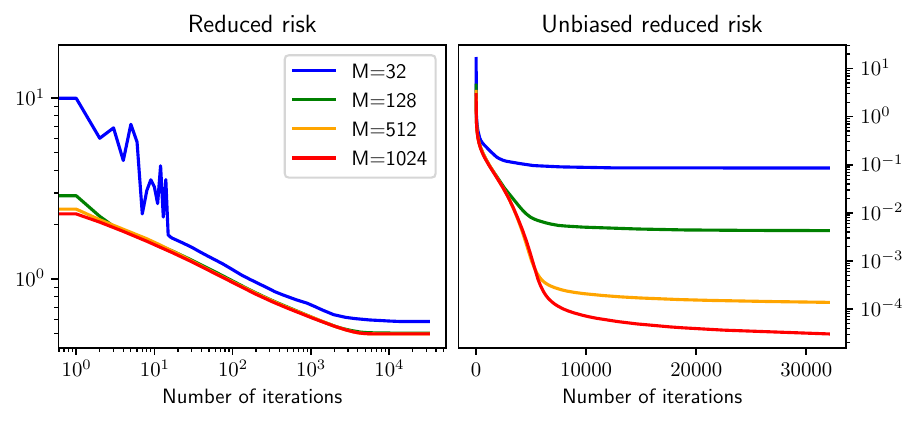}
    \caption{Evolution of the reduced risk along iterations of gradient descent for a RBF neural network (\cref{eq:convolutional_model}) of width $M \in \{32, 128, 512, 1024 \}$.
    The regularization strength is $\lambda = 10^{-3}$ and the regularization function is either $f_b : t \mapsto \frac{1}{2} t^2$ (left) or $f_u : t \mapsto \frac{1}{2} |t-1|^2$ (right). Plots are averages over $6$ independent runs.}
    \label{fig:convolution_width_risk}
\end{figure}

\begin{figure}
    \centering
    \includegraphics[scale=1,trim={10 15 0 0}]{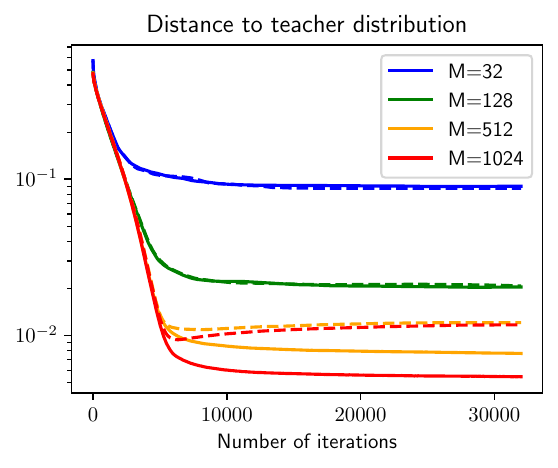}
    \caption{
    Evolution of the MMD distance to the teacher distribution $\Bar{\mu}_\gamma$ along gradient descent over the reduced risk for a RBF neural network (\cref{eq:convolutional_model}) of width $M \in \{32, 128, 512, 1024 \}$.
    The regularization strength is $\lambda = 10^{-3}$ and the regularization function is either $f_b : t \mapsto \frac{1}{2} t^2$ (dashed) or $f_u : t \mapsto \frac{1}{2} |t-1|^2$ (plain).
    Plots are averages over $6$ independent runs.}
    \label{fig:convolution_width_distance}
\end{figure}

\paragraph{Role of the regularization strength $\lambda$}

We now investigate the role of the regularization strength $\lambda > 0$.
We thus consider training RBF neural networks (\cref{eq:convolutional_model}) of fixed width $M = 1024$ with the teacher distribution $\Bar{\mu}_\gamma$, $\gamma = 100$, and we perform gradient descent over the reduced risk (\cref{eq:gradient_descent_SHL}) with varying regularization $\lambda \in \{10^{-1}, 10^{-2}, 10^{-3} \}$.
Note that here, compared to our $1$-dimensional, the case of regularization lower than $\lambda = 10^{-3}$ is numerically impracticable, at least with our choice of stepsize $\tau = 2^{-10}$.

Evolution of the distance to the teacher distribution $\Bar{\mu}_\gamma$ along training is reported in~\cref{fig:convolution_lmbda_distance}.
As in our $1$-dimensional experiments, one can observe that the convergence speed gets slower when the regularization strength increases.
There is also a significant change of behavior between $\lambda = 10^{-1}$ and $\lambda \in \{ 10^{-2}, 10^{-3}\}$.
In the former case the convergence seems to exhibit an algebraic rate, supporting the conclusions of~\cref{thm:algebraic_convergence}, while in the latter the convergence rate is linear, indicating a behavior closer to the ultra-fast diffusion limit (\cref{thm:diffusion_convergence}).
As for the $1$-dimensional case, this can be explained by the fact that $\lambda = 10^{-1}$ is the order of magnitude of the most significant eigenvalues of the tangent kernel $K_\mu$ in~\cref{eq:kernel_learning}.
Thus, for higher values of $\lambda$, one enters in a high regularization regime where the reduced risk receive more influence from the MMD distance term than from the $f$-divergence term in~\cref{eq:L_mmd}.
One also observes that the bias introduced in the case of the regularization $f_b : t \mapsto \frac{1}{2} t^2$ vanishes with the regularization strength $\lambda$, supporting the conclusions of our~\cref{prop:convergence_minimizers}.

\begin{figure}
    \centering
    \includegraphics[scale=1,trim={20 10 0 0}]{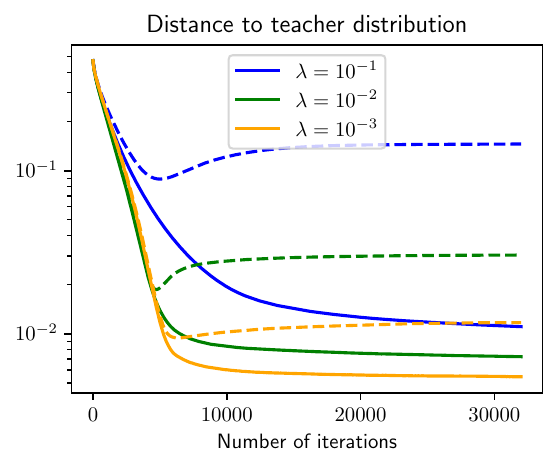}
    \caption{Evolution of the MMD distance to the teacher distribution $\Bar{\mu}_\gamma$ along gradient descent over the reduced risk (\cref{eq:gradient_descent_SHL}) for a RBF neural network (\cref{eq:convolutional_model}) of width $M=1024$ with regularization $\lambda \in \{10^{-1}, 10^{-2}, 10^{-3}, 10^{-4} \}$.
    The regularization function is either $f_b : t \mapsto \frac{1}{2} t^2$ (dashed) or $f_u : t \mapsto \frac{1}{2} |t-1|^2$ (plain).
    Plots are averages over $6$ independent runs.}
    \label{fig:convolution_lmbda_distance}
\end{figure}

\paragraph{Role of the shape of the teacher distribution}

Finally, we investigate the impact of the shape of the teacher distribution on the VarPro dynamic.
We are particularly interested in the limit $\gamma = +\infty$ in which the teacher distribution is given by $\Bar{\mu}_\gamma = \frac{1}{2} \delta_{\om^*_1} + \frac{1}{2} \delta_{\om^*_2}$.
While such setting is not covered by our theory (in particular the ultra-fast diffusion equation is not necessarily well-posed), it is of interest to see if the VarPro algorithm is able to recover sparse feature representations.

In this context, we consider teacher distributions $\Bar{\mu}_\gamma$ for $\gamma \in \{100, + \infty \}$ and train RBF neural networks (\cref{eq:convolutional_model}) of fixed width $M=1024$ with gradient descent over the reduced risk (\cref{eq:gradient_descent_SHL}) with the unbiased regularization $f_u : t \mapsto \frac{1}{2} |t-1|^2$ and $\lambda = 10^{-3}$.
Plots of the evolution of the reduced risk and of the MMD distance to the teacher distribution are reported in~\cref{fig:convolution_gamma}.
As in our $1$-dimensional experiments, one can see that the convergence speed of VarPro deteriorates when $\gamma$ increases, both in terms of convergence of the risk and in terms of convergence of the learned feature distribution to the teacher's.
In case of a sparse teacher distribution ($\gamma = +\infty$), convergence towards the teacher seems to not necessarily be governed by a linear rate.
Indeed, as the teacher distribution is not absolutely continuous, one could expect the comparison with ultra-fast diffusion dynamics to no longer hold.
However, \cref{thm:algebraic_convergence} could still apply, leading to an algebraic convergence rate.

\begin{figure}
    \centering
    \centerline{
    \includegraphics[scale=1,trim={10 15 0 0}]{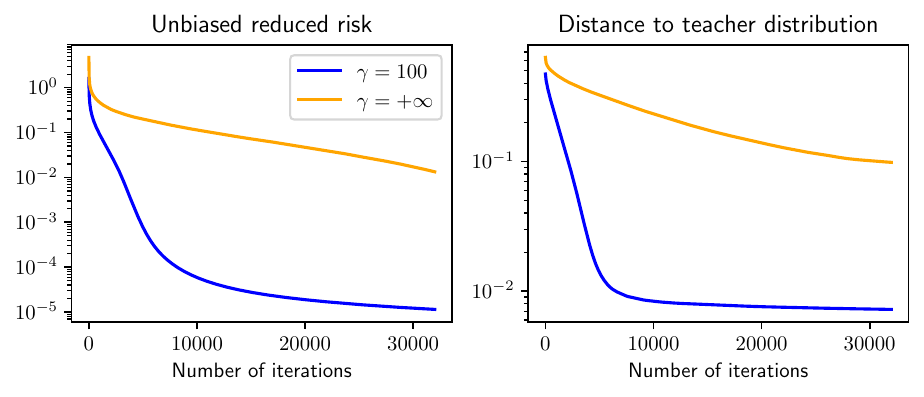}
    }
    \caption{Gradient descent over the reduced risk (\cref{eq:gradient_descent_SHL}) for a RBF neural network (\cref{eq:convolutional_model}) of width $M=1024$ with unbiased regularization $f_u : t \mapsto \frac{1}{2} |t-1|^2$, $\lambda = 10^{-3}$ and teacher distributions $\Bar{\mu}_\gamma$ for $\gamma \in \{100, +\infty \}$.
    Left: Evolution of the unbiased reduced risk.
    Right: Evolution of the MMD distance to the teacher distribution $\Bar{\mu}_\gamma$.
    Plots are averages over $6$ independent runs.}
    \label{fig:convolution_gamma}
\end{figure}

\end{document}